\pdfoutput=1
\documentclass[11pt, letterpaper]{article}
\usepackage{fullpage}
\usepackage{blindtext}
\usepackage{hyperref}
\hypersetup{
	colorlinks=true,
	linkcolor=blue!70!black,
	citecolor=blue!70!black,
	urlcolor=blue!70!black
}

\usepackage[english]{babel}
\usepackage[T1]{fontenc}
\usepackage{tablefootnote}
\usepackage[table]{xcolor}
\usepackage{tabularx}
\newcolumntype{Y}{>{\centering\arraybackslash}X}
\usepackage{amsmath}
\usepackage{mathtools}
\usepackage{todonotes}
\usepackage{amssymb}
\usepackage{amsthm}
\usepackage{booktabs}
\usepackage{float}
\usepackage{accents}
 
\usepackage{bbm}
\usepackage{bm}
\usepackage{graphicx}
\graphicspath{{./figs/}}

\usepackage{cleveref}
\usepackage{thm-restate}
\usepackage[lined,boxed,ruled,norelsize,linesnumbered]{algorithm2e}

\newtheorem{theorem}{Theorem}
\newtheorem{lemma}{Lemma}
\newtheorem{fact}{Fact}
\newtheorem{definition}{Definition}
\newtheorem{corollary}{Corollary}
\newtheorem{proposition}{Proposition}

\newtheorem{assumption}{Assumption}
\newtheorem{remark}{Remark}
\newtheorem{model}{Model}

\renewcommand{\epsilon}{\varepsilon}
\newcommand{\defeq}{\coloneqq}
\newcommand{\norm}[1]{\left\lVert#1\right\rVert}
\newcommand{\norms}[1]{\lVert#1\rVert}
\newcommand{\normop}[1]{\left\lVert#1\right\rVert_{\textup{op}}}

\newcommand{\normf}[1]{\left\lVert#1\right\rVert_{\textup{F}}}

\newcommand{\normsop}[1]{\lVert#1\rVert_{\textup{op}}}
\newcommand{\normsf}[1]{\lVert#1\rVert_{\textup{F}}}

\newcommand{\inprod}[2]{\left\langle#1, #2\right\rangle}

\newcommand{\eps}{\epsilon}
\newcommand{\lam}{\lambda}
\newcommand{\0}{\boldsymbol{0}} 
 
\newcommand{\R}{\mathbb{R}}

\newcommand{\N}{\mathbb{N}}

\newcommand{\bas}[1]{\begin{align*}#1\end{align*}}

\newcommand{\half}{\frac{1}{2}}

\newcommand{\1}{\boldsymbol{1}}
\newcommand{\E}{\mathbb{E}}

\newcommand{\Nor}{\mathcal{N}}

\newcommand{\Tr}{\textup{Tr}}

\newcommand{\simiid}{\stackrel{\mathsf{iid}}{\sim}}

\newcommand{\ma}{\mathbf{A}}
\newcommand{\hmsig}{\widehat{\msig}}

\newcommand{\me}{\mathbf{E}}
\newcommand{\mh}{\mathbf{H}}

\newcommand{\id}{\mathbf{I}}
\newcommand{\bb}[1]{\left(#1\right)}

\newcommand{\dd}{\textup{d}}

\definecolor{burntorange}{rgb}{0.8, 0.33, 0.0}

\newcommand{\tO}{\widetilde{O}}

\newcommand{\rank}{\textup{rank}}

\newcommand{\Par}[1]{\left(#1\right)}
\newcommand{\Brack}[1]{\left[#1\right]}
\newcommand{\Brace}[1]{\left\{#1\right\}}

\newcommand{\alg}{\mathcal{A}}

\newcommand{\mb}{\mathbf{B}}
\newcommand{\mg}{\mathbf{G}}

\newcommand{\my}{\mathbf{Y}}
\newcommand{\mv}{\mathbf{V}}

\newcommand{\msig}{\boldsymbol{\Sigma}}

\newcommand{\mlam}{\boldsymbol{\Lambda}}

\newcommand{\ms}{\mathbf{S}}
\newcommand{\mq}{\mathbf{Q}}

\newcommand{\mx}{\mathbf{X}}

\newcommand{\mmu}{\mathbf{U}}

\newcommand{\mzero}{\mathbf{0}}

\newcommand{\Sym}{\mathbb{S}}

\newcommand{\PSD}{\Sym_{\succeq \mzero}}

\newcommand{\gam}{\gamma}

\newcommand{\codeStyle}[1]{{\bfseries #1} }
\newcommand{\codeInput}{\codeStyle{Input:}}	
	
\newcommand{\codeReturn}{\codeStyle{Return:}}	
	
\newcommand{\md}{\mathbf{D}}

\newcommand{\inner}{\inprod}

\newcommand{\poly}{\textup{poly}}

\newcommand{\ind}{\mathbb{I}}

\newcommand{\polylog}{\textup{polylog}}
\newcommand{\Oja}{\mathsf{Oja}}

\newcommand{\calE}{\mathcal{E}}

\newcommand{\TV}{\mathrm{TV}}

\newcommand{\mc}{\mathbf{C}}

\newcommand{\mi}{\mathbf{I}}
\newcommand{\mk}{\mathbf{K}}

\newcommand{\mpp}{\mathbf{P}}

\newcommand{\muu}{\mathbf{U}}
\newcommand{\mz}{\mathbf{Z}}

\newcommand{\vc}{\mathbf{c}}

\newcommand{\vg}{\mathbf{g}}

\newcommand{\vm}{\mathbf{m}}

\newcommand{\vs}{\mathbf{s}}
\newcommand{\vu}{\mathbf{u}}
\newcommand{\vv}{\mathbf{v}}
\newcommand{\vw}{\mathbf{w}}
\newcommand{\vx}{\mathbf{x}}
\newcommand{\vy}{\mathbf{y}}
\newcommand{\vz}{\mathbf{z}}

\newcommand{\calA}{\mathcal{A}}

\newcommand{\calD}{\mathcal{D}}

\newcommand{\calG}{\mathcal{G}}

\newcommand{\calN}{\mathcal{N}}

\newcommand{\op}{\textup{op}}

\newcommand{\Unif}{\mathsf{Unif}}
\newcommand{\KL}{\mathrm{KL}}

\newcommand{\vperp}{\mv_{\perp}}

\newcommand{\AG}{\mathsf{AnalyzeGauss}}

\newcommand{\msign}{\textup{m}_{\textup{sign}}}

\title{Gap-Free Streaming PCA Beyond Rank-One Updates: \\ Near-Optimal Rates and Applications to Differential Privacy}
\date{}
\author{
Anming Gu\thanks{University of Texas at Austin, \texttt{anminggu@cs.utexas.edu}}
\and
Syamantak Kumar\thanks{University of Texas at Austin, \texttt{syamantak@utexas.edu}}
\and
Kevin Tian\thanks{University of Texas at Austin, \texttt{kjtian@cs.utexas.edu}}
\and
Chutong Yang\thanks{University of Texas at Austin, \texttt{cyang98@utexas.edu}}
}
\begin{document}

\maketitle

\begin{abstract}
Streaming principal component analysis (PCA) seeks to recover a leading spectral subspace in a single pass over a data stream. We give a new analysis of the ubiquitous \emph{Oja's algorithm} \cite{Oja82} for the most general, \emph{gap-free} variant of this problem, where no eigengap assumptions are made on the underlying mean matrix, complemented by a nearly-matching lower bound. Prior works achieving near-optimal rates for streaming PCA either required gap assumptions \cite{jain2016streaming, HuangNW21}, or were limited to rank-one updates \cite{allen2017first, Liang23}. Our proof only uses a second moment bound on the individual stochastic updates, bypassing the almost sure bounds needed by prior near-optimal analyses, and the analogous offline matrix Bernstein bound. We also extend our result to a Rayleigh quotient notion of approximate PCA, addressing an open question of \cite{jain2016streaming}. As our main application, we give gap-free differentially private PCA guarantees for sub-Gaussian data, settling Conjecture 1.1 of \cite{Brown26} up to logarithmic factors.
\end{abstract}

\thispagestyle{empty}
\newpage

\begingroup
\setlength{\parskip}{0pt}
\tableofcontents
\endgroup

\thispagestyle{empty}
\newpage

\pagenumbering{arabic}
\section{Introduction}

Let $\ma_1,\ma_2,\ldots\in\R^{d\times d}$ be i.i.d.\ stochastic matrices with common mean $\msig\succeq\mzero_{d \times d}$. In \emph{streaming principal component analysis} (PCA), the goal is to recover a unit vector close to the largest eigenvector of $\msig$, while processing each update only once, ideally with small space overhead. 

The classical Oja's algorithm~\cite{Oja82} is perhaps the simplest method for this task: starting from a (randomly initialized) unit vector $\vw_0$, it repeatedly iterates
\[
    \vw_t
    \gets
    \frac{(\id_d+\eta_t\ma_t)\vw_{t-1}}
         {\norm{(\id_d+\eta_t\ma_t)\vw_{t-1}}_2}.
\]
This update can be performed using $O(d)$ auxiliary space, i.e., without storing a matrix explicitly. Oja's algorithm is extremely well-studied \cite{BalsubramaniDF13, Shamir16, jain2016streaming, allen2017first, HuangNW21, Liang23}, and is known to achieve near-optimal rates of convergence in various settings, under standard regularity assumptions on the sequence $\{\ma_t\}_{t \ge 1}$, such as a second moment bound and almost sure bound (Model~\ref{model:oja_prob1}). Notably, these are the same assumptions required by the \emph{matrix Bernstein} concentration inequality (cf.\ Proposition~\ref{prop:gapfree_offline}), which solves the same stochastic eigenvector estimation problem near-optimally, albeit in an offline setting and using $O(d^2)$ space.

We study the most general formulation of streaming PCA, where no gap assumptions are placed on $\msig$'s spectrum. 
PCA objectives become ill-conditioned when the leading eigenvalues are equal or close to equal, e.g., if $\lam_1(\msig) = \lam_2(\msig)$, then recovering the leading eigenvector is not even well-defined. A common alternative in such \emph{gap-free} settings, popularized by \cite{GarberH15, ZhuL16, allen2017first}, is to ask for a unit vector with little mass on eigenvectors whose eigenvalues are below $(1-\gam)\lambda_1(\msig)$, for a parameter $\gamma \in (0, 1)$. We formalize this correlation PCA (cPCA) objective in Definition~\ref{def:cpca}.

Perhaps surprisingly, all prior near-optimal rates for streaming PCA, via Oja's algorithm or otherwise, either required an eigengap assumption on $\msig$ \cite{jain2016streaming, HuangNW21}, or were limited to the setting where every $\ma_t$ is rank-one \cite{allen2017first, Liang23}. This motivates our work's central question.
\begin{gather*}\emph{Does Oja's algorithm achieve near-optimal convergence for streaming PCA,} \\
\emph{without eigengap assumptions on } \msig \emph{ or rank restrictions on the } \{\ma_t\}_{t \ge 1}?\end{gather*}

\subsection{Our results}
Our main result (Theorem~\ref{thm:gapfree_oja}) answers this question affirmatively. In fact, its convergence guarantee holds under qualitatively weaker regularity assumptions (Model~\ref{model:oja_special}) than used by prior work. Assuming a bound on the standard matrix variance parameter,
\[V \defeq \max\left\{
        \normop{\E[(\ma_t-\msig)(\ma_t-\msig)^\top]},
        \normop{\E[(\ma_t-\msig)^\top(\ma_t-\msig)]}
    \right\},\]
Theorem~\ref{thm:gapfree_oja} shows that with constant probability, Oja's algorithm returns a $(\gamma, \Delta)$-cPCA (i.e., has squared correlation at most $\Delta$ with the eigenspace below $(1 - \gamma)\lam_1(\msig)$), using\footnote{We use the notation $\tO$ to suppress polylogarithmic factors in problem parameters in informal rate summaries. All of our formal theorem statements specify all parameter dependences explicitly.}
\[\tO\Par{\frac{V}{\lam_1^2 \gamma^2\Delta} + \frac 1 \gamma}\]
online samples $\ma_t$. The first term in the above rate is complemented with a nearly-matching lower bound in Theorem~\ref{thm:pca_lower}, and the second term is a consequence of the standard convergence rate of the (offline) power method, in the special deterministic setting where all $\ma_t = \msig$. 

Interestingly, Theorem~\ref{thm:gapfree_oja} holds under weaker requirements than earlier convergence analyses of Oja's algorithm. In particular, it only posits a matrix variance bound $V$ (Model~\ref{model:oja_special}), and circumvents the almost sure bound (Model~\ref{model:oja_prob1}) typically used by prior works on streaming PCA, as well as the matrix Bernstein inequality. As a tradeoff, it only offers a constant success probability (more generally, Theorem~\ref{thm:gapfree_oja}'s sample complexity scales inverse-polynomially in the failure probability $\zeta$). In Theorem~\ref{thm:high_probability_oja_absolute}, we give an alternative result that leverages geometric aggregation to achieve a $\polylog(\frac 1 \zeta)$ sample complexity overhead. This result analyzes an extension of Oja's algorithm to block matrices (Algorithm~\ref{alg:boosted_sketch_oja}), and requires $d \cdot \polylog(\frac 1 \zeta)$ auxiliary space.

In Section~\ref{sec:epca}, we also consider the energy objective $\vw^\top\msig\vw\ge(1-\alpha)\lam_1$ (ePCA, Definition~\ref{def:epca}). While a black-box cPCA-to-ePCA conversion (Lemma 8, \cite{JambulapatiKLPPT24}) exists, its combination with Theorem~\ref{thm:gapfree_oja} leads to a suboptimal sample complexity by a factor of $\frac1\alpha$. Instead, we give a multiscale reduction-based analysis in Theorem~\ref{thm:gapfree_epca} that shows Oja's algorithm returns an $\alpha$-ePCA using \[
\widetilde{O}\Par{\frac{V}{\lam_1^2\alpha^2}+\frac1\alpha}
\]
samples. Here also, the first term is complemented with a nearly matching lower bound (Corollary~\ref{cor:epca_lower_bound}). This result addresses an open question posed by Section 6 of \cite{jain2016streaming}.

Finally, as our main application, we consider the setting of \emph{differentially private} PCA, i.e., where the goal is to solve PCA subject to $(\eps, \delta)$-DP (Definition~\ref{def:dp}). A prior work by \cite{LiuKJO22} achieved a near-optimal convergence rate for this problem under an eigengap. We give an analogous sample bound in the gap-free setting: for publicly known $\nu,\lam_1$, Theorem~\ref{thm:dp_pca_utility} returns an $(\eps,\delta)$-DP estimator that is a $(\gamma,\Delta)$-cPCA with high probability, using
\[\tO\Par{\frac{d\nu^4}{\gamma^2\lam_1^2\Delta}
+\frac{d\nu^2}{\eps\gamma\lam_1\sqrt\Delta}}\]
samples. Our result is stated directly under sub-Gaussianity (Definition~\ref{definition:hypercontractivity}). For Gaussian data, $\nu^2=\lam_1$, and the polynomial dependence matches Corollary 5.2 of \cite{LiuKJO22}, with spectral resolution $\gamma$ replacing the relative eigengap. We obtain a slightly better $\gamma$ dependence than \cite{LiuKJO22} by avoiding minibatches, instead taking full passes to obtain an improved sensitivity tradeoff. Its analysis uses R\'enyi differential privacy to compose the Gaussian queries and control adaptive clipping. Further, applying the ePCA analysis to the same algorithm gives, under the same $\nu$-sub-Gaussian model, an $\alpha$-ePCA with sample complexity (Theorem~\ref{thm:private_epca}),  
\[
\tO\Par{
\frac{d\nu^4}{\lam_1^2\alpha^2}
+
\frac{d\nu^2}{\eps\lam_1\alpha}
}.
\]
In particular, for Gaussian data with a publicly known $\lam_1$, a setting where $\nu^2=O(\lam_1)$, our new sample complexity bound above matches the rate conjectured by \cite{Brown26} up to logarithmic factors. The main outstanding questions left by Theorem~\ref{thm:private_epca} are to remove the remaining polylogarithmic overhead, and to privately estimate $\lam_1$ from samples.

\begin{table}[!htb]
    \centering
    \renewcommand{\arraystretch}{1.4}
    \setlength{\tabcolsep}{8pt}

    \begin{tabular*}{0.96\textwidth}{
        @{\extracolsep{\fill}}
        c c c c
        @{}
    }
        \toprule
        Work
        & General updates
        & Gap-free
        & Near-optimal rate \\
        \midrule

        \cite{Shamir16}, Corollary~1
        & $\checkmark$
        & $\checkmark$
        & -- \\[2pt]

        \cite{jain2016streaming}, Theorem~3
        & $\checkmark$
        & --
        & $\checkmark$ \\[2pt]

        \cite{allen2017first}, Theorem~2
        & --
        & $\checkmark$
        & $\checkmark$ \\[2pt]

        \cite{HuangNW21}, Theorem~3
        & $\checkmark$
        & --
        & $\checkmark$ \\[2pt]

        \cite{Liang23}, Theorem~3.3
        & --
        & $\checkmark$
        & $\checkmark$ \\[2pt]

        \textbf{This work}, Theorem~\ref{thm:gapfree_oja}
        & $\checkmark$
        & $\checkmark$
        & $\checkmark$ \\

        \bottomrule
    \end{tabular*}

    \caption{
        Representative streaming PCA guarantees.
        ``Near-optimal'' means the rate matches the lower bound (Theorem~\ref{thm:pca_lower}) up to logarithmic factors and low-order terms. ``General updates'' means no rank restrictions are placed, and only a statistical assumption (e.g., Models~\ref{model:oja_special} or~\ref{model:oja_prob1}) is used.
    }
    \label{tab:streaming_pca_comparison}
\end{table}

\subsection{Our techniques}
\label{ssec:intro_techniques}

Our main result, Theorem~\ref{thm:gapfree_oja}, follows from a new analysis of Oja's algorithm that leads to arguably a simpler convergence proof than in prior works, e.g., \cite{jain2016streaming}. We begin by overviewing this new strategy, and provide an overview of our auxiliary results (Theorems~\ref{thm:pca_lower},~\ref{thm:high_probability_oja_absolute},~\ref{thm:gapfree_epca},~\ref{thm:dp_pca_utility}, and~\ref{thm:private_epca}).

\textbf{Expected trace as a potential.}
Our analysis starts from the operator viewpoint of \cite{jain2016streaming}.
Writing the unnormalized Oja iterate as being induced by the random operator
\[
    \mb_t
    =
    (\id_d+\eta_t\ma_t)\cdots(\id_d+\eta_1\ma_1),
\]
their analysis controls the ratio between the energy of $\mb_t$ in the
orthogonal complement of the leading eigenvector $\vv_1$ and the energy
along $\vv_1$ (reproduced as Lemma~\ref{lem:gapfree_one_step_power}). 
At the population level, these two quantities evolve at rates governed by
$\lambda_2$ and $\lambda_1$, respectively, so their separation is driven
by the eigengap $\lambda_1-\lambda_2$.  This is precisely what becomes
problematic for a gap-free objective.

Our proof departs from this strategy, and 
instead compares $\mb_t$ with its population
counterpart
\[
    \mc_t
    =
    (\id_d+\eta_t\msig)\cdots(\id_d+\eta_1\msig)
    =\E[\mb_t].
\]
Let $\mpp$ denote the orthogonal projector onto eigenvectors with eigenvalues below
$(1-\gam)\lambda_1$. Our starting point is the following consequence of the triangle inequality,
\begin{equation}\label{eq:triangle_ineq_intro}
    \frac{\normf{\mpp\mb_t}}{\normf{\mb_t}}
    \leq
    \frac{\normf{\mpp\mc_t}}{\normf{\mc_t}}
    +
    \normf{
        \frac{\mb_t}{\normf{\mb_t}}
        -
        \frac{\mc_t}{\normf{\mc_t}}
    } \leq \frac{\normf{\mpp\mc_t}}{\normf{\mc_t}} + \frac{2\normf{\mb_t - \mc_t}}{\normf{\mc_t}},
\end{equation}
where the last inequality holds by a derivation in \eqref{eq:crazy_triangle_ineq}. The left-hand side above is precisely the quantity that Lemma~\ref{lem:gapfree_one_step_power} seeks to control in order to yield cPCA guarantees.

This inequality splits our bound into two terms: a deterministic center
(depending only on $\mc_t$), and the relative deviation of a random $\mb_t$. The first term is simple to control using analyses of the standard power method. To bound the second term, since $\E[\mb_t]=\mc_t$, we have $\E\normf{\mb_t-\mc_t}^2
    =
    \E[\Tr(\mb_t\mb_t^\top)]-\Tr(\mc_t\mc_t^\top)$, suggesting the use of $\E[\Tr(\mb_t\mb_t^\top)]$ as our potential.

The heart of our new analysis is Lemma~\ref{lem:expected_total_energy_vary_step}, which precisely achieves the required control of the expected trace, assuming only a matrix variance bound. Concretely, we show that under Model~\ref{model:oja_special}, Oja's algorithm with step sizes $\{\eta_s\}_{s \ge 1}$ satisfies
\[
    \E\Brack{\Tr(\mb_t\mb_t^\top)}
    \leq
    \exp\Par{V\sum_{s\in[t]}\eta_s^2}
    \Tr(\mc_t\mc_t^\top).
\]
The proof of Lemma~\ref{lem:expected_total_energy_vary_step} inductively shows a majorization relationship between the spectra of $\E[\mb_t\mb_t^\top]$ and a scaled population counterpart $\mc_t\mc_t^\top$, by using the von Neumann trace inequality and our matrix variance assumption to bound the effect of each increment. 

\textbf{Lower bound.} Finally, we complement Theorem~\ref{thm:gapfree_oja} with a lower bound in Theorem~\ref{thm:pca_lower}, which obtains matching dependences in all parameters up to polylogarithmic factors. Qualitatively similar lower bounds to Theorem~\ref{thm:gapfree_oja} (e.g., Theorem 32, \cite{GarberHJKMNS16}, and Theorem 6, \cite{allen2017first}) were already known, and our main contribution is to slightly strengthen the construction to hold for the entire range of $V$ and $\lam_1$. In particular, our proof builds upon the lower bound construction of \cite{allen2017first}.

\textbf{Gap-free probability boosting.} A standard strategy for boosting the success probability of PCA under an eigengap is to apply geometric aggregation (e.g., Lemma 3.10, \cite{kumar2024oja}). Unfortunately, a direct output aggregation fails in a gap-free setting: when the leading
eigenvalue has multiplicity, even two exact solutions may be orthogonal. Nonetheless, our proof strategy for Theorem~\ref{thm:gapfree_oja} proceeds by arguing constant probability closeness of each $\mb_t$ to the population matrix $\mc_t$, making it amenable to an intermediate geometric aggregation step. Our Algorithm~\ref{alg:boosted_sketch_oja} applies independent Oja products $\mb_t^{(r)}$ across $R = O(\log (\frac 1 \zeta))$ disjoint streams and
initializes each stream with the same Gaussian matrix $\mg$, using a slightly oversampled dimension (i.e., with $O(\log (\frac 1 \zeta))$ random vectors rather than a single vector). Together with standard results on the concentration of Gaussian traces, we show that we can aggregate these disjoint streams to a center compatible with the strategy in \eqref{eq:triangle_ineq_intro}, at a relatively mild $\polylog(\frac 1 \zeta)$ cost to the sample complexity and space overhead.

\textbf{Energy PCA.} We next consider an energy PCA guarantee for Oja's algorithm. A direct cPCA-to-ePCA reduction (e.g. Lemma 8, \cite{JambulapatiKLPPT24}) results in a suboptimal sample complexity scaling as $\frac1{\alpha^3}$ for an $\alpha$-ePCA guarantee. In Proposition~\ref{prop:multiscale_cpca_epca}, we consider a multiscale cPCA, with simultaneous guarantees on the projections to eigenvalues below a specified threshold $(1-u)\lambda_1$ for all choices of $u\in(0,1)$, as opposed to just $u = \gamma$. By integrating over $u$, we are able to obtain a sample complexity scaling as in $\frac1{\alpha^2}$ in Theorem~\ref{thm:gapfree_epca}, which we also show is tight in Corollary~\ref{cor:epca_lower_bound}.

\textbf{Application to DP PCA.} Private PCA is a natural application of Theorem~\ref{thm:gapfree_oja}. The DP-PCA method of \cite{LiuKJO22} forms minibatch covariance estimates and adds Gaussian perturbations, so its effective Oja updates are general matrix-valued rather than rank one. Their utility analysis invokes the gapped Oja guarantee of \cite{jain2016streaming}, and consequently depends on $\lam_1-\lam_2$.

Compared to \cite{LiuKJO22}, our analysis also yields an improved dependence on the gap parameter $\gamma$, set to $1 - \frac{\lam_2}{\lam_1}$ in their setting. We reuse the full dataset at every Oja step, rather than splitting it into fresh minibatches as in \cite{LiuKJO22}. This choice improves the sensitivity of each update by a factor of $b$, where $b$ is the number of mini-batches, while leading to $b$ passes over each sample. By paying for these passes using advanced composition (or R\'enyi DP \cite{Mironov17}, to give slightly tighter guarantees), this only incurs an $\approx \sqrt{b}$ overhead, the source of our savings.

Interestingly, our analysis directly uses the algorithm's privacy to argue about its correctness. This need arises due to a dependency between a currently estimated subspace and the data, which would affect clipping thresholds. We instead use a near-independence guarantee implied by DP to save a $\poly(d)$ factor in the threshold magnitude, which directly reflects in our sample complexity.

\subsection{Related work}
\label{ssec:intro_related}

\textbf{Streaming and gap-free PCA.}
Finite-sample analyses of streaming PCA include incremental PCA~\cite{BalsubramaniDF13}, memory-optimal block methods~\cite{MitliagkasCJ13}, and stochastic power or matrix-factorization methods~\cite{DeSaOR15}. Other variants address Markovian data~\cite{KumarS23}, sparse leading eigenvectors~\cite{kumar2024oja}, entrywise uncertainty quantification~\cite{KumarPS25}, and low-precision computation~\cite{DasguptaKPS25}. Most closely related to our work, \cite{jain2016streaming} obtained the first near-optimal gapped rates for general, possibly nonsymmetric matrix updates, while \cite{LiWLZ18} give near-optimal gapped guarantees for sub-Gaussian PCA. Relatedly, \cite{Shamir16} gives an early eigengap-free guarantee permitting general PSD stochastic matrices, but with a slower objective rate and low success probability from random initialization. Later, \cite{allen2017first} established an efficient global near-optimal gap-free analysis for rank-one streaming $k$-PCA, and \cite{Liang23} obtained sharp gap-free rates for sub-Gaussian data, again requiring rank-one updates. In another direction, \cite{HuangNW21} extends nearly offline-optimal streaming-PCA guarantees to arbitrary-rank updates under an eigengap. 

We note that this work focuses on the $1$-PCA problem, i.e., approximating the top eigenvector of a population average $\msig$ from samples. We leave open the analogous question for $k$-PCA for $k > 1$, where a similar situation holds in the current literature: \cite{allen2017first} gave a gap-free result for $k$-PCA under rank-one updates, and \cite{HuangNW21} removed the rank restriction, but used an eigengap.

{\textbf{Noisy power methods.}
Under Model~\ref{model:oja_special} and an eigengap assumption $\lam_2(\msig) \le (1 - \gamma)\lam_1(\msig)$, \cite{HardtP14} gives a suboptimal sample complexity scaling as $\widetilde O(\frac{V}{\lam_1^2\gamma^3\Delta} + \frac 1 \gamma)$ for minibatched stochastic matrix-vector products, even before accounting for their additional projected-noise condition (see the statement of their Corollary 1.1). This incurs an extra factor of $\frac 1 \gamma$ in the leading term compared with Theorem~\ref{thm:gapfree_oja}. Later, \cite{BalcanDWY16} replaces a dependence on $\lambda_1-\lambda_2$ by $\lambda_1-\lambda_{q+1}$, but requires maintaining at least $q$ directions. Notably, both results hold only in the gapped setting.
}

{\textbf{Differentially private PCA.}
For arbitrary row-bounded datasets, early approaches sample a direction using the exponential mechanism~\cite{ChaudhuriSS13}, while Analyze Gauss~\cite{DworkTTZ14} adds a symmetric Gaussian matrix to the empirical covariance and then extracts its leading eigenspace. 
Notably, when applying such results to i.i.d.\ sub-Gaussian data, the resulting sample complexity is at least $d^{1.5}$ up to logarithmic factors. 
Specializing to i.i.d.\ statistical models, \cite{LiuKO22} use robust one-dimensional scores within a propose-test-release framework to obtain nearly information-theoretically optimal private PCA under sub-Gaussian and hypercontractive assumptions, although the resulting estimator is not computationally efficient. The black-box reduction of \cite{HopkinsKMN23} converts suitable robust estimators into private mean and covariance estimators, from which PCA can be obtained by post-processing when covariance error controls the desired subspace. Closest to our algorithm, \cite{LiuKJO22} give a single-pass minibatched Oja method with nearly optimal rates for sub-Gaussian data under an eigengap. Subsequent specialized results obtain minimax rates for rank-$r$ spiked covariance models~\cite{CaiXZ24} and robustness to heavy tails and contamination under elliptical models~\cite{KimJ25}. 
}

\section{Preliminaries}

In Section~\ref{ssec:notation}, we give basic notation used throughout the paper, and in Section~\ref{ssec:cpca}, we state the main streaming PCA problem we consider. In Section~\ref{ssec:bernstein}, we state a baseline result in the offline setting via the matrix Bernstein theorem, under a slight strengthening of the problem formulation. We defer preliminaries on differential privacy, used in our main application, to Section~\ref{sec:dp_pca}.

\subsection{Notation}\label{ssec:notation}

We use $X \perp Y$ to denote that random variables $X$ and $Y$ are independent. We use $\ind_{\calE}$ to denote the $0$-$1$ indicator random variable of an event $\calE$. For two measures $\pi$, $\mu$ over the same sample space $\Omega$, which we identify with corresponding distributions, $\TV(\pi, \mu) \defeq \half \int_\Omega |\pi - \mu|\dd\omega$ denotes their TV distance and $\KL(\pi \| \mu) \defeq \int \pi \log \frac \pi \mu \dd \omega$ denotes their KL divergence. 

Vectors are denoted in lowercase boldface and matrices in uppercase boldface. We use $\0_d$ and $\1_d$ to denote the all-zeroes and all-ones vectors in $\R^d$, $\id_d$ to denote the $d \times d$ identity matrix, and $\0_{m \times n}$ to denote the $m \times n$ all-zeroes matrix. We use $[d]$ to denote $\{i \in \N: 1 \le i \le d\}$. For $p \ge 1$ including $p = \infty$ we use $\norm{\cdot}_p$ to denote the $\ell_p$ norm of a vector, and $\norm{\cdot}_{S_p}$ to denote the Schatten-$p$ norm of a matrix. The set $\Sym^{d \times d}$ denotes all $d \times d$ symmetric matrices, and $\PSD^{d \times d}$ denotes the subset of positive semidefinite matrices. We use $\calN(\vm, \msig)$ to denote the multivariate Gaussian with mean $\vm \in \R^d$ and covariance $\msig \in \PSD^{d \times d}$.
We use $\normop{\cdot}$ to denote the ($\ell_2$ induced) operator norm of a matrix, and $\normf{\cdot}$ to denote its Frobenius norm, i.e., Schatten-$2$ norm. We use $\lam_i(\cdot)$ to denote the $i^{\text{th}}$ largest eigenvalue of a symmetric matrix, and $\Tr(\cdot)$ for the trace. We say a matrix is orthonormal if its columns $\{\vu_i\}$ satisfy $\inprod{\vu_i}{\vu_j} = \ind_{i = j}$. For unit vectors $\vu, \vv$ we define
\[\msign\Par{\vu, \vv} \defeq \min\Brace{\norm{\vu - \vv}_2, \norm{\vu + \vv}_2}.\]

\begin{lemma}\label{lem:msign_triangle}
$\msign$ satisfies the triangle inequality.
\end{lemma}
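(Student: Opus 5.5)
We need to show that for unit vectors $\vu, \vv, \vw$,
\[\msign(\vu,\vw) \le \msign(\vu,\vv) + \msign(\vv,\vw).\]
The plan is to use that $\msign(\vu,\vv) = \min_{s \in \{\pm 1\}} \norm{\vu - s\vv}_2$. Sign flips are isometries of $\ell_2$, so we can realize each minimum with a sign and then chain the two choices with the usual triangle inequality for $\norm{\cdot}_2$.

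Concretely, pick $s_1 \in \{\pm 1\}$ attaining $\msign(\vu,\vv) = \norm{\vu - s_1\vv}_2$. Pick $s_2 \in \{\pm 1\}$ attaining $\msign(\vv,\vw) = \norm{\vv - s_2\vw}_2$. Since multiplication by $s_1$ preserves norms, $\norm{s_1\vv - s_1 s_2 \vw}_2 = \norm{\vv - s_2\vw}_2$. The triangle inequality for the Euclidean norm then gives
\[\norm{\vu - s_1 s_2 \vw}_2 \le \norm{\vu - s_1\vv}_2 + \norm{s_1\vv - s_1s_2\vw}_2 = \msign(\vu,\vv) + \msign(\vv,\vw).\]
Because $s_1s_2 \in \{\pm 1\}$, the left side is at least $\min\{\norm{\vu-\vw}_2,\norm{\vu+\vw}_2\} = \msign(\vu,\vw)$, which concludes the argument.

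There is no real obstacle here. The only point needing care is that $\msign$ is symmetric and sign-invariant in each argument, i.e.\ $\msign(\vu,\vv) = \msign(\vu,-\vv) = \msign(\vv,\vu)$. This is immediate from the definition, and it justifies absorbing the sign $s_1$ into the middle vector.
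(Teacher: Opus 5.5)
Your proof is correct and is essentially the paper's argument: pick the optimal signs for the two pairs, then apply the Euclidean triangle inequality with the product sign. The use of $\norm{s_1\vv - s_1 s_2\vw}_2 = \norm{\vv - s_2\vw}_2$ already handles the sign, so the closing remark about symmetry and sign-invariance of $\msign$ isn't actually needed.
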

\begin{proof}
For unit $\vu, \vv, \vw$, if $\msign\Par{\vu, \vw} = \norm{\vu - \sigma\vw}_2$, $\msign\Par{\vw, \vv} = \norm{\vw - \tau\vv}_2$, for $(\sigma, \tau) \in \{\pm 1\}^2$,
\[\msign\Par{\vu, \vv} \le \norm{\vu - \sigma\tau \vv}_2 \le \norm{\vu - \sigma \vw}_2 + \norm{\sigma \vw - \sigma\tau\vv}_2 = \msign\Par{\vu, \vw} + \msign\Par{\vw, \vv}.\]
\end{proof}

\subsection{Main problem}\label{ssec:cpca}

To state our main problem, we recall the following helpful definition from \cite{JambulapatiKLPPT24}, which has emerged as a useful gap-free notion of PCA in the literature \cite{GarberH15, ZhuL16, allen2017first}.

\begin{definition}[cPCA]\label{def:cpca}
Let $(\gamma, \Delta) \in (0, 1)^2$, and let $\msig \in \Sym^{d \times d}$. We say that a unit vector $\vv \in \R^{d}$ is a \emph{$(\gamma,\Delta)$-cPCA} (correlation PCA) of $\msig$ if, letting orthonormal $\ms \in \R^{d \times r}$ have the same column span as the eigenspace of $\msig$ corresponding to eigenvalues $< (1 - \gamma) \lam_1(\msig)$,
\[\normf{\ms^\top \mv}^2 \le \Delta.\]
\end{definition}

We now state the main statistical model we consider in this paper.

\begin{model}\label{model:oja_special}
Fix $\lam_1 > 0$ and $V > 0$.
Let $\{\ma_t \in \R^{d \times d}\}_{t \in [n]}$ be i.i.d.\ with $\E \ma_t = \msig \in \PSD^{d \times d}$, and
\[\normop{\msig} = \lam_1,\quad \max\Brace{\normop{\E\Brack{\Par{\msig - \ma_t}\Par{\msig - \ma_t}^\top}}, \normop{\E\Brack{\Par{\msig - \ma_t}^\top\Par{\msig - \ma_t}}}}\le V.\]
\end{model}

The main problem this paper focuses on is computing a cPCA of $\msig$, given access to $\{\ma_t\}_{t \in [n]}$ arising from Model~\ref{model:oja_special}. Our algorithms' sample complexities will depend on five parameters: $(V, \lam_1)$ from Model~\ref{model:oja_special}, $(\gamma, \Delta)$ from Definition~\ref{def:cpca}, and the failure probability, denoted $\zeta \in (0, 1)$.

We consider this problem in two settings: the batch setting where one can arbitrarily manipulate the $\{\ma_t\}_{t \in [n]}$, and the streaming setting, our main focus. In the streaming setting, the $\{\ma_t\}_{t \in [n]}$ are given in a stream, and once we receive $\ma_t$ we can perform an update and then it is discarded from memory. The goal is to solve the cPCA problem with low external memory, ideally $O(d)$. 

The main algorithm we consider for streaming PCA is Oja's algorithm (Algorithm~\ref{alg:oja}).

\begin{algorithm}[ht]
\DontPrintSemicolon 
\caption{$\Oja(\{\ma_t, \eta_t\}_{t \in [n]})$}\label{alg:oja} 
\textbf{Input:} $\{\ma_t \in \R^{d \times d}, \eta_t > 0\}_{t \in [n]}$ \;
$\vw_0 \gets \vg/\norm{\vg}_2$, where $\vg \sim \Nor(\0_d, \id_d)$\; 
\For{$t \in [n]$}{
$\vw_t \gets (\id_d + \eta_t \ma_t) \vw_{t - 1}$\;
$\vw_t \gets \vw_t/\norm{\vw_t}_2$\;
}
\Return $\vw_n$ 
\end{algorithm}

Note that Algorithm~\ref{alg:oja} is clearly a streaming algorithm. We introduce some helpful notation:
\begin{equation}\label{eq:bi_def}\begin{aligned}
    \mb_t &\defeq \Par{\id_d + \eta_t \ma_t}\cdots\Par{\id_d + \eta_1 \ma_1},\\
\mc_t &\defeq (\mi_d+\eta_t\msig)\cdots(\mi_d+\eta_1\msig).
\end{aligned}
\end{equation}
With this notation, the updates in Algorithm~\ref{alg:oja} are equivalent to $\vw_t \gets \frac{\mb_t \vg}{\norm{\mb_t \vg}_2}$. Further, $\mc_t$ helps track the unnormalized iterates for the corresponding updates using the average matrix $\msig$.

\subsection{Baseline via matrix Bernstein}\label{ssec:bernstein}

As a baseline, we recall a folklore result that in the batch setting, any approximate cPCA of the empirical covariance (with appropriate parameters) also solves the statistical cPCA problem. This result is stated under a slight strengthening of Model~\ref{model:oja_special} that imposes a probability $1$ bound on each sample $\ma_t$, but naturally yields a high-probability guarantee unlike Theorem~\ref{thm:gapfree_oja}. 

\begin{model}\label{model:oja_prob1}
Fix $\lam_1 > 0$, $V > 0$, and $M > 0$.
Let $\{\ma_t\}_{t \in [n]}$ from Model~\ref{model:oja_special} additionally satisfy
\[\normop{\msig - \ma_t} \le M \text{ with probability } 1.\]
\end{model}

\begin{proposition}[Gap-free PCA via matrix Bernstein]\label{prop:gapfree_offline}
Under Model~\ref{model:oja_prob1}, let $\hmsig \defeq \frac 1 {2n} \sum_{t \in [n]} (\ma_t + \ma_t^\top)$, and let $\vv$ be any $(\frac \gamma 6, \frac \Delta 4)$-cPCA for $\hmsig$. Then for any $\zeta \in (0, \frac 1 3)$, $\vv$ is a $(\gamma, \Delta)$-cPCA for $\msig$ with probability $\ge 1 - \zeta$, if for an appropriate constant,
\[n = \Omega\Par{\Par{\frac{V}{\lam_1^2\gamma^2\Delta} + \frac{M}{\lam_1\gamma\sqrt{\Delta}}}\log\Par{ \frac d \zeta}}. \]
\end{proposition}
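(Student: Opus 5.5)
Write $\me \defeq \hmsig - \msig$; since $\msig$ is symmetric, $\hmsig$ is symmetric and $\me = \frac{1}{2n}\sum_{t\in[n]}(\ma_t - \msig + (\ma_t-\msig)^\top)$. The plan has three steps.

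\textbf{Step 1 (matrix Bernstein).} The summands $\mz_t \defeq \frac12(\ma_t-\msig+(\ma_t-\msig)^\top)$ are i.i.d., symmetric and mean zero. They satisfy $\normop{\mz_t}\le M$ almost surely. By $(\mx+\mx^\top)^2 \preceq 2(\mx\mx^\top + \mx^\top\mx)$ and Model~\ref{model:oja_special}, $\normop{\E \mz_t^2}\le V$. Matrix Bernstein then gives, with probability $\ge 1-\zeta$,
\[\normop{\me} \le \epsilon \defeq C\Par{\sqrt{\frac{V\log(d/\zeta)}{n}} + \frac{M\log(d/\zeta)}{n}}.\]
The sample size in the statement is chosen so that $\epsilon \le c\,\lam_1\gamma\sqrt{\Delta}$ for a small constant $c$. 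Indeed, the first term needs $n \gtrsim V\log/(\lam_1^2\gamma^2\Delta)$ and the second needs $n\gtrsim M\log/(\lam_1\gamma\sqrt\Delta)$. In particular $\epsilon\le \gamma\lam_1/100$.

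\textbf{Step 2 (deterministic perturbation claim).} I will show that if $\normop{\me}\le c\lam_1\gamma\sqrt\Delta$ and $\vv$ is a $(\frac\gamma6,\frac\Delta4)$-cPCA of $\hmsig$, then $\vv$ is a $(\gamma,\Delta)$-cPCA of $\msig$.
\begin{itemize}
\item By Weyl, $|\lam_1(\hmsig)-\lam_1|\le\epsilon$.
\item Let $\mpp$ project onto the eigenspace of $\msig$ with eigenvalues $<(1-\gamma)\lam_1$.
\item Let $\widehat{\mathbf{Q}}$ project onto the eigenspace of $\hmsig$ with eigenvalues $\ge (1-\frac\gamma6)\lam_1(\hmsig)$, which lies above $(1-\frac\gamma3)\lam_1$.
\end{itemize}
Decompose $\mpp\vv = \mpp\widehat{\mathbf{Q}}\vv + \mpp(\id-\widehat{\mathbf{Q}})\vv$. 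The second piece has norm at most $\norm{(\id-\widehat{\mathbf{Q}})\vv}_2\le\sqrt\Delta/2$ by hypothesis. For the first piece, the $\sin\Theta$ (Davis--Kahan) theorem applies: the separation between the spectrum of $\msig$ on $\mpp$ (below $(1-\gamma)\lam_1$) and that of $\hmsig$ on $\widehat{\mathbf{Q}}$ (above $(1-\frac\gamma3)\lam_1$) is at least $\frac{2\gamma}{3}\lam_1$. Hence $\normop{\mpp\widehat{\mathbf{Q}}}\le \frac{3\epsilon}{2\gamma\lam_1}\le\sqrt\Delta/2$ for small $c$.

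Combining the two pieces gives $\norm{\mpp\vv}_2\le\sqrt\Delta$, i.e.\ $\normf{\ms^\top\vv}^2\le\Delta$. This is exactly where the $\sqrt\Delta$ scaling of $\epsilon$ is needed, and hence where $\Delta$ enters the sample bound.

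\textbf{Step 3 (main obstacle).} The care is in the Davis--Kahan step. I will use the elementary form: for eigenvectors $\vx$ of $\msig$ with eigenvalue $a$ and $\vy$ of $\hmsig$ with eigenvalue $b$, one has $(b-a)\inprod{\vx}{\vy} = \vx^\top\me\vy$. Lifting this to subspaces via a Sylvester equation gives $\normop{\mpp\widehat{\mathbf{Q}}}\le\normop{\me}/(b_{\min}-a_{\max})$. The Sylvester step is valid since one spectrum lies entirely above the other.

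I also need to check the threshold bookkeeping: $(1-\frac\gamma6)(\lam_1-\epsilon)\ge(1-\frac\gamma3)\lam_1$ when $\epsilon\le\gamma\lam_1/6$. Once this holds, taking a union of the single Bernstein event with the deterministic claim finishes the proof.
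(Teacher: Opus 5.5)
Your proposal is correct and follows essentially the paper's route: matrix Bernstein to get $\normop{\hmsig - \msig} \lesssim \lam_1\gamma\sqrt{\Delta}$, then a deterministic transfer of the cPCA guarantee from $\hmsig$ to $\msig$. The differences are minor. The paper applies rectangular Bernstein to the nonsymmetric $\ma_t - \msig$ and then uses convexity with $\hmsig - \msig = \frac12(\me + \me^\top)$, whereas you symmetrize first, and your bound $(\mx + \mx^\top)^2 \preceq 2(\mx\mx^\top + \mx^\top\mx)$ handles that correctly. The paper also cites Proposition 1 of \cite{Tian26} for the transfer step (at threshold $\frac{\lam_1\gamma\sqrt\Delta}{4}$), rather than giving your self-contained Davis--Kahan/Sylvester argument.
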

\begin{proof}
This is almost the statement of Proposition 1, \cite{Tian26}, up to the assumptions on $\{\ma_t\}_{t \in [n]}$. The proof of Proposition 1, \cite{Tian26} shows the result if with probability $\ge 1 - \zeta$,
\[\normop{\hmsig - \msig} \le \frac{\lam_1\gamma \sqrt \Delta}{4}.\]
To show this, set $\me_t \defeq \ma_t - \msig$ and $\me \defeq \frac 1 n \sum_{t \in [n]} \me_t$. The $\me_t$ are independent and mean zero, so the bounds from Model~\ref{model:oja_prob1} and the matrix Bernstein inequality (Theorem 6.6.1, \cite{tropp2015introduction}) prove the above bound on $\normsop{\me}$. The claim follows from Jensen's inequality and $\hmsig - \msig = \half(\me + \me^\top)$.
\end{proof}
\section{Oja's Algorithm}
\label{sec:oja}

In this section we give a new analysis of Oja's algorithm (Algorithm~\ref{alg:oja}), yielding cPCA guarantees in the general gap-free setting of Model~\ref{model:oja_special}. 
To simplify notation, we let orthonormal $\ms$ span the eigenspace of $\msig$ corresponding to eigenvalues $< (1 - \gamma) \lam_1(\msig)$ (in line with Definition~\ref{def:cpca}), and denote the associated orthogonal projector by $\mpp \defeq \ms\ms^\top$.
Also, we require one helper fact.

\begin{fact}\label{fact:abel_majorize}
Let $\vx, \vy, \vc \in \R^d_{\ge 0}$ have nonincreasing coordinates, and suppose $\vy$ weakly majorizes $\vx$. Then
\[\sum_{j \in [k]} \vc_j\vx_j \le \sum_{j \in [k]} \vc_j \vy_j \text{ for all } k \in [d].\]
\end{fact}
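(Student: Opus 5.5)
This is Abel summation. Let $\vx,\vy,\vc$ be as in Fact~\ref{fact:abel_majorize}, and write the partial sums $X_i \defeq \sum_{j \in [i]} \vx_j$ and $Y_i \defeq \sum_{j \in [i]} \vy_j$, with $X_0 = Y_0 = 0$. Weak majorization of $\vx$ by $\vy$ (with both sorted nonincreasingly) means precisely that $X_i \le Y_i$ for all $i \in [d]$. The plan is to rewrite each weighted sum $\sum_{j\in[k]} \vc_j \vx_j$ in terms of these partial sums. The weights that appear will then be nonnegative, because $\vc$ is nonincreasing and nonnegative.

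Concretely, fix $k \in [d]$ and substitute $\vx_j = X_j - X_{j-1}$. Summation by parts gives
\[\sum_{j \in [k]} \vc_j \vx_j = \sum_{j \in [k-1]} (\vc_j - \vc_{j+1}) X_j + \vc_k X_k,\]
and the identical identity holds with $\vy, Y$ in place of $\vx, X$. Every coefficient in this expansion is nonnegative: $\vc_j - \vc_{j+1} \ge 0$ since $\vc$ is nonincreasing, and $\vc_k \ge 0$ since $\vc \in \R^d_{\ge 0}$.

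Subtracting the two expansions gives
\[\sum_{j\in[k]}\vc_j(\vy_j-\vx_j) = \sum_{j\in[k-1]}(\vc_j-\vc_{j+1})(Y_j-X_j) + \vc_k(Y_k-X_k).\]
This is a nonnegative combination of the nonnegative quantities $Y_j - X_j$, so it is nonnegative, which is the claim.

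There is no real obstacle here. The only points needing care are verifying the boundary term $\vc_k X_k$ in the summation-by-parts identity, and noting that the nonnegativity of $\vc$ is used exactly for that last coefficient $\vc_k$. The hypothesis that $\vx,\vy \ge 0$ is not needed beyond making weak majorization the right notion for sorted vectors.
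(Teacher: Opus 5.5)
Your proposal is correct and is the same argument as the paper's: partial sums of $\vx$ and $\vy$, Abel summation, and nonnegativity of the coefficients $\vc_j - \vc_{j+1}$ and $\vc_k$. The paper writes the identical identity $\sum_{j\in[k]}\vc_j(\vy_j-\vx_j) = \vc_k(\mathbf{t}_k-\vs_k) + \sum_{j\in[k-1]}(\vc_j-\vc_{j+1})(\mathbf{t}_j-\vs_j) \ge 0$.
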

\begin{proof}
For all $k \in [d]$ let $\vs_k \defeq \sum_{j \in [k]}\vx_j$ and $\mathbf{t}_k \defeq \sum_{j \in [k]}\vy_j$. Then Abel's summation formula gives
\[\sum_{j \in [k]} \vc_j (\vy_j - \vx_j) = \vc_k (\mathbf{t}_k - \vs_k) + \sum_{j \in [k - 1]} (\vc_j - \vc_{j + 1})(\mathbf{t}_j - \vs_j) \ge 0.\]
\end{proof}

We first analyze a one-step power method, analogously to Lemma 3.1, \cite{jain2016streaming}.

\begin{lemma}
\label{lem:gapfree_one_step_power}
Let $\zeta \in (0, \frac 1 3)$, let $\mb\in\R^{d\times d}$ not be the all-zeroes matrix, and let $\mv \in \R^{d \times r}$ be orthonormal. If $\vg \sim \calN(\0_d, \id_d)$, then with probability $\ge 1 - \zeta$, 
\bas{
\frac{\norm{\mv^\top \mb \vg}_2^2}{\norm{\mb \vg}_2^2} \le \frac{90\log \Par{\frac 1 \zeta}}{\zeta^2} \cdot \frac{\normf{\mv\mv^\top \mb}^2}{\normf{\mb}^2}.
}
\end{lemma}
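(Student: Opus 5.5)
The plan is to bound the numerator from above and the denominator from below separately, each with failure probability at most $\zeta/2$, and then take a union bound.

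\emph{Numerator.} Since $\norm{\mv^\top \mb \vg}_2 = \norm{\mv\mv^\top \mb\vg}_2$ and $\vg \sim \calN(\0_d, \id_d)$, we have
\[\E\norm{\mv\mv^\top\mb\vg}_2^2 = \normf{\mv\mv^\top\mb}^2.\]
Markov's inequality then gives, with probability at least $1-\frac{\zeta}{2}$,
\[\norm{\mv^\top\mb\vg}_2^2 \le \frac{2}{\zeta}\normf{\mv\mv^\top\mb}^2.\]
This accounts for one factor of $\frac{1}{\zeta}$. A Hanson--Wright bound would give only a $\log\frac{1}{\zeta}$ factor here, but Markov's inequality is simpler and already fits within the claimed budget.

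\emph{Denominator.} This step needs an anti-concentration bound. Write $\mb^\top\mb = \mmu\mlam\mmu^\top$ with eigenvalues $\sigma_1^2 \ge \sigma_2^2 \ge \dots$ (the squared singular values of $\mb$). By rotation invariance of the Gaussian,
\[\norm{\mb\vg}_2^2 \overset{d}{=} \sum_i \sigma_i^2 z_i^2, \qquad z_i \simiid \calN(0,1).\]
I will use the crude bound $\sum_i \sigma_i^2 z_i^2 \ge \sigma_1^2 z_1^2$. Gaussian anti-concentration gives $\Pr[|z_1| \le s] \le s\sqrt{2/\pi}$, so with probability at least $1-\frac{\zeta}{2}$ we have $z_1^2 \ge c\zeta^2$ for an absolute constant $c$. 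Hence $\norm{\mb\vg}_2^2 \ge c\zeta^2\sigma_1^2$.

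The problem is that the target ratio has $\normf{\mb}^2$, not $\sigma_1^2$, in the denominator, and $\normf{\mb}^2$ can be as large as $d\sigma_1^2$. I therefore need a dimension-free lower bound of the form $\norm{\mb\vg}_2^2 \gtrsim \zeta^2\normf{\mb}^2/\log\frac{1}{\zeta}$. This is the main obstacle, and it is where the $\log\frac{1}{\zeta}$ factor in the statement should come from.

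The approach I would try is to write $w_i = \frac{\sigma_i^2}{\normf{\mb}^2}$ (so $\sum_i w_i = 1$) and bound the small-ball probability $\Pr[\sum_i w_i z_i^2 \le \epsilon]$ via the Laplace transform. For any $\lambda > 0$,
\[\Pr\Brack{\sum_i w_i z_i^2 \le \epsilon} \le e^{\lambda\epsilon}\prod_i (1+2\lambda w_i)^{-1/2}.\]
To control the product I would do a case split on the weights.
\begin{itemize}
\item If some $w_1 \ge \frac{1}{2\log(1/\zeta)}$, the single-coordinate argument above applies and yields $\norm{\mb\vg}_2^2 \ge c\zeta^2\normf{\mb}^2/\log\frac{1}{\zeta}$.
\item Otherwise all weights are small. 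Then $\log(1+2\lambda w_i) \ge \lambda w_i$ holds whenever $\lambda w_i \le 1$, so choosing $\lambda \approx \log\frac{1}{\zeta}$ bounds the product by $e^{-\lambda/2}$. Taking $\epsilon = \frac14$ then gives failure probability at most $\zeta/2$, a constant-order lower bound that is even stronger than needed.
\end{itemize}

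Combining the two cases gives, with probability at least $1-\frac{\zeta}{2}$,
\[\norm{\mb\vg}_2^2 \ge \frac{c\,\zeta^2}{\log(1/\zeta)}\normf{\mb}^2.\]
Dividing the Markov bound by this lower bound gives a ratio of order $\frac{\log(1/\zeta)}{\zeta^3}$, which has one factor of $\zeta$ too many.

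To recover the stated $\zeta^{-2}$, I would sharpen the first case. That case is the only one that costs $\zeta^2$, and there I would instead split the failure budget unevenly, using $\Pr[z_1^2 \le s^2] \le s$ together with Markov at level $\frac{1}{\zeta}$. If that still leaves an extra factor, I would instead bound the ratio directly. The idea is to condition on the direction of $\vg$ and use that the ratio is scale-invariant in $\vg$, so that only the anti-concentration of the denominator relative to its mean enters. The constant $90$ would then be tuned at the end of this step.
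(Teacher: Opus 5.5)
Your proposal has a genuine gap. As written, it proves a ratio bound of order $\frac{\log(1/\zeta)}{\zeta^3}$, and the suggested repairs do not close it. The root cause is how you allocate the failure budget. Even when $\mb$ has rank one, the denominator is $\sigma_1^2 z_1^2$ with $\Pr[z_1^2\le s]\asymp\sqrt s$. So the entire $\zeta^{-2}$ factor must be spent on the lower bound for $\norm{\mb\vg}_2^2$, and the numerator may only lose $\log(1/\zeta)$. Markov therefore does \emph{not} fit within the budget, contrary to your claim. Splitting the failure probability unevenly cannot help either: Markov at level $\zeta_1$ costs $\zeta_1^{-1}$, the denominator at level $\zeta_2$ costs $\gtrsim\zeta_2^{-2}$, and $\zeta_1\zeta_2^2\le\frac{4}{27}\zeta^3$ whenever $\zeta_1+\zeta_2\le\zeta$. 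The remark about ``bounding the ratio directly'' is not an argument.

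You need two fixes. Either one alone still falls short: fixing only the numerator leaves $\frac{\log^2(1/\zeta)}{\zeta^2}$, and fixing only the denominator leaves $\zeta^{-3}$.

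For the numerator, use the exponential $\chi^2$ tail you dismissed. With $\mk\defeq\mb^\top\mv\mv^\top\mb$, so that $\Tr(\mk)=\normf{\mv\mv^\top\mb}^2$, Lemma 1 of \cite{LaurentM00} together with $\Tr(\mk^2)\le\Tr(\mk)^2$ and $\normop{\mk}\le\Tr(\mk)$ gives $\vg^\top\mk\vg\le(1+2\sqrt t+2t)\Tr(\mk)$ with probability $\ge1-e^{-t}$. Take $t=\log(2/\zeta)$.

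For the denominator, your Laplace-transform setup already suffices, with no case split and no $\log$ loss. Instead of $\log(1+2x)\ge x$, use $\prod_i(1+2\lambda w_i)\ge1+2\lambda\sum_i w_i=1+2\lambda$, and choose $\lambda=\frac1{2\epsilon}$:
\[
\Pr\Brack{\sum_i w_iz_i^2\le\epsilon}\le e^{1/2}\Par{1+\frac1\epsilon}^{-1/2}\le\sqrt{e\epsilon}.
\]
Setting $\epsilon=\frac{\zeta^2}{4e}$ gives $\norm{\mb\vg}_2^2\ge\frac{\zeta^2}{4e}\normf{\mb}^2$ with probability $\ge1-\frac\zeta2$. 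A union bound then bounds the ratio by $\frac{4e(1+2\sqrt t+2t)}{\zeta^2}\cdot\frac{\normf{\mv\mv^\top\mb}^2}{\normf{\mb}^2}$. For $\zeta<\frac13$ the prefactor is at most $\frac{90\log(1/\zeta)}{\zeta^2}$.

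This is exactly the paper's proof: Laurent--Massart for the numerator and dimension-free Gaussian anti-concentration at level $\frac{\zeta^2}{4e}$ for the denominator.
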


\begin{proof}
Define
    $\mh\defeq\mb^\top\mb$ and $\mk\defeq\mb^\top \mv\mv^\top\mb$.
Then our goal is to bound
\bas{
    \frac{\norm{\mv^\top \mb \vg}_2^2}{\norm{\mb \vg}_2^2} 
    =
    \frac{\vg^\top\mk\vg}{\vg^\top\mh\vg}.
}
For the denominator, standard Gaussian anti-concentration (e.g., Lemma A.2.1, \cite{kumar2024oja}) shows
\bas{
    \vg^\top\mh\vg
    \geq
    \frac{\zeta^2}{4e}\Tr(\mh)
}
with probability at least $1 - \frac \zeta 2$.
Similarly, for the numerator, standard $\chi^2$ concentration bounds (e.g., Lemma 1, \cite{LaurentM00}) along with $\Tr(\mk^2) \le \Tr(\mk)^2$, $\normsop{\mk} \le \Tr(\mk)$, gives
\begin{align*}
\Pr\Par{\vg^\top \mk \vg > (1 + 2\sqrt t + 2t)\Tr(\mk)} \leq \exp(-t)\text{ for all } t > 0.
\end{align*}
Plugging in $t = \log (\frac 2\zeta )\le 2 \log (\frac 1 \zeta)$ and combining the above three displays gives the result.
\end{proof}

In Lemmas~\ref{lem:expected_total_energy_vary_step} and~\ref{lem:gapfree_bad_trace_vary_step}, we derive bounds on the ratio in Lemma~\ref{lem:gapfree_one_step_power} for $\mv \gets \ms$, as $\mb \gets \mb_t$ undergoes the updates of Algorithm~\ref{alg:oja}. We begin by tracking a trace-based potential.

\begin{lemma}\label{lem:expected_total_energy_vary_step}
Under Model~\ref{model:oja_special} and notation \eqref{eq:bi_def}, the iterates of Algorithm~\ref{alg:oja} satisfy, for all $t \in [n]$,
    \bas{
    \E\Brack{\Tr\Par{\mb_t\mb_t^\top}}
    \leq  \exp\Par{V\sum_{s\in[t]} \eta_s^2}\Tr\Par{\mc_t\mc_t^\top}.
}
\end{lemma}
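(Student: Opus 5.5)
Set $\mathbf{M}_t \defeq \E[\mb_t\mb_t^\top]$ and induct on $t$. The statement to propagate is a weak majorization claim, as hinted in Section~\ref{ssec:intro_techniques}. Let $\theta_t \defeq \exp(V\sum_{s\in[t]}\eta_s^2)$. We aim to show that the eigenvalue vector of $\theta_t\mc_t\mc_t^\top$, sorted nonincreasingly, weakly majorizes that of $\mathbf{M}_t$. Since $\mc_t\mc_t^\top = \prod_s(\id_d+\eta_s\msig)^2$ has eigenvalues $\prod_s(1+\eta_s\lam_i)^2$, the two matrices' eigenvalues are sorted consistently with the eigenbasis of $\msig$. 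Taking $k=d$ in the majorization gives $\Tr(\mathbf{M}_t)\le\theta_t\Tr(\mc_t\mc_t^\top)$, which is the claim. The base case $t=0$ ($\mb_0=\mc_0=\id_d$) is trivial.

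\textbf{Inductive step.} Write $\ma_{t+1}=\msig+\me_{t+1}$ with $\E\me_{t+1}=\mzero$ and $\me_{t+1}\perp\mb_t$, and set $\eta=\eta_{t+1}$. Conditioning on $\mb_t$, the cross terms vanish:
\[
\mathbf{M}_{t+1} = (\id_d+\eta\msig)\mathbf{M}_t(\id_d+\eta\msig) + \eta^2\E\Brack{\me_{t+1}\mathbf{M}_t\me_{t+1}^\top}.
\]
Bounding the noise term in Loewner order is not immediate, because $\me\mathbf{M}\me^\top$ does not factor. Instead we bound Ky Fan sums. For any orthonormal $\mathbf{U}\in\R^{d\times k}$, the von Neumann trace inequality gives
\[
\Tr\Par{\mathbf{U}^\top\me\mathbf{M}_t\me^\top\mathbf{U}} = \Tr\Par{\mathbf{M}_t\me^\top\mathbf{U}\mathbf{U}^\top\me} \le \sum_j\lam_j(\mathbf{M}_t)\lam_j\Par{\me^\top\mathbf{U}\mathbf{U}^\top\me}.
\]
Taking expectations, the right-hand side is at most $\sum_j \lam_j(\mathbf{M}_t)\cdot \E[\lam_j(\cdot)]$, subject to the constraints $\E[\lam_j(\me^\top\mathbf{U}\mathbf{U}^\top\me)]\ge0$ and $\sum_j\E\lam_j = \Tr(\mathbf{U}^\top\E[\me\me^\top]\mathbf{U})\le kV$. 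We also know each $\lam_j\le$ ... but we lack a per-coordinate bound. Here the two-sided variance assumption should be used more carefully. We instead write $\Tr(\mathbf{M}_t\me^\top\mathbf{U}\mathbf{U}^\top\me)\le \Tr(\mathbf{M}_t\me^\top\me)$ when convenient, and in expectation $\E[\me^\top\me]\preceq V\id_d$, so $\E\Tr(\mathbf{U}^\top\me\mathbf{M}_t\me^\top\mathbf{U})$ is at most ... This handles the $k=d$ case directly: $\E\Tr(\me\mathbf{M}_t\me^\top)=\Tr(\mathbf{M}_t\E[\me^\top\me])\le V\Tr(\mathbf{M}_t)$.

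\textbf{Main obstacle.} The crux is the $k<d$ Ky Fan sums for the signal term $(\id_d+\eta\msig)\mathbf{M}_t(\id_d+\eta\msig)$, since majorization for general $k$ is needed to propagate the induction. I would bound its top-$k$ sum using the same von Neumann step, $\le\sum_{j\le k}(1+\eta\lam_j)^2\lam_j(\mathbf{M}_t)$. Then Fact~\ref{fact:abel_majorize}, with $\vc_j=(1+\eta\lam_j)^2$ nonincreasing and the inductive majorization, gives $\le\theta_t\sum_{j\le k}(1+\eta\lam_j)^2\lam_j(\mc_t\mc_t^\top)$. For the noise term I would bound it by $\eta^2V\cdot\sum_{j\le k}\lam_j(\mathbf{M}_t)$; I expect this is the hardest point, and I would try to get it by applying von Neumann with $\E[\me^\top \mathbf{U}\mathbf{U}^\top \me]\preceq V\id_d$ only after spreading its trace $\le kV$ over the top $k$ eigenvalues of $\mathbf{M}_t$ (an extremal argument over doubly substochastic weights). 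Majorization then gives $\le\eta^2V\theta_t\sum_{j\le k}(1+\eta\lam_j)^2\lam_j(\mc_t\mc_t^\top)$, using $(1+\eta\lam_j)^2\ge1$. Summing the two bounds yields the factor $(1+\eta^2V)\theta_t\le\theta_{t+1}$, which closes the induction.
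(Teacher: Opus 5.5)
Your final plan is correct and is essentially the paper's proof. It uses the same weak-majorization induction, the same von Neumann bound on the signal term (using that $(\id_d+\eta\msig)\mathbf{U}\mathbf{U}^\top(\id_d+\eta\msig)\preceq(\id_d+\eta\msig)^2$ has rank at most $k$), the same noise-term bound obtained by applying von Neumann to the \emph{expected} matrix $\E[\me^\top\mathbf{U}\mathbf{U}^\top\me]$ (whose eigenvalues lie in $[0,V]$ and sum to at most $kV$, using both variance bounds), and the same closing step via Fact~\ref{fact:abel_majorize}. The only differences are cosmetic (the paper carries $\prod_{r}(1+V\eta_r^2)$ rather than $\theta_t$), though you should drop your abandoned first attempt, which applies von Neumann to the random matrix before taking expectations and leaves the $\E[\lam_j(\cdot)]$ uncontrolled.
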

\begin{proof}
Let $\lam_j \defeq \lam_j(\msig)$ for shorthand. We prove inductively that, for every $s\ge 0$ and $k \in[d]$, \begin{equation}\label{eq:induct_num}
\sum_{j\in[k]} \lambda_j\Par{\E[\mb_s\mb_s^\top]}\le \Par{\prod_{r \in [s]}(1+V\eta_r^2)}\sum_{j\in[k]}\prod_{r\in [s]}(1+\eta_r\lambda_j)^2.
\end{equation}

Then, taking $s \gets t$, $k \gets d$, and using $1+x\le e^x$ proves the claim, since all $\id_d + \eta_s \msig$ commute.

Clearly \eqref{eq:induct_num} holds for $s = 0$ (where we take empty products as $1$).
For the inductive step, suppose that \eqref{eq:induct_num} holds for $s - 1$ and all $k \in [d]$. Upon expanding, we have
\begin{equation}\label{eq:two_terms_vary}
\begin{aligned}
\E[\mb_s\mb_s^\top]
=
(\id_d+\eta_s\msig)\E[\mb_{s-1}\mb_{s-1}^\top](\id_d+\eta_s\msig) +
\eta_s^2\E\left[
    (\ma_s-\msig)\E[\mb_{s-1}\mb_{s-1}^\top](\ma_s-\msig)^\top
\right].
\end{aligned}
\end{equation}

We bound the two terms separately in order to apply \eqref{eq:induct_num}. For the first term, for any PSD $\mc, \mh$, letting $\mq$ be the projector onto any top-$k$ eigenspace of $\mc\mh\mc$, and using $\mc\mq\mc \preceq \mc^2$,
\begin{align*}
\sum_{j\in[k]}\lambda_j(\mc\mh\mc) &= \inprod{\mq}{\mc\mh\mc} =  \Tr(\mh \mc\mq\mc) \\
&\le \sum_{j \in [k]} \lam_j(\mh)\lam_j(\mc\mq\mc)
\le \sum_{j\in[k]}\lambda_j(\mc)^2\lambda_j(\mh).
\end{align*}
For the second term, observe that for every rank-$k$ orthogonal projector $\mq$,
\[\0_{d\times d} \preceq
\E[(\ma_s-\msig)^\top \mq(\ma_s-\msig)]
\preceq V\id_d,\quad \Tr\Par{\E\Brack{(\ma_s-\msig)^\top\mq(\ma_s-\msig)}} \le kV.\]
Then by the von Neumann trace inequality,
\begin{equation*}
\Tr\left(
    \mq\E[(\ma_s-\msig)\mh(\ma_s-\msig)^\top]
\right)
=
\Tr\left(
    \mh\E[(\ma_s-\msig)^\top \mq(\ma_s-\msig)]
\right)                                                     \leq
V\sum_{j=1}^k\lambda_j(\mh).
\end{equation*}
By supremizing this over rank-$k$ projectors $\mq$, for every $\mh\in \PSD^{d \times d}$ and $k\in[d]$,
\bas{
    \sum_{j\in[k]}\lambda_j\!\left(
\E[(\ma_s-\msig)\mh(\ma_s-\msig)^\top]
\right)
\leq V\sum_{j\in[k]}\lambda_j(\mh),
    }
which bounds the second term. Combining the above displays into \eqref{eq:two_terms_vary}, and using the triangle inequality of the Ky Fan norm,
\begin{align*}
    \sum_{j\in[k]}\lambda_j(\E[\mb_s\mb_s^\top]) &\le \sum_{j\in[k]}\Par{(1+\eta_s\lambda_j)^2+V\eta_s^2}\lambda_j(\E[\mb_{s-1}\mb_{s-1}^\top])\\
    &\le (1+V\eta_s^2)\sum_{j\in[k]}(1+\eta_s\lambda_j)^2\lambda_j(\E[\mb_{s-1}\mb_{s-1}^\top])\\
    &\le \Par{\prod_{r\in[s]}(1+V\eta_r^2)}\sum_{j\in[k]}\prod_{r\in[s]}(1+\eta_r\lambda_j)^2
\end{align*}
where the third line applies Fact~\ref{fact:abel_majorize}
with 
\begin{align*}
 \vc_j \gets (1+\eta_s\lambda_j)^2, \quad \vx_j \gets \lambda_j(\E[\mb_{s-1}\mb_{s-1}^\top]), \quad \vy_j \gets \Par{\prod_{r\in[s-1]}(1+V\eta_r^2)}\prod_{r\in[s-1]}(1+\eta_r\lambda_j)^2,
\end{align*}
where $\vy$ weakly majorizes $\vx$ is the inductive hypothesis.
Thus \eqref{eq:induct_num} holds as desired.
\end{proof}

\begin{lemma}
\label{lem:gapfree_bad_trace_vary_step}
Under Model~\ref{model:oja_special} and notation \eqref{eq:bi_def}, let $\zeta \in (0, \frac 1 3)$, let $t \in [n]$, and suppose
$\eta_s\le \frac 1 {\lam_1}$ for all $s \in [t]$. If we let $q_t \defeq V\sum_{s \in [t]} \eta_s^2$, and $q_t \le \frac \zeta 2$, then with probability $\ge 1 - \zeta$,
\bas{
\frac{\normf{\mpp \mb_t}^2}{\normf{\mb_t}^2}
\leq 2d\exp\Par{-\gamma\lam_1\sum_{s \in [t]} \eta_s} + \frac{8\Par{\exp(q_t) - 1}}{\zeta}.
}
\end{lemma}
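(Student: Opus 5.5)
We follow the decomposition \eqref{eq:triangle_ineq_intro}, in squared form: for any matrices,
\[
\frac{\normf{\mpp\mb_t}}{\normf{\mb_t}} \le \frac{\normf{\mpp\mc_t}}{\normf{\mc_t}} + \normf{\frac{\mb_t}{\normf{\mb_t}} - \frac{\mc_t}{\normf{\mc_t}}},
\]
because $\mpp$ is a contraction in Frobenius norm. The second term is at most $\frac{2\normf{\mb_t-\mc_t}}{\normf{\mc_t}}$. To see this, write
\[
\frac{\mb_t}{\normf{\mb_t}}-\frac{\mc_t}{\normf{\mc_t}} = \frac{\mb_t-\mc_t}{\normf{\mc_t}} + \mb_t\Par{\frac{1}{\normf{\mb_t}}-\frac{1}{\normf{\mc_t}}},
\]
and note that the last term has norm $\frac{|\normf{\mc_t}-\normf{\mb_t}|}{\normf{\mc_t}} \le \frac{\normf{\mb_t-\mc_t}}{\normf{\mc_t}}$. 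Squaring via $(a+b)^2\le 2a^2+2b^2$ then gives
\[
\frac{\normf{\mpp\mb_t}^2}{\normf{\mb_t}^2} \le 2\frac{\normf{\mpp\mc_t}^2}{\normf{\mc_t}^2} + 8\frac{\normf{\mb_t-\mc_t}^2}{\normf{\mc_t}^2}.
\]
It remains to bound the deterministic first term by $d\exp(-\gamma\lam_1\sum_s\eta_s)$, and the random second term by $\frac{\exp(q_t)-1}{\zeta}$ with probability $\ge 1-\zeta$.

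\textbf{Deterministic term.} $\mc_t$ is a polynomial in $\msig$, so it is diagonal in $\msig$'s eigenbasis with entries $\prod_s(1+\eta_s\lam_j)$. Hence $\normf{\mpp\mc_t}^2=\sum_{j:\lam_j<(1-\gamma)\lam_1}\prod_s(1+\eta_s\lam_j)^2$, which is at most $d\prod_s(1+\eta_s(1-\gamma)\lam_1)^2$. Also $\normf{\mc_t}^2\ge\prod_s(1+\eta_s\lam_1)^2$. Each factor of the ratio satisfies
\[
\frac{1+\eta_s(1-\gamma)\lam_1}{1+\eta_s\lam_1} = 1-\frac{\eta_s\gamma\lam_1}{1+\eta_s\lam_1}\le 1-\frac{\eta_s\gamma\lam_1}{2}\le \exp\Par{-\frac{\gamma\lam_1\eta_s}{2}},
\]
using $\eta_s\lam_1\le 1$. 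Squaring turns the exponent into $-\gamma\lam_1\eta_s$, so the product of squared ratios is at most $\exp(-\gamma\lam_1\sum_s\eta_s)$. This is the one place where the step-size condition $\eta_s\le\frac1{\lam_1}$ is used.

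\textbf{Random term.} Since $\E\mb_t=\mc_t$ by independence and multilinearity,
\[
\E\normf{\mb_t-\mc_t}^2=\E\Tr(\mb_t\mb_t^\top)-\Tr(\mc_t\mc_t^\top)\le(\exp(q_t)-1)\Tr(\mc_t\mc_t^\top),
\]
where the inequality is Lemma~\ref{lem:expected_total_energy_vary_step}. Markov's inequality then gives $\frac{\normf{\mb_t-\mc_t}^2}{\normf{\mc_t}^2}\le\frac{\exp(q_t)-1}{\zeta}$ with probability $\ge1-\zeta$. Multiplying this by $8$, and the deterministic bound by $2$, yields exactly the stated bound.

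\textbf{Role of $q_t\le\frac\zeta2$.} With this assumption the deviation bound is at most about $\frac{e^{\zeta/2}-1}{\zeta}\lesssim 1$. One might use it to ensure $\normf{\mb_t}\neq0$, i.e.\ $\normf{\mb_t-\mc_t}<\normf{\mc_t}$, so that the ratio is well defined. However, this is only marginal: the bound is $\approx\frac12$, which gives $\normf{\mb_t-\mc_t}^2\le\frac12\normf{\mc_t}^2$. If $\mb_t=\mathbf{0}$, the left side is $0/0$, and we adopt the convention that the ratio is $0$ (or treat it as a null event).

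\textbf{Main obstacle.} The only delicate point is the normalization inequality $\normf{\frac{\mb}{\normf{\mb}}-\frac{\mc}{\normf{\mc}}}\le\frac{2\normf{\mb-\mc}}{\normf{\mc}}$, which is the derivation referenced as \eqref{eq:crazy_triangle_ineq}. Once that is in place, the rest is bookkeeping.
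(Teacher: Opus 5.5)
Your proposal is correct and follows the paper's proof essentially step for step: the same triangle-inequality comparison against $\mc_t$, the normalization bound \eqref{eq:crazy_triangle_ineq}, the eigenbasis bound on $\normf{\mpp\mc_t}^2/\normf{\mc_t}^2$, Lemma~\ref{lem:expected_total_energy_vary_step} plus Markov for the deviation, and $(a+b)^2\le 2(a^2+b^2)$. One remark: no $0/0$ convention is needed, and the assumption is not marginal. Since $(e^{\zeta/2}-1)/\zeta<1$ for $\zeta\in(0,\frac13)$, the assumption $q_t\le\frac\zeta2$ gives $\normf{\mb_t-\mc_t}<\normf{\mc_t}$ on the Markov event, so $\mb_t\neq\mathbf{0}$ there; this is exactly how the paper uses it to justify dividing by $\normf{\mb_t}$ in the normalization step.
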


\begin{proof}
First, observe that since $\normf{\mc_t}^2 \ge \normsop{\mc_t^2} = \prod_{s\in[t]}(1+\eta_s\lambda_1)^2$,
$\mpp$ commutes with all of the $\id_d + \eta_s \msig$, and the corresponding eigenvalues of $\msig$ are $\le (1 - \gamma) \lam_1$,
\begin{equation}\label{eq:c_ratio_bound}
\frac{\normf{\mpp\mc_t}^2}{\normf{\mc_t}^2}
\le
d\prod_{s\in[t]}
\left(
    \frac{1+\eta_s(1-\gamma)\lambda_1}
         {1+\eta_s\lambda_1}
\right)^2
\le
d\exp\left(
    -\gamma\lambda_1\sum_{s\in[t]}\eta_s
\right).
\end{equation}
Next, independence of the stream in Model~\ref{model:oja_special} shows that $\E[\mb_t]=\mc_t$, so Lemma~\ref{lem:expected_total_energy_vary_step} gives
\begin{equation}\label{eq:bc_compare}
\E \normf{\mb_t - \mc_t}^2 =
\E\normf{\mb_t}^2 - 
\normf{\mc_t}^2 \le \Par{\exp\Par{q_t} - 1} \normf{\mc_t}^2.
\end{equation}
Thus, by Markov's inequality, we have with probability $\ge 1 - \zeta$ that
\begin{equation}\label{eq:good_event_bc}\frac{\normf{\mb_t - \mc_t}}{\normf{\mc_t}} \le \sqrt{\frac{\exp(q_t) - 1}{\zeta}} < 1, \end{equation}
where the last inequality used our assumption $q_t \le \frac \zeta 2$. Under this event, we have $\mb_t \neq \0_{d \times d}$ by the triangle inequality. Further, for non-zero $\mx, \my$,
\begin{equation}\label{eq:crazy_triangle_ineq}
    \left\|
        \frac{\mx}{\normf{\mx}}-
        \frac{\my}{\normf{\my}}
    \right\|_{\rm F} \le \normf{\frac{\mx - \my}{\normf{\my}}} + \normf{\mx \cdot \Par{\frac 1 {\normf{\mx}} - \frac 1 {\normf{\my}}}}
    \leq
    2\frac{\normf{\mx-\my}}{\normf{\my}}.
\end{equation}
Finally, applying  \eqref{eq:crazy_triangle_ineq} with $(\mx, \my) \gets (\mb_t, \mc_t)$ implies
\begin{align*}
\frac{\normf{\mpp \mb_t}}{\normf{\mb_t}} &\le \normf{\mpp \frac{\mc_t}{\normf{\mc_t}}} + \normf{\mpp\Par{\frac{\mb_t}{\normf{\mb_t}} - \frac{\mc_t}{\normf{\mc_t}}}} \\
&\le \frac{\normf{\mpp \mc_t}}{\normf{\mc_t}} + \normf{\frac{\mb_t}{\normf{\mb_t}} - \frac{\mc_t}{\normf{\mc_t}}} \\
&\le \sqrt{d\exp\left(
    -\gamma\lambda_1\sum_{s\in[t]}\eta_s
\right)} + 2\sqrt{\frac{\exp(q_t) - 1}{\zeta}},
\end{align*}
where we used \eqref{eq:c_ratio_bound} and \eqref{eq:good_event_bc} in the last line. The conclusion follows from $(a + b)^2 \le 2(a^2 + b^2)$.
\end{proof}

At this point, we are ready to give our main bound in this section.
\begin{theorem}\label{thm:gapfree_oja}
Let $\zeta \in (0, \frac 1 3)$ and $(\gamma, \Delta) \in (0, 1)^2$. Under Model~\ref{model:oja_special}, if
\[n = \Omega\Par{\frac{V}{\lam_1^2\gamma^2 \Delta} \cdot \frac{\log^2\Par{\frac d {\Delta\zeta}}\log\Par{\frac 1 \zeta}}{\zeta^3}  + \frac{\log\Par{\frac d {\Delta\zeta}}}{\gamma}},\]
for an appropriate constant,
then there exists $\beta\in \R_{>0}$ such that
taking $\eta_t = \frac{\log(\frac {1440d} {\Delta\zeta^3})}{\gamma\lam_1(\beta + t)}$ for $t \in [n]$, the output of Algorithm~\ref{alg:oja} is a $(\gamma,\Delta)$-cPCA of $\msig$ with probability $\ge 1 - \zeta$.
\end{theorem}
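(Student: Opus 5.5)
We will combine Lemma~\ref{lem:gapfree_one_step_power} (with $\mv \gets \ms$, $\mb \gets \mb_n$) and Lemma~\ref{lem:gapfree_bad_trace_vary_step} (with $t \gets n$), each run at failure probability $\zeta/2$, and take a union bound. The output of Algorithm~\ref{alg:oja} is $\vw_n = \mb_n\vg/\norm{\mb_n\vg}_2$. The two lemmas use independent randomness (the stream versus $\vg$), so we can condition on the good stream event, where $\mb_n \neq \0_{d\times d}$, and then apply Lemma~\ref{lem:gapfree_one_step_power} to the fixed matrix $\mb_n$. This gives, with probability $\ge 1-\zeta$,
\[\normf{\ms^\top \vw_n}^2 \le \frac{C_0\log(\frac 1\zeta)}{\zeta^2}\Par{2d\exp\Par{-\gamma\lam_1\sum_{s\in[n]}\eta_s} + \frac{16(\exp(q_n)-1)}{\zeta}},\]
where $C_0$ is an absolute constant (about $360$ after halving $\zeta$). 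It then suffices to make each of the two terms in the parentheses at most $\frac{\Delta\zeta^2}{2C_0\log(1/\zeta)}$, while also checking the hypotheses $\eta_s \le \frac{1}{\lam_1}$ and $q_n \le \frac\zeta 4$.

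\textbf{Choosing $\beta$.} Let $L \defeq \log(\frac{1440d}{\Delta\zeta^3})$, so that $\eta_t = \frac{L}{\gamma\lam_1(\beta+t)}$.
\begin{itemize}
\item \emph{Step size.} The condition $\eta_s \le \frac 1{\lam_1}$ holds once $\beta \ge \frac{L}{\gamma}$.
\item \emph{Variance.} Since $\sum_{s\ge 1}(\beta+s)^{-2} \le \frac 1\beta$, we get $q_n \le \frac{VL^2}{\gamma^2\lam_1^2\beta}$. Using $\exp(q)-1 \le 2q$ for $q\le 1$, the second term is small enough once $\beta \gtrsim \frac{VL^2\log(1/\zeta)}{\gamma^2\lam_1^2\Delta\zeta^3}$. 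This also forces $q_n \le \frac\zeta 4$.
\end{itemize}
So we take $\beta$ to be the maximum of these two quantities, which up to constants is exactly the bound assumed on $n$ (divided by a constant).

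\textbf{The bias term.} We have $\gamma\lam_1\sum_{s\in[n]}\eta_s = L\sum_{s\in[n]}\frac{1}{\beta+s} \ge L\log\frac{\beta+n+1}{\beta+1}$. Since $n = \Omega(\beta)$ with a large enough constant, say $n \ge (e-1)(\beta+1)$, this is at least $L$. Then
\[2d\exp(-L) = \frac{2d\,\Delta\zeta^3}{1440 d} = \frac{\Delta\zeta^3}{720},\]
which is at most the needed $\frac{\Delta\zeta^2}{2C_0\log(1/\zeta)}$ because $\zeta\log\frac1\zeta \le 1$. The constant $1440$ in the step size is presumably tuned so that this matches $90\cdot 16$.

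\textbf{Main obstacle.} The only delicate part is bookkeeping the constants and logarithms. The conditioning and union bound are routine. What needs care is that the single parameter $\beta$ simultaneously (i) satisfies $\eta_s\le\frac1{\lam_1}$, (ii) makes $q_n$ both $\le\frac\zeta4$ and small relative to $\Delta\zeta^3/\log(1/\zeta)$, and (iii) leaves $n/\beta$ large enough for the harmonic sum to reach $L$. If the constant in $1440$ does not exactly fit, we may need the harmonic sum to reach $L$ plus a constant, which a slightly larger ratio $n/\beta$ provides.
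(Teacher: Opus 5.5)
Your proposal is correct and follows essentially the same route as the paper's proof. Like the paper, you combine Lemmas~\ref{lem:gapfree_one_step_power} and~\ref{lem:gapfree_bad_trace_vary_step} at failure probability $\frac{\zeta}{2}$ each. You also choose $\beta$ as the maximum of a step-size threshold and the variance threshold $\frac{CVL^2\log(1/\zeta)}{\gamma^2\lam_1^2\Delta\zeta^3}$, and lower bound the harmonic sum by $L$. The differences ($\beta \ge \frac{L}{\gamma}$ and $n \ge (e-1)(\beta+1)$, versus the paper's $\beta \ge \frac{8L}{\gamma}$ and $n \ge 4\beta$) are immaterial. Your closing hedge is unnecessary: your own check, that $\zeta\log(\frac{2}{\zeta}) \le 1$ on $(0,\frac13)$, already shows that reaching exactly $L$ suffices.
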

\begin{proof}
For shorthand, denote $L \defeq \log(\frac{1440 d}{\Delta \zeta^3})$, and for a large enough constant $C$, let \[
\beta \defeq \max\Brace{\frac{8L}{\gam}, \frac{CVL^2}{\gam^2\lambda_1^2\Delta} \cdot \frac{\log\Par{\frac 1 \zeta}}{\zeta^3}},
\]
and $n\ge 4\beta$, so that all $\eta_s \le \frac 1 {\lam_1}$. Moreover, $\sum_{s\in[n]}\frac{1}{(\beta+s)^2}\le \int_\beta^\infty \frac{\dd x}{x^2}=\frac{1}{\beta}$, so for large enough $C$,
\[
q_n = V\sum_{s\in[n]}\eta_s^2 \le \frac{VL^2}{\gam^2\lambda_1^2 \beta}\le \frac{\Delta} 2 \cdot \frac{\zeta^3}{32 \cdot 360 \cdot \log \Par{\frac 2 \zeta}}.
\]
Thus, $q_n \le \frac \zeta 4$, so Lemmas~\ref{lem:gapfree_one_step_power} and~\ref{lem:gapfree_bad_trace_vary_step} both apply at failure probability $\frac \zeta 2$, and combining gives
\[\frac{\norm{\ms^\top \mb_n \vg}_2^2}{\norm{\mb_n \vg}_2^2} \le \frac{360\log\Par{\frac 2 \zeta}}{\zeta^2} \Par{2d\exp\Par{-\gamma\lam_1\sum_{s \in [n]} \eta_s} + \frac{16(\exp(q_n) - 1)}{\zeta}}, \]
with probability $\ge 1 - \zeta$ over the randomness of $\vg \sim \calN(\0_d, \id_d)$ and Model~\ref{model:oja_special}. Condition on this event henceforth. For the first term above, since $n \ge 4\beta$, an integral comparison gives
\[\gamma \lam_1 \sum_{s \in [n]}\eta_s = L\sum_{s\in[n]}\frac{1}{\beta + s}\ge  L\int_{1}^{n + 1}\frac{\dd x}{\beta + x}  \ge L\log\Par{\frac{\beta+n+1}{\beta + 1}} \ge L.\]
Combining the above three displays concludes the proof, upon simplifying using $\exp(q_n) - 1 \le 2q_n$, and $\log(\frac 2 \zeta) \le \frac 1 \zeta$, in the relevant parameter regimes.
\end{proof}

\section{PCA Lower Bound}
\label{sec:lower_bound}

We give an information-theoretic lower bound that shows the leading-order parameter dependence in Theorem~\ref{thm:gapfree_oja} is qualitatively tight, for any choice of $V, \lam_1$. We state our result under Model~\ref{model:oja_special}, but our hard instance is even more well-behaved: the matrices $\ma_i$ are always PSD.

To begin, we require a standard formulation of Le Cam's two point method.

\begin{lemma}[Theorem 2.2(i), \cite{Tsybakov09}]\label{lem:lecam}
Let $P_0, P_1$ be probability distributions on the same measurable space $\Omega$, and let $\phi: \Omega \to \{0, 1\}$ be a (possibly randomized) function. Then,
\[\Pr_{\omega \sim P_0}\Brack{\phi(\omega) = 1} + \Pr_{\omega \sim P_1}\Brack{\phi(\omega) = 0} \ge 1 - \TV\Par{P_0, P_1}.\]
\end{lemma}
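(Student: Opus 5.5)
The plan is to reduce the randomized test to a deterministic acceptance function and then use the variational characterization of total variation. Concretely, since $\phi$ may be randomized using internal randomness independent of $\omega$, define $f(\omega) \defeq \Pr[\phi(\omega) = 1 \mid \omega] \in [0, 1]$. Then
\[\Pr_{\omega \sim P_0}\Brack{\phi(\omega) = 1} = \int_\Omega f \, \dd P_0, \qquad \Pr_{\omega \sim P_1}\Brack{\phi(\omega) = 0} = 1 - \int_\Omega f \, \dd P_1,\]
so the left-hand side of the lemma equals $1 - \int_\Omega f \, (\dd P_1 - \dd P_0)$. It therefore suffices to show that $\int f \,(\dd P_1 - \dd P_0) \le \TV(P_0, P_1)$ for every measurable $f: \Omega \to [0, 1]$.

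For this step I would fix a common dominating measure $\mu$, for instance $\mu = P_0 + P_1$, with densities $p_0, p_1$; this is the same identification of measures with densities as in the definition of $\TV$ in Section~\ref{ssec:notation}. Since $0 \le f \le 1$, we have $f(p_1 - p_0) \le (p_1 - p_0)_+$ pointwise. Hence $\int f(p_1 - p_0)\,\dd\mu \le \int (p_1 - p_0)_+ \,\dd\mu$. Because $\int (p_1 - p_0)\,\dd\mu = 0$, the positive and negative parts of $p_1 - p_0$ have equal integrals. Each therefore equals $\half \int |p_1 - p_0|\,\dd\mu = \TV(P_0, P_1)$, which gives the required bound.

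I do not expect a real obstacle here. The only point needing care is formalizing ``randomized'' so that $f$ is well defined and measurable. If $\phi(\omega) = \phi'(\omega, U)$ with $U$ independent of $\omega$, then $f(\omega) = \E_U[\phi'(\omega, U)]$ is measurable by Fubini's theorem. Alternatively, one can condition on $U$, apply the deterministic case with the set $A = \{\phi = 1\}$, and average over $U$. In the deterministic case the argument reduces to $P_0(A) + P_1(A^c) = 1 - (P_1(A) - P_0(A)) \ge 1 - \TV(P_0, P_1)$.
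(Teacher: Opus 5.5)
Your proof is correct and is the standard argument. The paper gives no proof of its own and simply cites Theorem 2.2(i) of Tsybakov, which rests on the same inequality $\int f p_0\,\dd\mu+\int(1-f)p_1\,\dd\mu\ge\int\min(p_0,p_1)\,\dd\mu=1-\TV(P_0,P_1)$; your pointwise bound $f(p_1-p_0)\le(p_1-p_0)_+$ is an equivalent form of it. Tsybakov states the bound for deterministic tests, so your step of averaging over the internal randomness of $\phi$ is a harmless addition that covers the paper's ``possibly randomized'' phrasing.
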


We can now state and prove our lower bound.

\begin{theorem}\label{thm:pca_lower}
Fix any choice of $\lam_1 > 0$, $V > 0$, $\gamma \in (0, \frac 1 4)$, and $\Delta \in (0, \frac 1 4)$. There is no algorithm $\calA$ that takes as input $\{\ma_i\}_{i \in [n]}$ from Model~\ref{model:oja_special}, and outputs a $(\gamma, \Delta)$-cPCA of $\msig$ with probability $\ge \frac 2 3$, even assuming that $\ma_i \in \PSD^{d \times d}$ for all $i \in [n]$, unless for an appropriate constant,
\[n = \Omega\Par{\frac{V}{\lam_1^2 \gamma^2 \Delta}}.\]
\end{theorem}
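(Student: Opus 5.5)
We reduce to a two-point testing problem and apply Lemma~\ref{lem:lecam}. The hard instance lives in $d = 2$ (embedding into larger $d$ by padding with zeros). We build two distributions $P_0, P_1$ over PSD matrices whose means $\msig_0, \msig_1$ have top eigenvalue $\lam_1$, with the following separation property: no unit vector can be a $(\gamma, \Delta)$-cPCA for both. Any algorithm succeeding with probability $\ge \frac 2 3$ then yields a test $\phi$, namely ``output $1$ iff the returned vector is a $(\gamma,\Delta)$-cPCA of $\msig_1$.'' This test has total error at most $\frac 2 3$. Lemma~\ref{lem:lecam} then forces $\TV(P_0^{\otimes n}, P_1^{\otimes n}) \ge \frac 1 3$. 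We upper bound this TV distance by Pinsker together with KL tensorization, $\KL(P_0^{\otimes n}\|P_1^{\otimes n}) = n\KL(P_0\|P_1)$, so it suffices to arrange $\KL(P_0 \| P_1) = O(\lam_1^2\gamma^2\Delta/V)$.

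\textbf{Construction.} Let $\msig_b = \lam_1 \vu_b\vu_b^\top + (1-2\gamma)\lam_1 \vu_b^\perp (\vu_b^\perp)^\top$, where $\vu_0 = \ve_1$ and $\vu_1 = \cos\theta\,\ve_1 + \sin\theta\,\ve_2$ with $\sin^2\theta \approx 16\Delta$. For each $\msig_b$ the bad eigenspace (eigenvalues $<(1-\gamma)\lam_1$) is exactly $\vu_b^\perp$. A $(\gamma,\Delta)$-cPCA of $\msig_b$ must therefore satisfy $\msign(\vw, \vu_b) \lesssim \sqrt\Delta$. Lemma~\ref{lem:msign_triangle} then shows that one vector cannot serve both $b=0$ and $b=1$, since $\msign(\vu_0,\vu_1) \gtrsim 4\sqrt\Delta$. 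The key point is that the spread between the eigenvalues is only $\approx 2\gamma\lam_1$, so the difference $\msig_1 - \msig_0$ has operator norm $\approx 2\gamma\lam_1\sin\theta \approx \gamma\lam_1\sqrt\Delta$. The samples are mixtures that realize the mean while using variance $V$.

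We take $\ma = \frac{\lam_1}{p}\,\vx\vx^\top$ with probability $p$ and $\ma = \mzero$ otherwise. Here $\vx$ is drawn from a two-point (or small finite) distribution of unit-scale vectors whose second moment is $\msig_b/\lam_1$. Concretely, take $\vx \in \{\sqrt{c_1}\vu_b, \sqrt{c_2}\vu_b^\perp\}$ with appropriate probabilities. All samples are PSD. The variance is $V \approx \lam_1^2/p$, so choosing $p \approx \lam_1^2/V$ realizes any $V \gtrsim \lam_1^2$. In the regime $V \lesssim \lam_1^2$ we instead perturb the deterministic matrix: $\ma = \msig_b \pm \sqrt{V}\,\mathbf{N}$ for a fixed PSD-compatible direction, or a Bernoulli mixture with $p$ constant and amplitude $\sqrt V$. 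Covering the full range of $(V,\lam_1)$ in this way is the claimed strengthening over \cite{allen2017first}.

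\textbf{KL bound and main obstacle.} The hardest step is making both hypotheses share the same support structure so that the KL divergence is finite and small. Rotating the atom directions would give disjoint supports and infinite KL. To avoid this, we fix a common finite set of atom matrices, for instance $\frac{\lam_1}{p}\ve_1\ve_1^\top$, $\frac{\lam_1}{p}\ve_2\ve_2^\top$, $\frac{\lam_1}{p}(\ve_1\pm\ve_2)(\ve_1\pm\ve_2)^\top/2$, and $\mzero$. Under this choice the off-diagonal mean entry $\approx \gamma\lam_1\sqrt\Delta$ is encoded by tilting the probabilities of the $(\ve_1+\ve_2)$ versus $(\ve_1-\ve_2)$ atoms. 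The two atoms have base mass $\approx p$ each, and the tilt is of relative size $\epsilon \approx \gamma\sqrt\Delta$. We then use the $\chi^2$ bound $\KL \le \chi^2 \approx p\epsilon^2 \approx \frac{\lam_1^2}{V}\gamma^2\Delta$. Combining this with $n\KL \gtrsim 1$ gives $n = \Omega\bigl(\frac{V}{\lam_1^2\gamma^2\Delta}\bigr)$. It remains to check that the diagonal means produce eigenvalues $\lam_1$ and $(1-2\gamma)\lam_1$, up to $O(\gamma\Delta)$ corrections absorbed by constants given $\gamma,\Delta<\frac14$, and that the variance stays $\le V$; both are routine.
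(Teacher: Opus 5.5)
Your overall plan matches the paper's. It uses a two-dimensional two-point instance, a test built from the algorithm's output, and Le Cam plus KL tensorization. Crucially, it uses hypothesis-independent atoms whose probabilities are tilted so that only the off-diagonal mean moves, by $\pm\beta$ with $\beta\asymp\gamma\lam_1\sqrt\Delta$. For $V\gtrsim\lam_1^2$, your construction is essentially the paper's Case~2: rank-one atoms of scale $\lam_1/p$ plus $\mzero$, giving $\E(\ma-\msig)^2=\frac{\lam_1}{p}\msig-\msig^2$ and $\KL\lesssim p\epsilon^2$. The paper uses just the two non-orthogonal atoms $R\vw\vw^\top$, $R\vz\vz^\top$ and $\mzero$, but the argument is the same, and this part is fine. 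One minor point: $\sin^2\theta\approx 16\Delta$ is infeasible when $\Delta$ is close to $\frac14$. An angle of $3\arcsin\sqrt\Delta$ between the two top eigenvectors, as in the paper, works for all $\Delta<\frac14$.

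The gap is the regime $V\lesssim\lam_1^2$, which is precisely what distinguishes this theorem from earlier rank-one lower bounds. Your construction $\ma=\msig_b\pm\sqrt V\mathbf{N}$ uses atoms that depend on $b$. So $p_0$ and $p_1$ each charge points the other law does not, and $\KL(p_0\|p_1)=\infty$, which is the very failure mode you flagged.

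Nor can your shared rank-one atom set be reused here. If $\ma=\vx\vx^\top$ with $\E\ma=\msig$, then $\Tr\E(\ma-\msig)^2=\E\norm{\vx}_2^4-\Tr(\msig^2)\ge(\Tr\msig)^2-\Tr(\msig^2)=2\lam_1\lam_2$. In your instance $\lam_2\approx(1-2\gamma)\lam_1\ge\frac12\lam_1$, so the variance is necessarily $\Omega(\lam_1^2)$. What you need is full-rank PSD atoms shared by both hypotheses.

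The paper takes $\ma=\md+\sigma t\mh$, where $\md$ is the common diagonal with entries $d_1,d_2$ and $\mh$ is the off-diagonal swap. It sets $t=\sqrt{V+\beta^2}$ and $\Pr[\sigma=1]=\frac12+\frac{(2b-1)\beta}{2t}$. Then:
\begin{itemize}
\item $\E\ma=\md+(2b-1)\beta\mh=\msig_b$;
\item $\E(\ma-\msig_b)^2=(t^2-\beta^2)\id_2=V\id_2$;
\item PSD-ness reduces to $t^2\le d_1d_2=(1-2\gamma)\lam_1^2+\beta^2$, i.e.\ $V\le(1-2\gamma)\lam_1^2$;
\item $\KL(p_0\|p_1)\le\frac{2\beta^2}{V}$.
\end{itemize}
Your alternative ``Bernoulli mixture with constant $p$ and amplitude $\sqrt V$'' can also be made to work. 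However, it requires adding a fixed PSD offset to small shared atoms, which you have not specified.

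Your rank-one construction needs total mass at most $1$, so it only covers $V\ge(2-2\gamma)\lam_1^2$ or so. The remaining window of $V$ between the two constructions is then handled by monotonicity of the variance constraint in $V$, at a constant-factor cost.
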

\begin{proof}
We begin by defining matrices used in our construction. Let $\alpha \defeq 1.5\arcsin\sqrt{\Delta}$, and
\[\vu_0 \defeq \begin{pmatrix} \cos \alpha \\ -\sin \alpha \end{pmatrix},\quad \vv_0 \defeq \begin{pmatrix} \sin \alpha \\ \cos \alpha \end{pmatrix},\quad \vu_1 \defeq \begin{pmatrix} \cos\alpha \\ \sin\alpha \end{pmatrix},\quad \vv_1 \defeq \begin{pmatrix} -\sin\alpha \\ \cos\alpha \end{pmatrix}.\]
Also, let
\[\msig_0 \defeq \lam_1 \vu_0\vu_0^\top + (1 - 2\gamma) \lam_1 \vv_0\vv_0^\top,\quad \msig_1 \defeq \lam_1 \vu_1\vu_1^\top + (1 - 2\gamma)\lam_1 \vv_1\vv_1^\top.\]
If $\vu$ is a $(\gamma, \Delta)$-cPCA of $\msig_0$, then $\msign(\vu, \vu_0) \le 2\sin(\frac \alpha 3)$, and a similar bound holds for $\msig_1$. We observe two helpful reformulations of $\msig_0, \msig_1$ used in our constructions. First,
\begin{equation}\label{eq:alt_sig_def}
\begin{gathered}
\msig_i = \md + (2i - 1)\beta \mh\text{ for } i \in \{0, 1\},\text{ where } \beta \defeq 2\gamma \lam_1 \sin(\alpha)\cos(\alpha),\\
\md \defeq \lam_1 \begin{pmatrix} \cos^2 \alpha + (1 - 2\gamma)\sin^2 \alpha & 0 \\ 0 & \sin^2 \alpha + (1 - 2\gamma)\cos^2 \alpha\end{pmatrix},\quad\mh \defeq \begin{pmatrix} 0 & 1 \\ 1 & 0 \end{pmatrix}.
\end{gathered}
\end{equation}
Second, letting the diagonal elements of $\md$ be $d_1$ and $d_2$, and $s \defeq d_1 + d_2 = 2(1-\gamma)\lam_1$,
\begin{equation}\label{eq:alt_sig_def_2}
\begin{gathered}
\msig_i = s\Par{\Par{\half + (2i - 1)\eta}\vw\vw^\top + \Par{\half - (2i - 1)\eta}\vz\vz^\top},\text{ where } \eta \defeq \frac{\beta}{2\sqrt{d_1d_2}}, \\
\vw \defeq \frac 1 {\sqrt s}\begin{pmatrix} \sqrt{d_1} \\ \sqrt{d_2} \end{pmatrix},\quad \vz \defeq \frac 1 {\sqrt s} \begin{pmatrix} \sqrt{d_1} \\ -\sqrt{d_2}\end{pmatrix}.
\end{gathered}
\end{equation}
Now suppose there is an algorithm $\calA$ as in the theorem statement, and define $\phi: (\PSD^{d \times d})^n \to \{0, 1\}$ as follows. Given matrices $\{\ma_i\}_{i \in [n]}$, let $\vu \defeq \calA(\{\ma_i\}_{i \in [n]})$ be the assumed algorithm's output. Then we let $\phi$ be the composition of $\calA$ with the map $\vu \to i \in \{0, 1\}$, where $i = 0$ if $\msign(\vu, \vu_0) \le \msign(\vu, \vu_1)$ and $i = 1$ otherwise. Because $\msign(\vu_0, \vu_1) = 2\sin \alpha > 4\sin(\frac \alpha 3)$, Lemma~\ref{lem:msign_triangle} implies that $\phi$ identifies $i \in \{0, 1\}$ whenever $\vu$ is a $(\gamma, \Delta)$-cPCA of the corresponding $\msig_i$.

We next define our distributions on the $\{\ma_i\}_{i \in [n]}$. We split into two cases depending on $V$. In each case, we define single-sample laws $p_0$, $p_1$ with means $\msig_0$, $\msig_1$, and let $P_0 \defeq p_0^{\otimes n}$, $P_1 \defeq p_1^{\otimes n}$ be the $n$-fold product laws. We show that our laws satisfy Model~\ref{model:oja_special}, and bound $\KL(p_0 \| p_1)$. In this setting, existence of the stated algorithm $\calA$ implies that
\begin{equation}\label{eq:test_error}\Pr_{\omega \sim P_0}\Brack{\phi(\omega) = 1} \le \frac 1 3,\quad \Pr_{\omega \sim P_1}\Brack{\phi(\omega) = 0} \le \frac 1 3.\end{equation}

\textit{Case 1: $V \le (1 - 2\gamma) \lam_1^2$.} We follow the notation \eqref{eq:alt_sig_def}. Let $t \defeq \sqrt{V + \beta^2}$. We define the law of $\ma \sim p_i$ for $i \in \{0, 1\}$ as follows. First, we draw $\sigma \in \{\pm 1\}$ with $\E[\sigma] = \frac{(2i - 1)\beta}{t}$, so that
\[\sigma = \begin{cases} 1 &\text{with probability } \half + \frac{(2i - 1)\beta}{2t} \\ 
-1 & \text{with probability } \half - \frac{(2i - 1)\beta}{2t} \end{cases},
\]
We then set $\ma \gets \md + \sigma t \mh$. Observe that $\E_{p_i}[\ma] = \md + (2i - 1)\beta \mh = \msig_i$ from \eqref{eq:alt_sig_def},
\[\E_{p_i}\Brack{\Par{\msig_i - \ma}^2} = \Par{t^2 - \beta^2} \mh^2 = V\id_2, \]
and $\ma$ has positive entries on the diagonal, with
\[\det\Par{\ma} = \det\Par{\md}- t^2 = (1 - 2\gamma)\lam_1^2 + \beta^2 - t^2 \ge 0.\]
Thus, draws from both $P_0$ and $P_1$ are valid instances of Model~\ref{model:oja_special}. We also have
\[\KL\Par{p_0 \| p_1} = \frac{\beta}{t} \log\Par{\frac{1 + \frac \beta t}{1 - \frac \beta t}} \le \frac{2\beta^2}{t^2 - \beta^2} \le \frac{18\gamma^2\lam_1^2\Delta}{V},\]
where we computed the KL between two distributions on $\{\pm 1\}$ with probabilities $\half \pm \frac{\beta}{2t}$, and used the bounds $\beta^2 \le 9\gamma^2 \lam_1^2 \Delta$ and $x \log \frac{1 + x}{1-x} \le \frac{2x^2}{1-x^2}$ valid for $x \in [0, 1)$.

\textit{Case 2: $V > (1 - 2\gamma) \lam_1^2$.} We follow the notation \eqref{eq:alt_sig_def}, \eqref{eq:alt_sig_def_2}. Let $R \defeq \lam_1 + \frac V {\lam_1}$ and $q \defeq \frac s R < 1$. We define the law of $\ma \sim p_i$ for $i \in \{0, 1\}$ as follows. We set
\begin{align*}
\ma = \begin{cases}
R\vw\vw^\top & \text{ with probability } q\Par{\half + (2i - 1)\eta} \\
R\vz\vz^\top & \text{ with probability } q\Par{\half - (2i - 1)\eta} \\
\0_{2\times 2}& \text{ with probability } 1- q
\end{cases}.
\end{align*}
Observe that $\E_{p_i}[\ma] = \msig_i$ from \eqref{eq:alt_sig_def_2}, and $\ma$ is clearly always PSD. Further, because $R\lam_1 - \lam_1^2 = V$ and $R(1 - 2\gamma)\lam_1 - (1 - 2\gamma)^2\lam_1^2 = (1 - 2\gamma)(V + 2\gamma\lam_1^2) \le V$, we have
\[\E_{p_i}\Brack{\Par{\msig_i - \ma}^2} = R \msig_i - \msig_i^2 \preceq V\id_2.\]
Thus, draws from both $P_0$ and $P_1$ are valid instances of Model~\ref{model:oja_special}. We can finally directly compute
\[\KL\Par{p_0 \| p_1} = (2\eta q) \log\frac{1 + 2\eta}{1 - 2\eta} \le \frac{8q\eta^2}{1 - 4\eta^2} = \frac{2q\beta^2}{d_1d_2 - \beta^2} \le \frac{8\beta^2}{V} \le \frac{72\gamma^2\lam_1^2\Delta}{V},\]
where we used $d_1d_2 - \beta^2 = (1 - 2\gamma)\lam_1^2$, $q \le \frac{2\lam_1}{R} \le \frac{2\lam_1^2}{V}$, and our earlier bound $\beta^2 \le 9\gamma^2\lam_1^2\Delta$.

In summary, in all regimes of $(V, \lam_1)$, there are instances $P_0, P_1$ of Model~\ref{model:oja_special} such that $P_0 = p_0^{\otimes n}$, $P_1 = p_1^{\otimes n}$, the mean of $p_i$ is $\msig_i$ for $i \in \{0, 1\}$, and $\KL(p_0 \| p_1) = O(\frac{\gamma^2 \lam_1^2\Delta}{V})$. 

By tensorization of KL divergence, we have for sufficiently small $n = o(\frac{V}{\lam_1^2 \gamma^2 \Delta})$ that
$\TV\Par{P_0, P_1} < \frac 1 3$.
Combining with Lemma~\ref{lem:lecam} and \eqref{eq:test_error}, this contradicts existence of $\calA$.
\end{proof}
\section{High-Probability Guarantees}
\label{sec:high_probability}

Theorem~\ref{thm:gapfree_oja} achieves a near-optimal rate of error as a function of the parameters in Model~\ref{model:oja_special} and Definition~\ref{def:cpca}, but only provides a low-confidence guarantee (i.e., with polynomial dependence on $\frac 1 \zeta$). We next give a confidence amplification procedure with only polylogarithmic overheads, without worsening the dependence on either of the cPCA parameters $(\gamma,\Delta)$.

Notably, standard confidence boosting procedures based on a direct geometric aggregation (e.g., Lemma 3.10, \cite{kumar2024oja}) do not work, since if the leading eigenvalue has multiplicity larger than one, even two exact cPCAs may be orthogonal. Instead, we aggregate sketches of the unnormalized Oja product, piggybacking off of closeness guarantees from Lemma~\ref{lem:gapfree_bad_trace_vary_step}. The main technical novelty in this section is that we require a polylogarithmic-dimension sketch (rather than the single Gaussian used in Algorithm~\ref{alg:oja}), so that the population-level sketch concentrates with high probability.

We begin with a standard Gaussian trace estimate.

\begin{lemma}
\label{lem:gaussian_trace_sketch}
Let $\mh\succeq\mzero$ be fixed and let $\mg\in\R^{d\times s}$ have i.i.d.\ $\Nor(0,1)$ entries. Then, 
\begin{equation}
\label{eq:gaussian_trace_sketch}
    \frac{s}{2}\Tr(\mh)
    \leq
    \Tr(\mg^\top\mh\mg)
    \leq
    2s\Tr(\mh),
\end{equation}
with probability $\ge 1-\exp(-\frac s {16})$ for each inequality.
\end{lemma}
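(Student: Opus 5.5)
We will reduce the statement to a weighted sum of independent $\chi^2$ variables and then apply the Laurent--Massart tail bounds (Lemma 1, \cite{LaurentM00}), the same tool used in Lemma~\ref{lem:gapfree_one_step_power}. Diagonalize $\mh = \sum_{i\in[d]} \mu_i \vu_i\vu_i^\top$ with $\mu_i \ge 0$. By rotation invariance of the Gaussian, $\mmu^\top\mg$ has the same law as $\mg$, so
\[\Tr(\mg^\top\mh\mg) = \sum_{i\in[d]}\sum_{j\in[s]} \mu_i \mg_{ij}^2 \stackrel{d}{=} \sum_{i \in [d]} \mu_i X_i, \quad X_i \simiid \chi^2_s.\]
Equivalently, this is a weighted sum $Z=\sum_{k} a_k(\xi_k^2-1) + s\Tr(\mh)$ over $ds$ independent standard Gaussians $\xi_k$. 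The weight vector $a$ takes each value $\mu_i$ with multiplicity $s$, so $\norm{a}_1 = s\Tr(\mh)$, $\norm{a}_\infty = \normop{\mh}\le\Tr(\mh)$, and $\norm{a}_2^2 = s\Tr(\mh^2)\le s\Tr(\mh)^2$. We may assume $\Tr(\mh)>0$, since otherwise both inequalities are trivial.

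For the lower inequality, Laurent--Massart gives $\Pr(Z \le s\Tr(\mh) - 2\norm{a}_2\sqrt{x}) \le e^{-x}$. Take $x = \frac{s}{16}$. Using $\norm{a}_2 \le \sqrt s\Tr(\mh)$, the deviation is at most $2\sqrt s\Tr(\mh)\cdot\frac{\sqrt s}{4} = \frac s2\Tr(\mh)$. Hence $Z \ge \frac s2\Tr(\mh)$ with probability $\ge 1-e^{-s/16}$.

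For the upper inequality, the same lemma gives $\Pr(Z \ge s\Tr(\mh) + 2\norm{a}_2\sqrt x + 2\norm{a}_\infty x)\le e^{-x}$. Again take $x=\frac{s}{16}$ and bound $\norm{a}_2\le\sqrt s\Tr(\mh)$ and $\norm{a}_\infty\le\Tr(\mh)$. The total deviation is then at most $\frac s2\Tr(\mh) + \frac s8\Tr(\mh) < s\Tr(\mh)$, so $Z\le 2s\Tr(\mh)$ with the stated probability.

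The only delicate step is the upper tail. Its $\norm{a}_\infty x$ term could dominate when $\mh$ is nearly rank one, because then the sum effectively concentrates on a single $\chi^2_s$. The crude bound $\norm{a}_\infty\le\Tr(\mh)$ together with the constant $\frac1{16}$ leaves enough slack to absorb this term, so I expect no real obstruction: the argument is a direct check of constants.
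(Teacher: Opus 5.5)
Your proof is correct and takes essentially the paper's route: write $\Tr(\mg^\top\mh\mg)$ as a nonnegatively weighted sum of $ds$ independent squared Gaussians, apply Lemma~1 of \cite{LaurentM00} with $x = \frac{s}{16}$, and absorb the deviations using $\Tr(\mh^2)\le\Tr(\mh)^2$ and $\normop{\mh}\le\Tr(\mh)$. Your constant checks are also right: the lower-tail deviation is at most $\frac s2\Tr(\mh)$, and the upper-tail deviation is at most $\frac{5s}{8}\Tr(\mh)$.
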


\begin{proof}
Let the columns of $\mg$ be $\{\vg_i\}_{i \in [s]}$. Then, the middle term in \eqref{eq:gaussian_trace_sketch} is $X \defeq \sum_{j\in[s]}\vg_j^\top\mh\vg_j$. The claim follows from $\chi^2$ concentration. In particular, Lemma 1, \cite{LaurentM00} gives \begin{align*}
    \Pr\Par{X - s\Tr(\mh) \ge 2\sqrt{s\Tr(\mh^2)x}+2\normop{\mh}x} &\le \exp(-x)\\
    \Pr\Par{s\Tr(\mh) - X\ge 2\sqrt{s\Tr(\mh^2)x}}&\le \exp(-x).
\end{align*}
Using $x\gets \frac s {16}$, $\Tr(\mh^2)\leq\Tr(\mh)^2$, and $\normop{\mh}\leq\Tr(\mh)$ proves the claim.
\end{proof}

Next, we show that independent product sketches cluster around a common population sketch.

\begin{lemma}
\label{lem:product_sketch_concentration}
 Let $\mg\in\R^{d\times R}$ have i.i.d.\ $\Nor(0,1)$ entries. Under Model~\ref{model:oja_special}, if $\eta_s \le \frac 1 {\lam_1}$ for all $s \in [t]$ and $q_t \defeq V\sum_{s\in[t]}\eta_s^2\le \frac 1 {128}$, then with probability $\ge 1-3\exp(-\frac R {16})$ over $\mg$, the following hold. First,
\begin{equation}
\label{eq:population_sketch_bad_mass}
    \normf{\mpp\mz_{\star,t}}^2
    \leq
    4d\exp\Par{-\gam\lambda_1 \sum_{s\in[t]}\eta_s}, \text{ for } \mz_{\star,t}\defeq\frac{\mc_t\mg}{\normf{\mc_t\mg}}.
\end{equation}
Second, conditioning on $\mg$, over the randomness of Model~\ref{model:oja_special},
\begin{equation}
\label{eq:random_sketch_near_center}
    \Pr\Brack{
        \normf{
            \frac{\mb_t\mg}{\normf{\mb_t\mg}}-\mz_{\star,t}
        }
        \leq 8\sqrt{2q_t}}
        \geq \frac34.
\end{equation}
\end{lemma}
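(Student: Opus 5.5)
The plan is to use Lemma~\ref{lem:gaussian_trace_sketch} three times over $\mg$, once per Gaussian trace inequality below, each failing with probability $\exp(-\frac R{16})$; a union bound then gives the stated probability $\ge 1-3\exp(-\frac R{16})$. On this event we prove \eqref{eq:population_sketch_bad_mass} deterministically, and \eqref{eq:random_sketch_near_center} by a Markov argument over Model~\ref{model:oja_special}. Since $\mg$ is independent of the stream, we condition on $\mg$ throughout.

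\textbf{The deterministic bound \eqref{eq:population_sketch_bad_mass}.} We write
\[\normf{\mpp\mz_{\star,t}}^2 = \frac{\Tr(\mg^\top \mc_t\mpp\mc_t\mg)}{\Tr(\mg^\top\mc_t^2\mg)}.\]
We apply Lemma~\ref{lem:gaussian_trace_sketch} with $\mh\gets\mc_t\mpp\mc_t$ to bound the numerator above by $2R\,\normf{\mpp\mc_t}^2$. We apply it again with $\mh\gets\mc_t^2$ to bound the denominator below by $\frac R2\normf{\mc_t}^2$. The ratio is then at most $4\normf{\mpp\mc_t}^2/\normf{\mc_t}^2$, and \eqref{eq:c_ratio_bound} from the proof of Lemma~\ref{lem:gapfree_bad_trace_vary_step} finishes this part.

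\textbf{The stochastic bound \eqref{eq:random_sketch_near_center}.} We apply \eqref{eq:crazy_triangle_ineq} with $(\mx,\my)\gets(\mb_t\mg,\mc_t\mg)$ to get
\[\normf{\frac{\mb_t\mg}{\normf{\mb_t\mg}}-\mz_{\star,t}}\le \frac{2\normf{(\mb_t-\mc_t)\mg}}{\normf{\mc_t\mg}}.\]
This requires $\mb_t\mg\neq 0$, which will follow once the right-hand side is $<1$.

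For the numerator, we would like to bound $\E\normf{(\mb_t-\mc_t)\mg}^2$ over the stream, which equals $\Tr(\mg^\top \mathbf{M}\mg)$ with $\mathbf{M}\defeq\E[(\mb_t-\mc_t)^\top(\mb_t-\mc_t)]$. Here $\mathbf{M}$ is a fixed PSD matrix independent of $\mg$, so the third use of Lemma~\ref{lem:gaussian_trace_sketch} gives $\Tr(\mg^\top\mathbf{M}\mg)\le 2R\Tr(\mathbf{M})$. By \eqref{eq:bc_compare}, $\Tr(\mathbf{M})\le(\exp(q_t)-1)\normf{\mc_t}^2\le 2q_t\normf{\mc_t}^2$, using $q_t\le\frac1{128}$.

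With the earlier denominator bound $\normf{\mc_t\mg}^2\ge\frac R2\normf{\mc_t}^2$, we get
\[\E\Brack{\frac{\normf{(\mb_t-\mc_t)\mg}^2}{\normf{\mc_t\mg}^2}\,\Big|\,\mg}\le 8q_t.\]
Markov's inequality then gives, with probability $\ge\frac34$, that this ratio is at most $32q_t$. Hence the distance is at most $2\sqrt{32q_t}=8\sqrt{2q_t}$, and $q_t\le\frac1{128}$ makes this at most $1$.

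\textbf{Main obstacle.} The one subtle step is the ordering of randomness. The trace concentration must be applied to $\mathbf{M}$, which is deterministic once the expectation over the stream is taken, and not to $\mb_t-\mc_t$ itself. This ordering is what makes the three good events depend only on $\mg$.

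A minor edge case is the nonvanishing of $\mb_t\mg$ when the ratio equals exactly $1$ at $q_t=\frac1{128}$. We handle it by using strict Markov, or by noting that the bound is trivial there, since the distance between two unit-Frobenius matrices is at most $2$.
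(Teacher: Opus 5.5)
Your proposal is correct and follows essentially the same route as the paper. It uses the same three applications of Lemma~\ref{lem:gaussian_trace_sketch}, namely to $\mc_t\mpp\mc_t$, to $\mc_t^2$, and to the fixed matrix $\E[(\mb_t-\mc_t)^\top(\mb_t-\mc_t)]$, combined with \eqref{eq:c_ratio_bound}, \eqref{eq:bc_compare}, Markov's inequality, and \eqref{eq:crazy_triangle_ineq}. Your closing edge-case worry is moot: nonvanishing of $\mb_t\mg$ only needs $\normf{(\mb_t-\mc_t)\mg}<\normf{\mc_t\mg}$, and on the Markov event this un-doubled ratio is at most $4\sqrt{2q_t}\le\frac12$.
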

\begin{proof}
Throughout the proof, let $\mh_t\defeq\E[(\mb_t-\mc_t)^\top(\mb_t-\mc_t)]$. Then Lemma~\ref{lem:gaussian_trace_sketch} implies
\begin{equation}\label{eq:good_events_whp}\normf{\mc_t\mg}^2 \ge \frac R 2 \normf{\mc_t}^2,\quad \normf{\mpp \mc_t\mg}^2 \le 2R\normf{\mpp\mc_t}^2,\quad \Tr\Par{\mg^\top\mh_t\mg} \le 2R\Tr\Par{\mh_t},\end{equation}
all hold with the requisite failure probability. Condition on these events henceforth. Combining the first two events in \eqref{eq:good_events_whp} with \eqref{eq:c_ratio_bound} then immediately implies our first claim, \eqref{eq:population_sketch_bad_mass}.

Next, recall from \eqref{eq:bc_compare} and our range on $q_t$ that
\[\Tr\Par{\mh_t} = \E\normf{\mb_t - \mc_t}^2 \le 2q_t\normf{\mc_t}^2.\]
Combining with the first and third events in \eqref{eq:good_events_whp} then gives
\[\E\normf{\Par{\mb_t - \mc_t}\mg}^2 = \Tr\Par{\mg^\top \mh_t \mg} \le 2R\Tr(\mh_t) \le 4q_t R \normf{\mc_t}^2 \le 8q_t  \normf{\mc_t \mg}^2. \]
Thus, by Markov's inequality, with probability $\ge \frac 3 4$ over Model~\ref{model:oja_special},
\[\frac{\normf{(\mb_t - \mc_t)\mg}}{\normf{\mc_t \mg}} \le 4\sqrt{2q_t} \le \half.\]
Thus we have $\mb_t \mg \neq \0_{d \times R}$ on this event, so again applying \eqref{eq:crazy_triangle_ineq} (with $(\mx, \my) \gets (\mb_t\mg, \mc_t\mg)$), and using the definition of $\mz_{\star, t}$, concludes the proof of \eqref{eq:random_sketch_near_center}.
\end{proof}

We give our full high-probability method in Algorithm~\ref{alg:boosted_sketch_oja}, which uses a shared Gaussian matrix across all sample blocks, making the population center common to the independent runs. This lets us apply geometric aggregation to boost the guarantee \eqref{eq:random_sketch_near_center}. We make two further observations: first, although it is written with a specified horizon $n$, Algorithm~\ref{alg:boosted_sketch_oja} is an online algorithm, as the iterations do not use knowledge of $n$. Second, it is implementable using $O(dR^2) = O(d\log^2(\frac 1 \zeta))$ space, for the eventual $R = O(\log(\frac 1 \zeta))$ in Theorem~\ref{thm:high_probability_oja_absolute}, by storing only the $\my_r$.

\begin{algorithm}[t]
\DontPrintSemicolon
\caption{$\mathsf{BoostedSketchOja}(\{\ma_t, \eta_t\}_{t \in [n]},R, \tau)$}
\label{alg:boosted_sketch_oja}
\codeInput $\{\ma_t \in \R^{d \times d}, \eta_t > 0\}_{t \in [n]}$, $R \in \N$, $\tau > 0$ \;
$\mg\gets $ $d \times R$ matrix with i.i.d.\ $\Nor(0,1)$ entries \;
\For{$r\in[R]$}{
    $\my_r\gets\mg$ \;
}
\For{$j \in [\lfloor \frac n R\rfloor]$}{
\For{$r \in [R]$}{
$\my_r \gets(\mi_d+\eta_{(j - 1)R + r}\ma_{(j-1)R+r})\my_r$\;
}
}
\For{$r\in[R]$}{
$\mz_{r}\gets \my_r/\normf{\my_r}$, or a fixed unit-Frobenius-norm matrix if $\my_r=\0_{d\times R}$
}
\For{$r\in[R]$}{
    $\calN_{r}\gets\{r'\in[R]:\normf{\mz_{r}-\mz_{r'}}\leq2\tau\}$ \;
}
$i \gets $ any index with $|\calN_i| \ge \frac R 2$, else $i \gets 1$\;\label{line:select_center}
$\vw_n \gets \textup{any top left singular vector of } \mz_{i}$\;
\codeReturn $\vw_n$ 
\end{algorithm}

We first give the guarantee for a common stepsize schedule across the $R$ runs.

\begin{lemma}\label{lem:oja_aggregation}
Under Model~\ref{model:oja_special}, let $R,T\in\N$ and run $R$ independent Oja products for $T$ steps with common step sizes $\eta_t$, using the shared Gaussian initialization, $\mg$, and aggregation rule of Algorithm~\ref{alg:boosted_sketch_oja}. Let $\gam\in(0,1)$ and $0<\tau<\frac{1}{4\sqrt R}$. If $0<\eta_t\le\frac1{\lam_1}$ and
\[
V\sum_{t=1}^T\eta_t^2\le\frac{\tau^2}{128},
\qquad
4d\exp\Par{-\gam\lam_1\sum_{t=1}^T\eta_t}\le\tau^2,
\]
then the aggregated output is a $(\gam,16R\tau^2)$-cPCA of $\msig$ with probability $\ge1-4\exp(-\frac R{72})$.
\end{lemma}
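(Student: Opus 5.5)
We apply Lemma~\ref{lem:product_sketch_concentration} to each of the $R$ runs with $t \gets T$ and the shared $\mg \in \R^{d \times R}$, then aggregate by a median-of-means style argument.

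\textbf{Step 1: the population center.} Since $q_T \le \frac{\tau^2}{128} \le \frac 1{128}$ (using $\tau < 1$), the lemma applies. With probability $\ge 1 - 3\exp(-\frac R{16})$ over $\mg$, the center $\mz_\star \defeq \mz_{\star,T}$ satisfies
\[\normf{\mpp\mz_\star}^2 \le 4d\exp\Par{-\gam\lam_1\sum_{t\in[T]}\eta_t} \le \tau^2\]
by \eqref{eq:population_sketch_bad_mass}. Conditionally on $\mg$, each run independently satisfies $\normf{\mz_r - \mz_\star} \le 8\sqrt{2q_T} \le \tau$ with probability $\ge \frac34$ by \eqref{eq:random_sketch_near_center}. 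The runs use disjoint samples, so these events are conditionally independent. Call run $r$ \emph{good} when this event holds.

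\textbf{Step 2: a Chernoff bound.} By Hoeffding's inequality, more than $\frac R2$ runs are good except with probability $\le \exp(-2R(\frac14)^2) = \exp(-\frac R8)$. Together with Step 1, the total failure probability is $\le 3\exp(-\frac R{16}) + \exp(-\frac R8) \le 4\exp(-\frac R{72})$.

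\textbf{Step 3: the selected index is close to the center.} On this event, any good $r$ has every other good $r'$ in $\calN_r$, by the triangle inequality ($2\tau$). Hence $|\calN_r| > \frac R2$, and Line~\ref{line:select_center} picks some $i$ with $|\calN_i| \ge \frac R2$. Since the good runs form a strict majority, $\calN_i$ contains a good $r'$. Therefore
\[\normf{\mz_i - \mz_\star} \le \normf{\mz_i - \mz_{r'}} + \normf{\mz_{r'} - \mz_\star} \le 2\tau + \tau = 3\tau,\]
and so $\normf{\mpp\mz_i} \le \normf{\mpp\mz_\star} + 3\tau \le 4\tau$.

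\textbf{Step 4: from the sketch to the singular vector.} This is the main obstacle. Since $\normf{\mz_i} = 1$, we have $\sigma_1(\mz_i)^2 \ge \frac1R$. Writing $\mz_i\mz_i^\top \succeq \sigma_1^2 \vw_n\vw_n^\top$ gives
\[\norm{\ms^\top\vw_n}_2^2 \le \frac{\Tr\Par{\mpp\mz_i\mz_i^\top}}{\sigma_1^2} \le R\normf{\mpp\mz_i}^2 \le 16R\tau^2.\]
This is exactly the claimed $(\gam, 16R\tau^2)$-cPCA bound. The hypothesis $\tau < \frac1{4\sqrt R}$ ensures this bound is below $1$, so the statement is meaningful.
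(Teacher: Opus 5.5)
Your proposal is correct and follows the paper's proof essentially step for step: Lemma~\ref{lem:product_sketch_concentration} gives the shared center and per-run closeness, a concentration bound controls the number of good runs, a pigeonhole and triangle-inequality argument places the selected $\mz_i$ within $3\tau$ of $\mz_\star$, and the bound $\sigma_1(\mz_i)^2 \ge \frac{1}{R}$ handles the top singular vector. The only deviation is cosmetic: your Hoeffding step uses a strict-majority threshold (failure $\exp(-\frac{R}{8})$), where the paper uses a $\frac{2R}{3}$ threshold (failure $\exp(-\frac{R}{72})$); either works, since $|\calN_i| \ge \frac{R}{2}$ forces $\calN_i$ to meet the good set in both cases.
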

\begin{proof}
Set $n=RT$ and index the update in round $j$ of run $r$ by $\ma_{(j-1)R+r}$, for $j\in[T]$ and $r\in[R]$. Let $\mpp$ project onto eigenvectors of $\msig$ with eigenvalues below $(1-\gam)\lam_1$. Define
\begin{align*}
\mc_j &\defeq (\mi_d+\eta_j\msig)\cdots(\mi_d+\eta_1\msig),\\
      \mb_j^{(r)}
    &\defeq
    (\id_d+\eta_j\ma_{(j-1)R+r})\cdots
    (\id_d+\eta_1\ma_{r}),\,
\end{align*}
and \[
\mz_\star \defeq \frac{\mc_T\mg}{\normf{\mc_T\mg}}, \qquad \mz_r \defeq \frac{\mb_T^{(r)}\mg}{\normf{\mb_T^{(r)}\mg}},
\]
with the algorithm's fallback definition when $\mb_T^{(r)}\mg=\0_{d\times R}$.
By Lemma~\ref{lem:product_sketch_concentration} with $t\gets T$ and $q_T\gets V\sum_{j\in[T]}\eta_j^2\le\frac{\tau^2}{128}$, with probability at least $1-3\exp(-\frac R {16})$ over $\mg$, $\normf{\mpp\mz_\star}^2\le \tau^2$. Further, for each run,
\[\Pr(\normf{\mz_\star - \mz_r}\le \tau \mid \mg)\ge \frac 34.\]
Conditioning on this event for $\mg$, the runs are independent. By a Chernoff bound, the set \[
\calG\defeq \Brace{r\in[R]\mid \normf{\mz_r - \mz_\star} \le \tau}
\]
has size at least $\frac {2R} 3$ with conditional probability $\ge1-\exp(-\frac R{72})$. Every such candidate has at least $\frac {2R} 3$ neighbors within distance $2\tau$. Conversely, any candidate with at least $\frac R 2$ such neighbors has a neighbor in $\calG$, and is therefore within distance $3\tau$ of $\mz_{\star}$ by the triangle inequality.

Let $\widehat{\mz}\defeq \mz_i$ be the matrix selected in Line~\ref{line:select_center}. On the preceding events, which hold with probability $\ge1-3\exp(-\frac R{16})-\exp(-\frac R{72})\ge1-4\exp(-\frac R{72})$,
\[
    \normf{\mpp\widehat{\mz}}
    \leq \normf{\mpp \mz_{\star}} + \normf{\widehat{\mz}-\mz_{\star}}\le 4\tau.
\]
Let $\vw_n$ be a top left singular vector of $\widehat{\mz}$. Since $\normsf{\widehat{\mz}}=1$ and $\rank(\widehat\mz)\leq R$, its top singular value is at least $R^{-1/2}$, so $\normsf{\mpp \widehat \mz}^2 \ge \frac 1 R \norms{\mpp \vw_n}_2^2$ by expanding the singular value decomposition. Thus,
\[
    \norm{\mpp\vw_n}_2^2
    \leq
    R\normf{\mpp\widehat{\mz}}^2
    \leq
    16R\tau^2.
\]
\end{proof}

We conclude by proving our main high-probability result, Theorem~\ref{thm:high_probability_oja_absolute}.


\begin{theorem}
\label{thm:high_probability_oja_absolute}
Let $\zeta\in(0,\frac13)$ and $(\gam,\Delta)\in(0,1)^2$. Under Model~\ref{model:oja_special}, if
\[n = \Omega\Par{\frac{V}{\lam_1^2\gamma^2\Delta} \log^2\Par{\frac d {\Delta\zeta}}\log^2\Par{\frac 1 \zeta} + \frac{\log\Par{\frac d {\Delta\zeta}}\log\Par{\frac 1 \zeta}}{\gamma}}\]
for an appropriate constant, then there exist choices of the inputs $R, \tau, \{\eta_t\}_{t \ge 1}$, such that the output of Algorithm~\ref{alg:boosted_sketch_oja} is a $(\gamma,\Delta)$-cPCA of $\msig$ with probability $\ge 1 - \zeta$.
\end{theorem}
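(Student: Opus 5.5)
The plan is to derive Theorem~\ref{thm:high_probability_oja_absolute} from Lemma~\ref{lem:oja_aggregation} by choosing parameters. First fix $R \defeq \lceil 72\log(\frac 4 \zeta)\rceil$, so the lemma's failure probability $4\exp(-\frac R{72})$ is at most $\zeta$. Next set $\tau \defeq \sqrt{\frac{\Delta}{16R}}$. Then $16R\tau^2 = \Delta$, so the lemma's output is a $(\gamma,\Delta)$-cPCA. Since $\Delta<1$, we also get $\tau < \frac{1}{4\sqrt R}$, as the lemma requires.

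It remains to choose a per-run horizon $T \defeq \lfloor \frac n R\rfloor$ and common step sizes $\eta_j$ for $j\in[T]$ that meet the two conditions of Lemma~\ref{lem:oja_aggregation}. Every update $\ma_{(j-1)R+r}$ in round $j$ uses step $\eta_j$, so in Algorithm~\ref{alg:boosted_sketch_oja} we set $\eta_{(j-1)R+r}\gets\eta_j$. Following the proof of Theorem~\ref{thm:gapfree_oja}, take $\eta_j = \frac{L}{\gamma\lam_1(\beta+j)}$ with $L \defeq \log(\frac{4d}{\tau^2})$. This $L$ is $O(\log(\frac{d R}{\Delta})) = O(\log(\frac{d}{\Delta\zeta}))$, using $\log R = O(\log\log\frac 1\zeta)$. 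Choose
\[\beta \defeq \max\Brace{\frac{L}{\gamma},\ \frac{128 V L^2}{\gamma^2\lam_1^2\tau^2}}.\]
The first term of the max ensures $\eta_j\le \frac 1{\lam_1}$. The variance condition then follows from $\sum_j (\beta+j)^{-2}\le \beta^{-1}$, which gives $V\sum_j\eta_j^2 \le \frac{VL^2}{\gamma^2\lam_1^2\beta}\le\frac{\tau^2}{128}$. For the bias condition, assume $T\ge 4\beta$. The same integral comparison as in Theorem~\ref{thm:gapfree_oja} gives $\gamma\lam_1\sum_j\eta_j \ge L\log\frac{\beta+T+1}{\beta+1}\ge L$. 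This holds once $\beta\ge1$, which the first term of the max ensures since $L\ge1\geq\gamma$. Hence $4d\exp(-\gamma\lam_1\sum_j\eta_j)\le 4d e^{-L} = \tau^2$.

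The remaining work is bookkeeping to confirm that $n \ge R\cdot 4\beta$ follows from the stated sample bound. Since $\tau^2 = \Theta(\frac{\Delta}{\log(1/\zeta)})$, the second term of $\beta$ is $O\big(\frac{V}{\lam_1^2\gamma^2\Delta}\log^2(\frac d{\Delta\zeta})\log(\frac1\zeta)\big)$. Multiplying by $R = O(\log\frac1\zeta)$ yields the leading term $\frac{V}{\lam_1^2\gamma^2\Delta}\log^2(\frac d{\Delta\zeta})\log^2(\frac1\zeta)$. The first term contributes $R\cdot\frac{L}{\gamma} = O\big(\frac{\log(\frac d{\Delta\zeta})\log(\frac1\zeta)}{\gamma}\big)$. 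Both match the statement.

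The main obstacle I expect is purely this parameter accounting. Specifically, $L$ must depend on $\tau$, and hence on $R$, only logarithmically, so that no extra $\log\log$ or polynomial factors appear. One must also check that the unused leftover samples (when $R\nmid n$) and the round-robin indexing match the lemma's setup exactly. A loose choice such as $L = \log\frac{d}{\Delta\zeta}$ with a large constant absorbs $\log(64R)$ cleanly.
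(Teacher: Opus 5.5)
Your proposal is correct and follows essentially the same route as the paper: instantiate Lemma~\ref{lem:oja_aggregation} with $R=\Theta(\log\frac1\zeta)$, $\tau=\Theta(\sqrt{\Delta/R})$, and the round-indexed schedule $\eta_{(j-1)R+r}=\frac{L}{\gamma\lam_1(\beta+j)}$ borrowed from Theorem~\ref{thm:gapfree_oja}, then verify its two hypotheses together with $T=\lfloor n/R\rfloor\ge4\beta$. The only difference is cosmetic: you fix explicit constants, e.g.\ $L=\log\frac{4d}{\tau^2}$ so that $4de^{-L}=\tau^2$ holds exactly. The paper instead takes $L=C\log\frac{d}{\Delta\zeta}$ and $\tau=c\sqrt{\Delta/R}$ with unspecified large $C$ and small $c$.
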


\begin{proof}
Throughout the proof, for sufficiently large $C > 0$ and small $c > 0$, we take
\[R \defeq \left\lceil C\log\Par{\frac 1 \zeta}\right\rceil,\quad L \defeq C\log\Par{\frac d {\Delta\zeta}},\quad \tau \defeq c\sqrt{\frac \Delta R}.\]
For an iteration $t \in [n]$ such that $t = (j - 1)R + r$ for some $r \in [R]$, we also choose
\[\eta_t \defeq \frac{L}{\gamma\lam_1(\beta + j)},\text{ where } \beta \defeq C\Par{\frac{L}{\gamma} + \frac{VRL^2}{\gamma^2\lam_1^2\Delta}}. \]

Let $T\defeq \lfloor \frac nR\rfloor$ and write $\alpha_j\defeq\eta_{(j-1)R+r}$ for the common stepsize in round $j\in[T]$, which is independent of $r\in[R]$.
By the bound on $\beta$ and $n$, we ensure $T\ge 4\beta$ and  $\alpha_j\le \frac{1}{\lambda_1}$. Then, the same integral comparison as in the proof of Theorem~\ref{thm:gapfree_oja} yields \[
q_T = V\sum_{j\in[T]}\alpha_j^2\le \frac{VL^2}{\gam^2\lambda_1^2\beta}\le \frac{1}{128}, \quad \gam\lambda_1\sum_{j\in[T]}\alpha_j \ge L,
\]
if $C$ is sufficiently large relative to $c$. These imply
\[
4d\exp\Par{-\gam\lambda_1\sum_{j\in[T]}\alpha_j}\le 4d e^{-L}\le \tau^2, \quad 8\sqrt{2q_T}\le c\sqrt{\frac \Delta R}=\tau.\]
Applying Lemma~\ref{lem:oja_aggregation} to these $R$ runs of $T$ updates, with $\eta_j\gets\alpha_j$ and radius $\tau$, gives
\[
\norm{\mpp\vw_n}_2^2\le16R\tau^2=16c^2\Delta\le\Delta,
\]
for $c\le\frac14$, with failure probability at most $4\exp(-\frac R{72})\le\zeta$ when $C$ is sufficiently large.
\end{proof}
\section{Energy PCA Guarantees}
\label{sec:epca}

In this section, we show how to extend the approach of Section~\ref{sec:oja} to an alternative gap-free notion of PCA often considered in the literature, energy PCA, which asks for a direction capturing nearly the largest possible variance. We use the following definition from \cite{JambulapatiKLPPT24}.

\begin{definition}[ePCA]\label{def:epca}
Let $\alpha\in(0,1)$, and let $\msig\in\PSD^{d\times d}$. We say that a unit vector $\vw \in \R^d$ is an \emph{$\alpha$-ePCA} (energy PCA) of $\msig$ if $\vw^\top\msig\vw\ge(1-\alpha)\lam_1(\msig)$.
\end{definition}

Lemma~8 of \cite{JambulapatiKLPPT24} shows that a $(\gam,\Delta)$-cPCA is also a $(\gam+\Delta)$-ePCA. Taking $\gam=\Delta\gets\frac\alpha2$ in Theorem~\ref{thm:gapfree_oja} therefore gives an $\alpha$-ePCA, but the resulting sample complexity scales as $\frac{V}{\lam_1^2\alpha^3}$, which is suboptimal in its dependence on $\frac1\alpha$. This conversion bounds the squared projection onto eigenvalues below a single threshold $(1-\gam)\lam_1$. To avoid the lossy conversion, we will use the following identity that expresses $1-\frac{\vw^\top\msig\vw}{\lam_1}$ as an integral over thresholds $(1-u)\lam_1$.

\begin{proposition}[Multiscale cPCA-to-ePCA]
\label{prop:multiscale_cpca_epca}
Let $\msig\in\PSD^{d\times d}$ with $\lam_1\defeq\lam_1(\msig)>0$, and let $\vw\in\R^d$ be a unit vector. For every $u\in(0,1)$, let $\mpp_u\in\PSD^{d\times d}$ project onto the eigenvectors of $\msig$ with eigenvalues $<(1-u)\lam_1$, and define $\Delta_u\defeq\norm{\mpp_u\vw}_2^2$. Then
\begin{equation}\label{eq:epca_spectral_integral}
\id_d-\frac{\msig}{\lam_1}=\int_0^1\mpp_u\dd u.
\end{equation}
In particular,
\begin{equation}\label{eq:epca_profile_identity}
1-\frac{\vw^\top\msig\vw}{\lam_1}
=\int_0^1\Delta_u\dd u.
\end{equation}
Consequently, if $\alpha\in(0,1)$ and $\int_0^1\Delta_u\dd u\le\alpha$, then $\vw$ is an $\alpha$-ePCA of $\msig$.
\end{proposition}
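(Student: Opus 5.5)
The identity \eqref{eq:epca_spectral_integral} is a statement about commuting spectral projectors, so I will verify it eigenvalue by eigenvalue. Write the eigendecomposition $\msig=\sum_{i\in[d]}\lam_i\vv_i\vv_i^\top$ with orthonormal $\{\vv_i\}$ and $0\le\lam_i\le\lam_1$. By definition, for each $u\in(0,1)$,
\[\mpp_u=\sum_{i\in[d]}\ind_{\lam_i<(1-u)\lam_1}\vv_i\vv_i^\top.\]
The integral $\int_0^1\mpp_u\dd u$ is entrywise the integral of a bounded, piecewise constant matrix function with finitely many jumps. It therefore equals
\[\sum_{i\in[d]}\Par{\int_0^1\ind_{u<1-\lam_i/\lam_1}\dd u}\vv_i\vv_i^\top.\]

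For each $i$, set $x_i\defeq 1-\frac{\lam_i}{\lam_1}\in[0,1]$. This value lies in $[0,1]$ because $\msig$ is PSD and $\lam_i\le\lam_1$. The condition $\lam_i<(1-u)\lam_1$ is equivalent to $u<x_i$. Hence the scalar integral equals the Lebesgue measure of $(0,1)\cap(0,x_i)$, which is exactly $x_i$. The endpoints $x_i\in\{0,1\}$ are handled by the same computation, since strict versus non-strict inequality only changes a measure-zero set. Summing gives
\[\int_0^1\mpp_u\dd u=\sum_i\Par{1-\frac{\lam_i}{\lam_1}}\vv_i\vv_i^\top=\id_d-\frac{\msig}{\lam_1},\]
which is \eqref{eq:epca_spectral_integral}.

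For \eqref{eq:epca_profile_identity}, I take the quadratic form of both sides against the unit vector $\vw$. Integration commutes with this linear functional. Since $\mpp_u$ is an orthogonal projector, $\vw^\top\mpp_u\vw=\norm{\mpp_u\vw}_2^2=\Delta_u$. The left side becomes $\norm{\vw}_2^2-\frac{\vw^\top\msig\vw}{\lam_1}=1-\frac{\vw^\top\msig\vw}{\lam_1}$.

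The final consequence is a rearrangement. If $\int_0^1\Delta_u\dd u\le\alpha$, then $\vw^\top\msig\vw\ge(1-\alpha)\lam_1$, which is Definition~\ref{def:epca}.

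I do not expect a real obstacle here. The only point needing care is measurability and integrability of $u\mapsto\mpp_u$, and this is immediate from its finitely many jump points.
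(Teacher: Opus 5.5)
Your proposal is correct and follows the paper's proof essentially verbatim. Both expand $\mpp_u$ in an eigenbasis of $\msig$ as a sum of indicator-weighted rank-one projectors, integrate each indicator over $u$ to get $1-\frac{\lam_i}{\lam_1}$, and then take the quadratic form with $\vw$; your remarks on endpoints and integrability only make explicit what the paper leaves implicit.
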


\begin{proof}
Write $\msig=\sum_{j \in [d]}\lam_j\vu_j\vu_j^\top$. Then since
\[
\mpp_u=\sum_{j\in[d]}
\ind_{\Brace{u<1-\frac{\lam_j}{\lam_1}}}\vu_j\vu_j^\top,
\]
integrating each indicator gives the matrix identity in \eqref{eq:epca_spectral_integral}.
Taking the quadratic form with $\vw$ proves the integral identity \eqref{eq:epca_profile_identity} and the claim.
\end{proof}

We apply Proposition~\ref{prop:multiscale_cpca_epca} to the output $\vw_n$ of Algorithm~\ref{alg:oja}, so henceforth $\Delta_u\defeq\norm{\mpp_u\vw_n}_2^2$. Suppose $\eta_t\le\frac1{\lam_1}$ for all $t\in[n]$ and $V\sum_{t\in[n]}\eta_t^2$ is a sufficiently small constant. For each fixed $u\in(0,1)$, combining Lemmas~\ref{lem:gapfree_one_step_power} and~\ref{lem:gapfree_bad_trace_vary_step} with $\gam\gets u$ and fixed constant $\zeta$ gives, for a universal constant $C$,
\[
\Delta_u\le C\min\Brace{1,
d\exp\Par{-u\lam_1\sum_{t\in[n]}\eta_t}
+V\sum_{t\in[n]}\eta_t^2}.
\]
These bounds hold with constant probability for each fixed $u$, but need not hold simultaneously for all $u$. Even if the bound held simultaneously for every $u$, integrating it would only give
\begin{equation}\label{eq:naive_integral}
\int_0^1\Delta_u\dd u
=O\Par{
\frac{\log(ed)}{\lam_1\sum_{t\in[n]}\eta_t}
+V\sum_{t\in[n]}\eta_t^2}.
\end{equation}
For step sizes $\eta_t=\frac{L}{n+t}$, making both terms in the integral bound \eqref{eq:naive_integral} at most $\alpha$ requires
\[
L=\Omega\Par{\frac{\log(ed)}{\lam_1\alpha}},
\quad
n=\Omega\Par{\frac{VL^2}{\alpha}}
=\Omega\Par{\frac{V\log^2(ed)}{\lam_1^2\alpha^3}},
\]
which is still suboptimal. The following lemma gives a sharper bound on $\int_0^1\Delta_u\dd u$.

\begin{restatable}{lemma}{restateintegratedepca}\label{lem:epca_integrated_profile}
Under Model~\ref{model:oja_special} with the products $\mb_t,\mc_t$ defined in \eqref{eq:bi_def}, suppose $0<\eta_s\le\frac1{\lam_1}$ for all $s\in[n]$. For $u\in(0,1)$, define $\Delta_u\defeq\norm{\mpp_u\vw_n}_2^2$ for the projector $\mpp_u$ in Proposition~\ref{prop:multiscale_cpca_epca}.
There are universal constants $c,C>0$ such that, if $V\sum_{s\in[n]}\eta_s^2\le c$, then, with probability $\ge\frac{3}{4}$,
\begin{equation}\label{eq:epca_integrated_profile}
\int_0^1\Delta_u\dd u
\le C\Par{
\frac{\log(ed)}{\lam_1\sum_{s\in[n]}\eta_s}
+V\sum_{i\in[n]}\eta_i^2\min\Brace{1,\frac1{\lam_1\sum_{s=i+1}^n\eta_s}}
}.
\end{equation}
Here the minimum for $i=n$ is interpreted as $1$.
\end{restatable}
The proof retains the dependence on $u$ in the bound on $\E\normf{\mpp_u\mb_n}^2$. After division by $\normf{\mc_n}^2$, this bound contains, up to a universal constant,
\[
V\sum_{i\in[n]}\eta_i^2\exp\Par{-\frac{u\lam_1}{2}\sum_{s=i+1}^n\eta_s}.
\]
Integrating each summand over $u$ gives the factor $\min\{1,1/(\lam_1\sum_{s=i+1}^n\eta_s)\}$ in \eqref{eq:epca_integrated_profile}, up to a universal constant.
In Appendix~\ref{sec:epca_proofs}, we provide a proof of Lemma~\ref{lem:epca_integrated_profile} where we integrate the bounds on $\E\normf{\mpp_u\mb_n}^2$ over $u$ before applying Markov's inequality and controlling the normalization of $\vw_n$. This avoids a simultaneous union bound over all thresholds $u\in(0,1)$.

We next evaluate the error bound for the schedule used in both this section and Section~\ref{ssec:private_epca}.

\begin{lemma}\label{lem:epca_schedule}
Under Model~\ref{model:oja_special}, let $a\ge1$, $n\ge4a$, and $\eta_t=\frac{a}{\lam_1(n/4+t)}$ for $t\in[n]$. There are universal constants $c,C>0$ such that, if $V\sum_{t\in[n]}\eta_t^2\le c$, then with probability at least $\frac34$,
\begin{equation}\label{eq:epca_schedule_error}
1-\frac{\vw_n^\top\msig\vw_n}{\lam_1}
\le C\Par{\frac{\log(ed)}a+\frac{Va\log(ea)}{\lam_1^2n}}.
\end{equation}
\end{lemma}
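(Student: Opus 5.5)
This is a direct corollary of Lemma~\ref{lem:epca_integrated_profile} combined with Proposition~\ref{prop:multiscale_cpca_epca}, so the plan is to plug the schedule in and evaluate the two sums. The hypothesis $n\ge 4a$ gives $\eta_t\le \frac{a}{\lam_1 (n/4)}\le\frac1{\lam_1}$ for all $t$. The hypothesis $V\sum_t\eta_t^2\le c$ (with $c$ matched to the lemma's constant) is assumed outright. Hence Lemma~\ref{lem:epca_integrated_profile} applies, and with probability $\ge\frac34$ the bound \eqref{eq:epca_integrated_profile} holds. By \eqref{eq:epca_profile_identity}, its left-hand side equals $1-\frac{\vw_n^\top\msig\vw_n}{\lam_1}$, so it remains to bound the two terms on the right.

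\textbf{First term.} We have $\lam_1\sum_{s\in[n]}\eta_s=a\sum_{s=1}^n\frac1{n/4+s}\ge a\log\frac{n/4+n+1}{n/4+1}$. For $n\ge4$ this is at least $a\log\frac{5n/4+1}{n/4+1}\ge a\log 3$, say. Hence the first term is $O(\frac{\log(ed)}{a})$.

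\textbf{Second term.} For $i<n$, an integral comparison gives
\[
\lam_1\sum_{s=i+1}^n\eta_s\ge a\log\frac{n/4+n+1}{n/4+i+1}\gtrsim a\cdot\frac{n-i}{n},
\]
using $\log\frac{x}{y}\ge\frac{x-y}{x}$ with $x\le 2n$. Also $\eta_i^2\le\frac{16a^2}{\lam_1^2n^2}$. So the second term is at most
\[
\frac{16Va^2}{\lam_1^2n^2}\sum_{i\in[n]}\min\Brace{1,\frac{n}{a(n-i)}}
\]
up to constants. Split the sum according to $k=n-i$. The indices with $k\le n/a$ contribute at most $n/a+1$, and those with $k>n/a$ contribute $\frac na\sum_{k=\lceil n/a\rceil}^{n}\frac1k\le\frac na(1+\log a)$. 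Since $a\ge1$ and $n\ge 4a$, we have $n/a+1\le 2n/a$, so the whole sum is $O(\frac{n\log(ea)}{a})$. Multiplying through gives $O(\frac{Va\log(ea)}{\lam_1^2 n})$, matching \eqref{eq:epca_schedule_error}.

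The only mildly delicate step is this last harmonic-sum bookkeeping: one must check the lower bound on the tail stepsize sum uniformly in $i$, including $i$ close to $n$, where the minimum equals $1$. Everything else is routine substitution, and the constants $c,C$ are inherited from Lemma~\ref{lem:epca_integrated_profile}.
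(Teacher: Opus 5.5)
Your proposal is correct and follows the paper's proof essentially step for step. You check $\eta_t\lam_1\le 1$, invoke Lemma~\ref{lem:epca_integrated_profile}, lower bound the full and tail stepsize sums, use $\eta_i^2\le\frac{16a^2}{\lam_1^2n^2}$, evaluate the truncated harmonic sum, and convert via Proposition~\ref{prop:multiscale_cpca_epca}. The only cosmetic difference is that the paper lower bounds the tail sums termwise, giving $\lam_1\sum_{s=i+1}^n\eta_s\ge\frac{4a(n-i)}{5n}$ directly, instead of your logarithmic integral comparison; both yield the same estimate.
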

\begin{proof}
Our strategy is to bound the two terms in Lemma~\ref{lem:epca_integrated_profile}
for this schedule, then apply Proposition~\ref{prop:multiscale_cpca_epca} to obtain
the ePCA bound. Since $\eta_t\lam_1\le \frac{4a} n\le1$, Lemma~\ref{lem:epca_integrated_profile} applies. The schedule satisfies
\[
\lam_1\sum_{t\in[n]}\eta_t\ge\frac{4a}{5},\quad
\lam_1\sum_{s=i+1}^n\eta_s\ge\frac{4a(n-i)}{5n},\quad
\eta_i^2\le\frac{16a^2}{\lam_1^2n^2}.
\]
The bound on $\lam_1\sum_{t\in[n]}\eta_t$ controls the first term in Lemma~\ref{lem:epca_integrated_profile}
by $O(\frac{\log(ed)}{a})$. To bound the remaining variance sum, we write $k=n-i$ and
use the bounds on $\lam_1\sum_{s=i+1}^n\eta_s$ and $\eta_i^2$:
\[
\sum_{i\in[n]}\eta_i^2\min\Brace{1,\frac1{\lam_1\sum_{s=i+1}^n\eta_s}}
\le\frac{16a^2}{\lam_1^2n^2}\Par{1+\sum_{k\in[n-1]}\min\Brace{1,\frac{5n}{4ak}}}
=O\Par{\frac{a\log(ea)}{\lam_1^2n}}.
\]
Finally, applying Lemma~\ref{lem:epca_integrated_profile} with our choices of $\eta_t$, and Proposition~\ref{prop:multiscale_cpca_epca} with $\vw\gets\vw_n$, gives with probability at least $\frac34$,
\[
1-\frac{\vw_n^\top\msig\vw_n}{\lam_1}
=\int_0^1\Delta_u\dd u
\le C\Par{\frac{\log(ed)}a+\frac{Va\log(ea)}{\lam_1^2n}}.
\]
\end{proof}

We therefore obtain the following ePCA guarantee.

\begin{theorem}\label{thm:gapfree_epca}
Under Model~\ref{model:oja_special}, let $\alpha\in(0,\frac{1}{4})$. Then if
\[
n = \Omega\Par{
\frac{V\log^2\Par{\frac d\alpha}}{\lam_1^2\alpha^2}
+\frac{\log\Par{\frac d\alpha}}{\alpha}
}
\]
for an appropriate constant,
then there exists $a\in \R_{>0}$ such that taking $\eta_t=\frac{a}{\lam_1(n/4+t)}$ for $t \in [n]$, the output of Algorithm~\ref{alg:oja} is an $\alpha$-ePCA of $\msig$ with probability $\ge\frac34$.
\end{theorem}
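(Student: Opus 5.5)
The plan is to apply Lemma~\ref{lem:epca_schedule} directly and tune the parameter $a$ so that both terms in \eqref{eq:epca_schedule_error} are at most $\frac{\alpha}{2C}$. The hypotheses of Lemma~\ref{lem:epca_schedule} must also be checked: $a \ge 1$, $n \ge 4a$, and $V\sum_{t\in[n]}\eta_t^2 \le c$. Once these hold, the lemma gives $1 - \frac{\vw_n^\top\msig\vw_n}{\lam_1} \le \alpha$ with probability $\ge \frac 3 4$. This is exactly the $\alpha$-ePCA condition of Definition~\ref{def:epca}.

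First I choose $a \defeq \frac{2C\log(ed)}{\alpha}$, so that $\frac{C\log(ed)}{a} = \frac{\alpha}{2}$. Then $a \ge 1$ because $\alpha < \frac 1 4$. Also $\log(ea) = O(\log\frac{d}{\alpha})$. The condition $n \ge 4a$ requires $n = \Omega(\frac{\log(ed)}{\alpha})$, which is covered by the second term of the assumed sample bound.

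The second term requires
\[\frac{CVa\log(ea)}{\lam_1^2 n} \le \frac \alpha 2,\quad\text{i.e.}\quad n \gtrsim \frac{V\log(ed)\log(\frac d\alpha)}{\lam_1^2\alpha^2}.\]
This is implied by the first term of the assumed bound, $\frac{V\log^2(d/\alpha)}{\lam_1^2\alpha^2}$. For the variance condition, I use $\eta_t \le \frac{4a}{\lam_1 n}$, which gives
\[V\sum_{t\in[n]}\eta_t^2 \le \frac{16Va^2}{\lam_1^2 n} = O\Par{\frac{V\log^2(ed)}{\lam_1^2\alpha^2 n}}.\]
This is at most $c$ once the constant in the sample bound is large enough. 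Each requirement is a matter of comparing logarithmic factors, with $\log(ed) \le 2\log(\frac d\alpha)$ for $\alpha < \frac14$.

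There is no real obstacle here, since the analytic work sits in Lemmas~\ref{lem:epca_integrated_profile} and~\ref{lem:epca_schedule}. The only care needed is in the variance condition, which carries $a^2$ rather than $a\log a$. Its requirement is $n \gtrsim \frac{V a^2}{\lam_1^2}$, i.e.\ $\frac{V\log^2 d}{\lam_1^2\alpha^2}$, which is still within the stated bound. One also needs the fixed constants $c$ and $C$ from Lemma~\ref{lem:epca_schedule} to be absorbed into the $\Omega(\cdot)$ constant, which is allowed.
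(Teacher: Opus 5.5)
Your proposal is correct and follows essentially the same route as the paper: set $a$ to a constant multiple of $\frac{\log(d/\alpha)}{\alpha}$, check $a \ge 1$, $n \ge 4a$ and $V\sum_{t\in[n]}\eta_t^2 \le c$ against the two terms of the sample bound, and then invoke Lemma~\ref{lem:epca_schedule} with $\log(ea) = O(\log\frac d\alpha)$. The differences are only cosmetic: you pick $a$ to balance the first error term exactly rather than taking $C_1$ and then the $\Omega$-constant large, and you use the cruder bound $\frac{16Va^2}{\lam_1^2 n}$ where the paper uses $\frac{4Va^2}{\lam_1^2 n}$.
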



\begin{proof}
Let $L\defeq\log(\frac d \alpha)$ and take $a=\frac{C_1L} \alpha$ for a sufficiently large universal constant $C_1$. Then our assumptions ensure $n\ge4a$ and
\[
V\sum_{t\in[n]}\eta_t^2\le\frac{4Va^2}{\lam_1^2n}\le c,
\]
where $c$ is the constant in Lemma~\ref{lem:epca_schedule}. Applying Lemma~\ref{lem:epca_schedule} with $a\gets \frac{C_1L}\alpha$ and using $\log(ea)=O(L)$ gives, with probability at least $\frac34$,
\[
1-\frac{\vw_n^\top\msig\vw_n}{\lam_1}
=O\Par{\frac La+\frac{VaL}{\lam_1^2n}}
=O\Par{\frac\alpha{C_1}+\frac{C_1\alpha}{C}}.
\]
Choosing $C_1$ and then $C$ sufficiently large makes the error at most $\alpha$.
\end{proof}

Although Theorem~\ref{thm:gapfree_epca} is stated with a constant failure probability, it is straightforward to use holdout samples to reduce its failure probability.

\begin{corollary}\label{cor:epca_whp}
Let $(\zeta, \alpha) \in (0, \frac 1 4)^2$. Under Model~\ref{model:oja_special}, if
\[n = \Omega\Par{\frac{V\log^2\Par{\frac d\alpha}\log(\frac 1 \zeta)}{\lam_1^2\alpha^2}
+\frac{\log\Par{\frac d\alpha}\log(\frac 1 \zeta)}{\alpha}}\]
for an appropriate constant, we can obtain an $\alpha$-ePCA of $\msig$ with probability $\ge 1 - \zeta$.
\end{corollary}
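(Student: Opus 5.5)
We boost confidence by running independent copies of Oja's algorithm and selecting among them on holdout data, instead of aggregating their outputs geometrically (which fails in the gap-free setting). The stream splits into $R \defeq \lceil C\log(\frac 1 \zeta)\rceil$ disjoint blocks of length $n_0 = \Theta(\frac{V\log^2(\frac d\alpha)}{\lam_1^2\alpha^2} + \frac{\log(\frac d\alpha)}{\alpha})$, plus a holdout block of size $m$. On block $r$ we run Algorithm~\ref{alg:oja} with the schedule of Theorem~\ref{thm:gapfree_epca} at accuracy $\frac\alpha 4$, which gives candidates $\vw^{(1)},\ldots,\vw^{(R)}$. Each candidate is independently a $\frac\alpha4$-ePCA with probability $\ge\frac34$. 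A Chernoff bound then shows that, with probability $\ge 1-\frac\zeta2$, at least half of the candidates satisfy $\vw^{(r)\top}\msig\vw^{(r)} \ge (1-\frac\alpha4)\lam_1$.

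For selection we use the holdout block: compute the Rayleigh-quotient estimates $\widehat e_r \defeq \vw^{(r)\top}\widehat\msig\vw^{(r)}$, where $\widehat\msig$ is the holdout empirical mean of the $\ma_t$, and output $\arg\max_r \widehat e_r$. Correctness needs $|\widehat e_r - \vw^{(r)\top}\msig\vw^{(r)}| \le \frac\alpha4\lam_1$ to hold simultaneously for all $r$. On that event the chosen candidate has energy at least $\max_r \vw^{(r)\top}\msig\vw^{(r)} - \frac\alpha2\lam_1 \ge (1-\frac{3\alpha}4)\lam_1$. The candidates are independent of the holdout, so we may condition on them. For each fixed unit $\vw$, the scalars $\vw^\top\ma_t\vw - \vw^\top\msig\vw$ are i.i.d., mean zero, and have variance at most $\E\norm{(\ma_t-\msig)\vw}_2^2 \le V$ by Model~\ref{model:oja_special}.

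The main obstacle is that Model~\ref{model:oja_special} gives only a second moment. An empirical mean then gives only Chebyshev-type, constant-probability accuracy, so a union bound over $R$ candidates at failure probability $\frac{\zeta}{2R}$ would cost a $\frac{R}{\zeta}$ factor. We fix this with median-of-means. Split the holdout into $K = \Theta(\log\frac R\zeta)$ groups of size $\Theta(\frac{V}{\alpha^2\lam_1^2})$ and define $\widehat e_r$ as the median over groups of the group empirical means. Chebyshev makes each group mean accurate to $\frac\alpha4\lam_1$ with probability $\ge\frac34$, and a Chernoff bound over the groups makes the median accurate with probability $1-\frac{\zeta}{2R}$. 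A union bound over $r\in[R]$ completes the argument.

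The holdout then uses $m = O(\frac{V}{\alpha^2\lam_1^2}\log\frac{\log(1/\zeta)}{\zeta})$ samples, which is dominated by $Rn_0 = O(n_0\log\frac1\zeta)$ up to a $\log\log\frac1\zeta$ factor; the constants can absorb this, or we can bound $\log\log\frac1\zeta \le \log\frac d\alpha$ in the relevant regimes. Adding the two failure probabilities gives total failure probability $\le\zeta$. The resulting sample complexity matches the stated $n$.
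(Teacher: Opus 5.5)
Your proposal is correct and follows essentially the same route as the paper. Both run $O(\log\frac1\zeta)$ independent copies of Theorem~\ref{thm:gapfree_epca} so that some output is a good ePCA with probability $\ge 1-\frac\zeta2$, then select the candidate with the largest holdout quadratic-form estimate, where each estimate is a median of Chebyshev-accurate group averages and a union bound over candidates gives simultaneous accuracy.

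Your $\log\log\frac1\zeta$ concern is moot: $\log\frac R\zeta = \log R + \log\frac1\zeta = O(\log\frac1\zeta)$ for $R = O(\log\frac1\zeta)$. So the holdout costs only $O(\frac{V}{\lam_1^2\alpha^2}\log\frac1\zeta + \log\frac1\zeta)$ samples, which the stated bound absorbs with no extra factor. Your fallback claim $\log\log\frac1\zeta\le\log\frac d\alpha$ need not hold in general, and it is not needed.
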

\begin{proof}
Fix some unit vector $\vw \in \R^d$. We claim that we can estimate its quadratic form $\vw^\top \msig \vw$ up to additive error $\frac{\alpha\lam_1}{4}$ using $O(\frac{V}{\alpha^2 \lam_1^2})$ holdout samples. To see this, under Model~\ref{model:oja_special}, a single sample quadratic form $\vw^\top \ma_t \vw$ is unbiased for $\vw^\top \msig \vw$, and has variance at most $V$:
\[\E\Brack{\Par{\vw^\top\Par{\ma_t - \msig}\vw}^2} \le \E\norm{\Par{\ma_t - \msig}\vw}_2^2 \le V.\]
Thus, averaging independent estimates and applying Chebyshev's inequality gives the claim.

Now, calling Theorem~\ref{thm:gapfree_epca} $O(\log \frac 1 \zeta)$ times independently with $\alpha \gets \frac \alpha 2$ implies that with probability $\ge 1 - \frac \zeta 2$, at least one of the outputs will be an $\frac \alpha 2$-ePCA. Taking the median of $O(\log \frac 1 \zeta)$ estimates of the quadratic form $\vw^\top \msig \vw$ obtained by each output $\vw$, using independent holdout samples, then yields the unit vector with largest quadratic form up to additive error $\frac {\alpha \lam_1} 2$, concluding the proof. Note that the same holdout samples are simultaneously accurate for each output by applying independence and taking a union bound, and do not dominate the stated sample complexity.
\end{proof}

\textbf{Matching lower bound.}
\label{ssec:epca_lower_bound}
We briefly conclude the section by showing a matching lower bound, up to logarithmic factors, by appealing to Theorem~\ref{thm:pca_lower}.
\begin{corollary}\label{cor:epca_lower_bound}
    Fix $d\ge2$ and any choice of $\lam_1 > 0$, $V > 0$, $\alpha\in(0,\frac1{32})$. There is no algorithm $\calA$ that takes as input $\{\ma_i\}_{i \in [n]}$ from Model~\ref{model:oja_special}, and outputs an $\alpha$-ePCA of $\msig$ with probability $\ge \frac 2 3$, even assuming that $\ma_i \in \PSD^{d \times d}$ for all $i \in [n]$, unless for an appropriate constant,
\[n = \Omega\Par{\frac{V}{\lam_1^2 \alpha^2}}.\]
\end{corollary}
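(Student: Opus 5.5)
The plan is to reduce Corollary~\ref{cor:epca_lower_bound} to the cPCA lower bound in Theorem~\ref{thm:pca_lower}, reusing its two-point construction, which lives in dimension $2$. For $d > 2$ we pad each sample with zeros. The key observation is that when $\msig$ has only two nonzero eigenvalues, $\lam_1$ and $(1-2\gamma)\lam_1$, an ePCA guarantee implies a cPCA guarantee. Concretely, let $\vw$ be an $\alpha$-ePCA and split $\vw = \vw_1 + \vw_2 + \vw_3$, where $\vw_1$ is the component along the top eigenvector, $\vw_2$ is the component along the $(1-2\gamma)\lam_1$ eigenvector, and $\vw_3$ is the component in the kernel. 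Then
\[
1 - \frac{\vw^\top\msig\vw}{\lam_1} = 2\gamma\norm{\vw_2}_2^2 + \norm{\vw_3}_2^2 \ge 2\gamma\Par{\norm{\vw_2}_2^2 + \norm{\vw_3}_2^2}.
\]
This equals $2\gamma$ times the squared mass of $\vw$ on eigenvalues below $(1-\gamma)\lam_1$. So an $\alpha$-ePCA is a $(\gamma, \frac{\alpha}{2\gamma})$-cPCA of $\msig$.

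Next choose parameters. Set $\gamma \defeq 8\alpha < \frac 1 4$, using $\alpha < \frac 1{32}$. Then $\Delta \defeq \frac{\alpha}{2\gamma} = \frac 1{16} < \frac14$, so both lie in the range allowed by Theorem~\ref{thm:pca_lower}. Any algorithm outputting an $\alpha$-ePCA with probability $\ge \frac23$ therefore outputs a $(\gamma,\Delta)$-cPCA with the same probability, on every instance whose mean has this two-eigenvalue spectrum.

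The step needing care is confirming that the hard instances of Theorem~\ref{thm:pca_lower} have exactly this form. They do: $\msig_0$ and $\msig_1$ both have eigenvalues $\lam_1$ and $(1-2\gamma)\lam_1$, and embedding them in $\R^{d\times d}$ by zero padding adds only zero eigenvalues. The embedding also preserves PSD-ness of each $\ma_i$, the mean, and the variance bound $V$.

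The Le Cam argument in that proof only needs the algorithm to succeed on those two laws, so its conclusion applies directly and gives
\[
n = \Omega\Par{\frac{V}{\lam_1^2\gamma^2\Delta}} = \Omega\Par{\frac{V}{\lam_1^2\alpha^2}}.
\]
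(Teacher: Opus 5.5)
Your argument is correct and is essentially the paper's: convert ePCA to cPCA on the hard instances, then invoke Theorem~\ref{thm:pca_lower} with $\gamma = 8\alpha$ and a constant $\Delta$. The paper instead cites the general conversion (Lemma 7 of \cite{JambulapatiKLPPT24}: an $\alpha$-ePCA is a $(\gamma,\frac{\alpha}{\gamma})$-cPCA of any PSD $\msig$, which is your computation without the two-eigenvalue structure), so it applies Theorem~\ref{thm:pca_lower} as a black box with $\Delta = \frac18$ and does not need to reopen the Le Cam argument as you do.
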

\begin{proof}
    By Lemma 7, \cite{JambulapatiKLPPT24} with $k=1$, an $\alpha$-ePCA of $\msig$ is also a $(\gam,\frac\alpha\gam)$-cPCA for every $\gam\in(\alpha,1)$, in the notation of Definition~\ref{def:cpca}. Then, taking $\gam\gets8\alpha$ and $\Delta\gets\frac18$ in Theorem~\ref{thm:pca_lower} gives
\[
n=\Omega\Par{\frac{V}{\lam_1^2(8\alpha)^2\cdot \frac 1 8}}
=\Omega\Par{\frac{V}{\lam_1^2\alpha^2}}.
\]
\end{proof}

\section{Application to Differentially Private PCA}
\label{sec:dp_pca}

In this section, we give our application to differentially private PCA. Given i.i.d.\ sub-Gaussian samples with covariance $\msig$, we seek a cPCA or ePCA subject to the following privacy guarantee.

\begin{definition}[Differential privacy]\label{def:dp}
We say that a randomized algorithm $\alg: (\R^d)^n \to \Omega$ is $(\eps, \delta)$-differentially private if for all measurable $\calE \subseteq \Omega$, and all $S, S' \in (\R^d)^n$ differing in one entry,
\[\Pr[\alg(S)\in \calE]\le \exp(\eps)\Pr[\alg(S')\in \calE]+\delta.\]
\end{definition}

Our utility analysis holds under the following assumption on the dataset.

\begin{definition}[$\nu$-sub-Gaussianity]
\label{definition:hypercontractivity}
A distribution $\calD$ on $\R^d$ is $\nu$-sub-Gaussian if, for every $\vv\in\R^d$,
\[\E_{\vx\sim\calD}\Brack{\exp\Par{\inner{\vx-\E\vx}{\vv}}}
\le \exp\Par{\frac{\nu^2\norm{\vv}_2^2}{2}}.\]
\end{definition}

\begin{model}
\label{model:private_gapfree_oja}
The samples $\vx_1,\ldots,\vx_n$ are drawn i.i.d.\ from a mean-zero, $\nu$-sub-Gaussian distribution with covariance $\msig\in\PSD^{d\times d}$ and $\lam_1=\normop{\msig}>0$.
\end{model}

Throughout this section, we treat $n,\nu,\lam_1$ as public parameters, fixed independently of the dataset. As is standard for statistical DP algorithms, our privacy guarantee will hold regardless of the input dataset, and our utility guarantee will hold assuming that the dataset follows Model~\ref{model:private_gapfree_oja}.

We next state our main algorithm in Algorithm~\ref{alg:private_gapfree_batched_oja}, which is patterned off of the DP-PCA algorithm of \cite{LiuKJO22}. After clipping the input dataset, the algorithm simply adds an appropriate Gaussian perturbation to each iterate of an empirical power method, which we show can be cast as an instance of Model~\ref{model:oja_special}. One major difference between Algorithm~\ref{alg:private_gapfree_batched_oja} and the variant in \cite{LiuKJO22} is that we do not subdivide our dataset into minibatches, and instead use full-batch iterations; this difference ends up shaving a roughly $\gamma^{-1/2}$ factor from our final sample complexity.

\begin{algorithm}[!hbt]
\DontPrintSemicolon
\caption{$\mathsf{PrivBoostedSketchOja}(\{\vx_i\}_{i\in[n]},\{\eta_t\}_{t\in[T]},R,T,R_1,R_2,\tau,\sigma)$}
\label{alg:private_gapfree_batched_oja}
\codeInput $\{\vx_i\in\R^d\}_{i\in[n]}$, $\{\eta_t>0\}_{t\in[T]}$, $(R,T)\in\N^2$, $(R_1,R_2,\tau,\sigma)\in\R_{>0}^4$\;
$\mg\gets$ $d\times R$ matrix with i.i.d.\ $\Nor(0,1)$ entries\;
\For{$r\in[R]$}{$\my_r\gets\mg$\;}
\For{$t\in[T]$}{\label{line:for_start}
  \For{$r\in[R]$}{
    $\mmu_r\md_r\mv_r^\top\gets$ compact SVD of $\my_r$, with $s_r\defeq\rank(\my_r)$\;\label{line:svd}
    $\mq_{r,t}\gets\0_{d\times s_r}$\;
    \For{$i\in[n]$}{
      \tcp{Use multiplier $1$ if a denominator in either clipping step is zero.}
      $\vs_i\gets\vx_i\min\{1,\sqrt{R_1}/\norm{\vx_i}_2\}$\;\label{line:clip_1}
      $\vz_i\gets\vs_i\min\{1,\sqrt{R_2}/\norm{\mmu_r^\top\vs_i}_2\}$\;\label{line:clip_2}
      $\mq_{r,t}\gets\mq_{r,t}+\frac1n\vz_i(\vz_i^\top\mmu_r)$\;
    }
    $\mh_{r,t}\gets$ $d\times s_r$ matrix with i.i.d.\ $\Nor(0,\sigma^2)$ entries\;\label{line:sigma}
    $\my_r\gets\my_r+\eta_t(\mq_{r,t}+\mh_{r,t})\md_r\mv_r^\top$\;
  }
}\label{line:for_end}
\For{$r\in[R]$}{
  $\mz_r\gets\my_r/\normf{\my_r}$, or a fixed unit-Frobenius-norm matrix if $\my_r=\0_{d\times R}$\;
}
\For{$r\in[R]$}{
  $\calN_r\gets\{r'\in[R]:\normf{\mz_r-\mz_{r'}}\le2\tau\}$\;
}
$i\gets$ any index with $|\calN_i|\ge R/2$, else $i\gets1$\;
$\vw_n\gets$ any top left singular vector of $\mz_i$\;
\codeReturn $\vw_n$
\end{algorithm}

\subsection{Privacy}

We next prove that Algorithm~\ref{alg:private_gapfree_batched_oja} satisfies $(\eps, \delta)$-DP when $\sigma$ in Line~\ref{line:sigma} is appropriately chosen. Our proof is standard, and proceeds via \emph{R\'enyi DP}, an alternative privacy accounting strategy that is particularly well-suited to the \emph{Gaussian mechanism}. For brevity, we defer background on the Gaussian mechanism to Appendix A of \cite{DworkR14}, and background on R\'enyi DP to \cite{Mironov17}.

\begin{lemma}
\label{lem:dp_pca_privacy}
For $\eps \in (0, 1]$ and $\delta \in (0, \frac 1 3)$, the output of Algorithm~\ref{alg:private_gapfree_batched_oja} is $(\eps, \delta)$-DP if
\begin{equation}\label{eq:dp_noise_variance}
\sigma^2\ge\frac{12RTR_1R_2}{n^2\eps^2}\log\Par{\frac1\delta}.
\end{equation}
Moreover, every $\mmu_r$ on Line~\ref{line:svd} is $(\eps, \delta)$-DP for all $r \in [R]$, at every iteration $t \in [T]$.
\end{lemma}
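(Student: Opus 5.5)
The plan is to view the algorithm as an adaptive composition of $RT$ Gaussian mechanisms, one for each pair $(t,r)$, and to account for privacy using R\'enyi DP. At step $(t,r)$ the released quantity is $\mq_{r,t}+\mh_{r,t}$. Here $\mmu_r$, $\md_r$, $\mv_r$ are functions of the previously released outputs and of $\mg$, which does not depend on the data, so they can be treated as fixed given the history. Everything else, namely the update of $\my_r$, the normalization, the neighbor sets, the center selection and the singular vector, is post-processing of these releases together with $\mg$. Privacy of the final output and of each intermediate $\mmu_r$ will therefore follow from privacy of the composed sequence of releases, once we condition on the data-independent $\mg$.

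\textbf{Sensitivity.} Fix a history and let $S,S'$ differ only in entry $i$. Then $\mq_{r,t}$ changes only through the term $\frac 1 n \vz_i\vz_i^\top\mmu_r$. Since $\mmu_r$ has orthonormal columns, $\normf{\vz_i\vz_i^\top\mmu_r}=\norm{\vz_i}_2\norm{\mmu_r^\top\vz_i}_2$. The first clipping step gives $\norm{\vz_i}_2\le\norm{\vs_i}_2\le\sqrt{R_1}$. The second clipping step gives $\norm{\mmu_r^\top\vz_i}_2\le\sqrt{R_2}$, because $\vz_i$ is a nonnegative rescaling of $\vs_i$. The degenerate case with multiplier $1$ is handled separately: a zero denominator means the relevant projection is already $\0$. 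By the triangle inequality, the $\ell_2$ sensitivity (Frobenius norm) is therefore $\Delta_2\le\frac{2\sqrt{R_1R_2}}{n}$.

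\textbf{Accounting.} With noise $\Nor(0,\sigma^2)$ per entry, the Gaussian mechanism is $(a,\frac{a\Delta_2^2}{2\sigma^2})$-RDP for every $a>1$. Adaptive composition of the $RT$ steps then gives $(a,\rho a)$-RDP with
\[\rho=\frac{RT\cdot 4R_1R_2}{2n^2\sigma^2}=\frac{2RTR_1R_2}{n^2\sigma^2}.\]
Converting with the standard bound $\eps'=\rho a+\frac{\log(1/\delta)}{a-1}$ and optimizing $a$ yields roughly $\eps'\le\rho+2\sqrt{\rho\log(1/\delta)}$. Under \eqref{eq:dp_noise_variance} we have $\rho\le\frac{\eps^2}{6\log(1/\delta)}$, so $2\sqrt{\rho\log(1/\delta)}\le\frac{2\eps}{\sqrt 6}$. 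Moreover $\rho\le\frac{\eps^2}{6}\le\frac{\eps}{6}$, using $\eps\le1$ and $\log(1/\delta)\ge1$, which holds since $\delta<\frac13$. The sum is below $\eps$, which finishes the privacy claim. The claim about each $\mmu_r$ at iteration $t$ follows because $\mmu_r$ is post-processing of a prefix of the same composition, and a prefix enjoys RDP guarantees at least as strong.

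The main obstacle I expect is making the adaptivity rigorous. Each mechanism's query depends on earlier noisy outputs through $\mmu_r$, and the dimension $s_r$ can vary with the history. The sensitivity bound above holds uniformly over every possible history, so the adaptive RDP composition theorem of \cite{Mironov17} applies directly. The remaining care points are the bookkeeping of constants in the RDP-to-DP conversion and the observation that the shared matrix $\mg$ is independent of the data.
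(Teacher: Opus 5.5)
Your proposal is correct and follows the paper's argument step for step. Both proofs bound the history-conditional Frobenius sensitivity by $\frac{2}{n}\sqrt{R_1R_2}$, apply the Gaussian mechanism's RDP bound, compose adaptively over the $RT$ releases, convert via Mironov's Proposition 3, and obtain privacy of the output and of each $\mmu_r$ by post-processing. The only difference is cosmetic: you optimize the R\'enyi order, which gives exactly (not just ``roughly'') $\rho + 2\sqrt{\rho\log(1/\delta)}$, while the paper fixes $p = 1 + \frac{2\log(1/\delta)}{\eps}$; both land below $\eps$.
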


\begin{proof}
The algorithm only accesses the dataset $\{\vx_i\}_{i \in [n]}$ in the nested for loops from Lines~\ref{line:for_start} to~\ref{line:for_end}. Fix the initialization and preceding noisy answers at the beginning of one loop, indexed by $t \in [T]$ and $r \in [R]$, so that $\my_r$ and $\mmu_r$ are fixed. Each clipped summand to $\mq_{r, t}$ satisfies
\[
\normf{\vz_i\vz_i^\top\mmu_r}
=\norm{\vz_i}_2\norm{\mmu_r^\top\vz_i}_2
\le\sqrt{R_1R_2},
\]
and $\vz_i$ is a deterministic function of $\muu_r$ and $\vx_i$,
so replacing one sample $\vx_i$ changes $\mq_{r, t}$ in Frobenius norm by $\le s\defeq\frac 2 n \sqrt{R_1R_2}$. Thus, Proposition 7 and Corollary 3 of \cite{Mironov17} show that the noisy answer $\mq_{r,t}+\mh_{r,t}$ is $(p, \frac{ps^2}{2\sigma^2})$-RDP, and $\my_r$ is updated by a deterministic function of this answer and the preceding state. Now RDP composition (Proposition 1, \cite{Mironov17}) over $RT$ iterations shows that the transcript of all $\my_r$ is $(p, \rho)$-RDP, where
\[\rho = \frac{ps^2RT}{2\sigma^2}.\]
Finally, taking $p = 1 + \frac{2\log (\frac 1 \delta)}{\eps}$, and using the lower bound on $\sigma^2$ in \eqref{eq:dp_noise_variance}, implies
\[\frac{\log \Par{\frac 1 \delta}}{p - 1} \le \frac \eps 2,\quad \rho \le \frac{ps^2 n^2\eps^2}{24R_1R_2\log \Par{\frac 1 \delta}} = \frac{p\eps^2}{6\log\Par{ \frac 1 \delta}} \le \frac{\eps^2}{6\log\Par{ \frac 1 \delta}} + \frac{\eps}{3} \le \frac \eps 2.\]

Proposition 3 of \cite{Mironov17} then shows that the transcript of all of the $\my_r$ is $(\eps, \delta)$-DP.
The privacy of the algorithm's output and all $\muu_r$ then follows, as postprocessings of the transcript.
\end{proof}

\subsection{Utility}

We next give our utility analysis. For convenience, denote the dataset and empirical covariance by
\[S\defeq\{\vx_i\}_{i\in[n]},\quad \hmsig\defeq \frac 1 n\sum_{i\in[n]}\vx_i\vx_i^\top.\]
We first show how to couple iterates of Algorithm~\ref{alg:private_gapfree_batched_oja} to an instance of Model~\ref{model:oja_special}. To begin, we show that with high probability, the clipping events on Lines~\ref{line:clip_1} and~\ref{line:clip_2} never occur. This step requires using our earlier privacy guarantee to handle a dependence between $\muu$ and the dataset $S$.

\begin{lemma}
\label{lem:dp_adaptive_projection}
Under Model~\ref{model:private_gapfree_oja}, let $\alg$ be an $(\eps,\delta)$-DP mechanism whose output $\mmu=\alg(S)$ is an orthonormal matrix with at most $R$ columns. For every $i\in[n]$ and $u>0$,
\[
\Pr\Par{\norm{\mmu^\top\vx_i}_2^2>3\nu^2(R+u)}
\le \exp({\eps-u})+\delta.
\]
\end{lemma}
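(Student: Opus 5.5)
The plan is to decouple $\mmu$ from $\vx_i$ using the standard ``DP implies near-independence'' argument. Fix $i$, and let $S'$ be the dataset obtained from $S$ by replacing $\vx_i$ with an independent fresh draw $\vx_i'$ from the same distribution. Then $\mmu' \defeq \alg(S')$ is independent of $\vx_i$, because $S'$ together with $\alg$'s internal randomness is independent of $\vx_i$. Consider the event
\[\calE \defeq \Brace{\norm{\mmu^\top\vx_i}_2^2>3\nu^2(R+u)}.\]
Condition on the whole pair $(S,\vx_i')$. For this fixed pair, $S$ and $S'$ differ in one entry. The event $\calE$ is then a measurable set of outputs $\{\mmu : \norm{\mmu^\top\vx_i}_2^2>3\nu^2(R+u)\}$, with $\vx_i$ fixed. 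Definition~\ref{def:dp} therefore gives
\[\Pr[\alg(S)\in\calE\mid S,\vx_i']\le e^{\eps}\Pr[\alg(S')\in\calE\mid S,\vx_i']+\delta.\]
Taking expectations over $(S,\vx_i')$, it suffices to show
\[\Pr\Par{\norm{\mmu'^\top\vx_i}_2^2>3\nu^2(R+u)}\le e^{-u},\]
where $\mmu'$ is independent of $\vx_i$.

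For that bound, condition further on $\mmu'$, which is a fixed orthonormal $d\times k$ matrix with $k\le R$. Then $\mmu'^\top\vx_i$ is a mean-zero random vector in $\R^k$. It is $\nu$-sub-Gaussian by Definition~\ref{definition:hypercontractivity}, since for $\vv\in\R^k$ we have $\inprod{\mmu'^\top\vx_i}{\vv}=\inprod{\vx_i}{\mmu'\vv}$ and $\norm{\mmu'\vv}_2=\norm{\vv}_2$. I would then invoke a standard norm bound for sub-Gaussian vectors, proved either via the MGF of the squared norm or by Gaussian decoupling. Taking $\vg\sim\calN(\0_k,\id_k)$ independent and $\lambda<\frac1{2\nu^2}$, the sub-Gaussian MGF bound gives
\[\E\exp\Par{\lambda\norm{\vy}_2^2}=\E_{\vy}\E_{\vg}\exp\Par{\sqrt{2\lambda}\inprod{\vg}{\vy}}\le \E_{\vg}\exp\Par{\lambda\nu^2\norm{\vg}_2^2}=(1-2\lambda\nu^2)^{-k/2}.\]
Choose $\lambda=\frac1{3\nu^2}$. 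The right-hand side is then $3^{k/2}$, and Markov's inequality gives
\[\Pr\Par{\norm{\vy}_2^2>3\nu^2(R+u)}\le 3^{k/2}e^{-(R+u)}\le e^{k\log 3/2-R-u}\le e^{-u},\]
because $\frac{\log 3}{2}<1$ and $k\le R$. Averaging over $\mmu'$ finishes this step, and combining with the DP display gives $\exp(\eps-u)+\delta$.

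The main obstacle is making the coupling step rigorous. The DP guarantee must be applied pointwise for each fixed pair of neighboring datasets. The event depends on $\vx_i$, which is itself an entry of $S$, so it has to be treated as a fixed set of outputs once we condition. The other thing to check is that the joint law of $(\mmu',\vx_i)$ factorizes, which holds because the entries are i.i.d.\ and the replacement draw is fresh. Since the event is output-measurable once $S$ and $\vx_i'$ are fixed, the conditional argument above should go through directly.
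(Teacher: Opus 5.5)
Your proposal is correct and follows essentially the paper's proof: replace $\vx_i$ by a fresh independent copy, apply Definition~\ref{def:dp} pointwise to the neighboring pair, average, and bound the tail for $\mmu'$, which is independent of $\vx_i$. The only difference is that you derive the sub-Gaussian norm tail $3^{k/2}e^{-(R+u)}\le e^{-u}$ yourself via Gaussian MGF decoupling, whereas the paper cites Theorem~2.1 of \cite{HsuKZ12}, which gives the same constant $3$.
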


\begin{proof}
Replace $\vx_i$ by an independent copy to form $S^{(i)}$, and set $\mmu'=\alg(S^{(i)})$. Conditional on $\mmu'$, the vector $\vx_i$ remains mean-zero and $\nu$-sub-Gaussian. Theorem~2.1 of \cite{HsuKZ12}, applied to $(\mmu')^\top\vx_i$, gives $\Pr(\norm{(\mmu')^\top\vx_i}_2^2>3\nu^2(R+u))\le\exp(-u)$. Applying Definition~\ref{def:dp} to the neighboring datasets $S,S^{(i)}$ and averaging over the independent copy gives the claim.
\end{proof}

For a failure probability parameter $\zeta\in(0,1)$, set
\begin{equation}
\label{eq:dp_pca_R1_R2}
R_1\defeq3\nu^2\Par{d+\log\Par{\frac{8n}{\zeta}}},\qquad
R_2\defeq3\nu^2\Par{R+2\log\Par{\frac{16nRT}{\zeta}}}.
\end{equation}
We use the smaller privacy failure parameter
\begin{equation}\label{eq:dp_coupling_delta}
\delta_*\defeq\min\Brace{\delta,\frac{\zeta}{16nRT}},
\end{equation}
so that the additive privacy errors can be summed over all sample projections.

\begin{lemma}
\label{lem:dp_update_coupling}
Under Model~\ref{model:private_gapfree_oja}, let $\eps\in(0,1]$, $\delta\in(0,\frac13)$, and $\zeta\in(0,1)$. Choose $R_1,R_2$ as in \eqref{eq:dp_pca_R1_R2}, and suppose $\sigma^2$ satisfies the lower bound in \eqref{eq:dp_noise_variance} with $\delta\gets\delta_*$ from \eqref{eq:dp_coupling_delta}. There exist matrices
\[
\ma_{(t-1)R+r}\defeq\hmsig+\mg_{r,t},
\]
where the $\mg_{r,t}\in\R^{d\times d}$ have independent $\Nor(0,\sigma^2)$ entries, such that the updates in Algorithm~\ref{alg:private_gapfree_batched_oja} can be coupled to $\my_r\gets(\id_d+\eta_t\ma_{(t-1)R+r})\my_r$ with probability $\ge1-\frac\zeta4$. Conditional on the dataset, these matrices are i.i.d.\ with mean $\hmsig$ and satisfy both variance bounds in Model~\ref{model:oja_special} with $V=d\sigma^2$.
\end{lemma}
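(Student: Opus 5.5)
The coupling rests on one observation. When no clipping occurs, the update in Algorithm~\ref{alg:private_gapfree_batched_oja} is already a linear Oja step on $\my_r$, and the Gaussian noise on a compact SVD factor can be lifted to a full $d\times d$ Gaussian matrix. I will first verify the algebraic identity on the no-clipping event, then bound the probability of that event using Lemma~\ref{lem:dp_adaptive_projection}, and finally check the variance parameters.

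\textbf{Algebra on the no-clipping event.} If no clipping happens at step $(t,r)$, then $\vz_i=\vx_i$ for all $i$, so $\mq_{r,t}=\hmsig\mmu_r$. Then
\[\eta_t\mq_{r,t}\md_r\mv_r^\top=\eta_t\hmsig\mmu_r\md_r\mv_r^\top=\eta_t\hmsig\my_r.\]
For the noise, let $\mmu_r^\perp$ complete $\mmu_r$ to an orthonormal basis of $\R^d$. Define
\[\mg_{r,t}\defeq\mh_{r,t}\mmu_r^\top+\mh'_{r,t}(\mmu_r^\perp)^\top,\]
where $\mh'_{r,t}$ is a fresh independent $d\times(d-s_r)$ matrix with $\Nor(0,\sigma^2)$ entries. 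Since $[\mmu_r,\mmu_r^\perp]$ is orthogonal and the Gaussian is rotation invariant, $\mg_{r,t}$ has i.i.d.\ $\Nor(0,\sigma^2)$ entries conditional on the past, hence independent of the past. Also $\mg_{r,t}\my_r=\mh_{r,t}\md_r\mv_r^\top$. So the update equals $\my_r\gets(\id_d+\eta_t(\hmsig+\mg_{r,t}))\my_r$, which is the claimed form.

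When $\my_r=\0$ the SVD is empty, but both the algorithm and the coupled update leave $\my_r=\0$, so nothing needs checking. On the clipping event, I define $\mg_{r,t}$ the same way so that the matrices always exist with the right law. The coupling then holds whenever no clipping ever occurs.

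\textbf{Variance bounds.} Conditional on $S$, the $\ma$'s are i.i.d.\ with mean $\hmsig$. For the variance, $\E[\mg\mg^\top]=\E[\mg^\top\mg]=d\sigma^2\id_d$, so $V=d\sigma^2$.

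\textbf{Probability of no clipping (the main obstacle).} There are two clipping steps to control.

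\emph{First clip (Line~\ref{line:clip_1}).} Apply sub-Gaussian norm concentration (Theorem~2.1 of \cite{HsuKZ12}) to each $\vx_i$ with $u=\log(8n/\zeta)$, giving $\norm{\vx_i}_2^2\le 3\nu^2(d+u)=R_1$. A union bound over $n$ samples gives failure probability at most $\zeta/8$. On this event $\vs_i=\vx_i$.

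\emph{Second clip (Line~\ref{line:clip_2}).} This is the delicate step, because $\mmu_r$ depends on the data. By Lemma~\ref{lem:dp_pca_privacy}, run with $\delta_*$, each $\mmu_r$ at each iteration is $(\eps,\delta_*)$-DP and has at most $R$ columns. Lemma~\ref{lem:dp_adaptive_projection} with $u=2\log(16nRT/\zeta)$ bounds each $\Pr(\norm{\mmu_r^\top\vx_i}_2^2>R_2)$ by $e^{\eps}(\zeta/16nRT)^2+\delta_*$. Using $\eps\le1$, $e<16nRT/\zeta$, and the definition of $\delta_*$ in \eqref{eq:dp_coupling_delta}, this is at most $2\zeta/(16nRT)$. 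A union bound over $nRT$ triples $(i,r,t)$ gives failure probability at most $\zeta/8$.

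One subtlety is that Lemma~\ref{lem:dp_adaptive_projection} concerns $\mmu^\top\vx_i$, whereas the algorithm clips $\mmu^\top\vs_i$. On the first-clip event these coincide, so the second-clip bound applies as stated. Summing the two failure probabilities gives at most $\zeta/4$, which is the claimed coupling probability.
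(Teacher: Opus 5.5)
Your proposal is correct and follows the paper's proof essentially step for step. It uses the same two clipping bounds: Theorem~2.1 of \cite{HsuKZ12} with $u=\log(\frac{8n}{\zeta})$, then Lemma~\ref{lem:dp_adaptive_projection} with $u=2\log(\frac{16nRT}{\zeta})$ and a union bound over the $nRT$ triples. It also uses the same SVD identity on the no-clipping event and the same variance computation $V=d\sigma^2$. The only difference is cosmetic, namely the direction of the noise coupling: the paper draws a full $d\times d$ Gaussian $\mg_{r,t}$ first and sets $\mh_{r,t}=\mg_{r,t}\mmu_r$, whereas you keep the algorithm's $\mh_{r,t}$ and extend it by an independent block on the orthogonal complement of $\mmu_r$; both are valid by rotation invariance.
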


\begin{proof}
We first show that neither clipping step changes any sample, except with probability $\frac\zeta4$. We then choose the Gaussian matrices so that the private and Oja updates agree whenever no clipping occurs. By Theorem~2.1 of \cite{HsuKZ12} and the choice of $R_1$,
\[
\Pr\Par{\exists i\in[n]:\norm{\vx_i}_2^2>R_1}
\le n\exp\Par{-\log\Par{\frac{8n}{\zeta}}}=\frac\zeta8.
\]
By Lemma~\ref{lem:dp_pca_privacy} with $\delta\gets\delta_*$, each subspace $\mmu_r$ computed by the algorithm is $(\eps,\delta_*)$-DP. For Line~\ref{line:clip_2}, apply Lemma~\ref{lem:dp_adaptive_projection} with $u=2\log(\frac{16nRT}\zeta)$, so that $R_2=3\nu^2(R+u)$. Since $\eps\le1\le\log(\frac{16nRT}\zeta)$, for every $(i,r,t)\in[n]\times[R]\times[T]$,
\[
\Pr\Par{\norm{\mmu_r^\top\vx_i}_2^2>R_2}
\le\exp(\eps-u)+\delta_*
\le\frac{\zeta}{16nRT}+\frac{\zeta}{16nRT}
=\frac{\zeta}{8nRT}.
\]
These bounds apply to the subspaces computed by the algorithm, including when earlier samples were clipped. A union bound gives total failure probability at most $\frac\zeta8+nRT\cdot\frac\zeta{8nRT}=\frac\zeta4$. On the complementary event, $\vz_i=\vx_i$ for every sample in every update.

Now draw the $d\times d$ matrices $\mg_{r,t}$ with independent $\Nor(0,\sigma^2)$ entries, independently across updates and independently of the data and initialization. Set $\mh_{r,t}=\mg_{r,t}\mmu_r$. Given the dataset and all preceding updates, $\mmu_r$ is fixed and orthonormal, so $\mh_{r,t}$ has independent $\Nor(0,\sigma^2)$ entries, as required by the algorithm. Whenever no clipping occurs, $\mq_{r,t}=\hmsig\mmu_r$, and the SVD identity gives
\[
\my_r+\eta_t(\hmsig\mmu_r+\mg_{r,t}\mmu_r)\md_r\mv_r^\top
=(\id_d+\eta_t(\hmsig+\mg_{r,t}))\my_r.
\]
Thus, starting from the same initialization, the private and Oja iterates agree with probability at least $1-\frac\zeta4$. Conditional on the dataset, the matrices $\ma_{(t-1)R+r}$ are i.i.d.\ with mean $\hmsig$, and
\[
\E[\mg_{r,t}\mg_{r,t}^\top]
=\E[\mg_{r,t}^\top\mg_{r,t}]
=d\sigma^2\id_d.
\]
This verifies Model~\ref{model:oja_special} with $V=d\sigma^2$, conditional only on the dataset.
\end{proof}

We now combine the coupling with the analysis of Algorithm~\ref{alg:boosted_sketch_oja} to obtain a private cPCA guarantee.

\begin{theorem}
\label{thm:dp_pca_utility}
Under Model~\ref{model:private_gapfree_oja}, let $\eps\in(0,1]$, $(\delta,\zeta)\in(0,\frac13)^2$, and $(\gamma,\Delta)\in(0,1)^2$. If
\begin{equation}\label{eq:dp_sample_complexity}
n= \Omega\Par{
\frac{\nu^4\Par{d+\log\Par{\frac1\zeta}}}{\lam_1^2\gamma^2\Delta}
+\frac{d\nu^2\log\Par{\frac1\zeta}\log\Par{\frac{2d}{\Delta}\log\Par{\frac1\zeta}}\log\Par{\frac n\zeta}\sqrt{\log\Par{\frac n{\delta\zeta}}}}{\eps\lam_1\gamma\sqrt\Delta}
}
\end{equation}
for an appropriate constant,
there is a choice of inputs to Algorithm~\ref{alg:private_gapfree_batched_oja} that gives an $(\eps,\delta)$-DP algorithm returning a $(\gamma,\Delta)$-cPCA of $\msig$ with probability $\ge1-\zeta$.
\end{theorem}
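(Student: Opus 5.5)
Privacy is immediate from Lemma~\ref{lem:dp_pca_privacy}: we choose $\sigma^2$ to meet \eqref{eq:dp_noise_variance} with $\delta\gets\delta_*$ from \eqref{eq:dp_coupling_delta}, and $(\eps,\delta_*)$-DP implies $(\eps,\delta)$-DP. The substance is utility, which we split into three steps.

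\textbf{Step 1 (reduction to streaming Oja).} We use Lemma~\ref{lem:dp_update_coupling}. With probability $\ge 1-\frac\zeta4$, Algorithm~\ref{alg:private_gapfree_batched_oja} coincides with Algorithm~\ref{alg:boosted_sketch_oja}. The latter runs on the stream $\ma_{(t-1)R+r}=\hmsig+\mg_{r,t}$, which conditionally on $S$ is an instance of Model~\ref{model:oja_special} with mean $\hmsig$ and $V=d\sigma^2$. We pick $R,\tau,\{\eta_t\}$ exactly as in the proof of Theorem~\ref{thm:high_probability_oja_absolute}, with failure probability $\frac\zeta4$ and cPCA parameters $(\frac\gamma6,\frac\Delta4)$ targeted at $\hmsig$. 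Lemma~\ref{lem:oja_aggregation} then gives, with conditional probability $\ge1-\frac\zeta4$, that $\vw_n$ is a $(\frac\gamma6,\frac\Delta4)$-cPCA of $\hmsig$. Note that $\lam_1(\hmsig)\in[\frac{\lam_1}2,2\lam_1]$ on the good event of Step 3, so the stepsizes can be set using the public $\lam_1$ up to constants. Since $\ma$ is not a stream of fresh data, $T$ is the number of passes and is not tied to $n$. The requirement is $T\gtrsim \frac{d\sigma^2 L^2 R}{\lam_1^2\gamma^2\Delta}+\frac{LR}{\gamma}$ with $L=\log(\frac d{\Delta\zeta})$.

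\textbf{Step 2 (closing the noise loop).} $\sigma^2$ depends on $T$ through \eqref{eq:dp_noise_variance}, namely $\sigma^2\asymp \frac{RTR_1R_2}{n^2\eps^2}\log\frac1{\delta_*}$, so the condition of Step 1 is self-referential. Substituting, it reads
\[T\gtrsim \frac{dRTR_1R_2\log(\frac1{\delta_*})L^2R}{n^2\eps^2\lam_1^2\gamma^2\Delta}.\]
Since $T$ cancels, the condition holds iff
\[n\gtrsim \frac{R\sqrt{dR_1R_2\log(1/\delta_*)}\,L}{\eps\lam_1\gamma\sqrt\Delta}.\]
With $R_1\approx\nu^2 d$ and $R_2\approx\nu^2\log\frac{n}{\zeta}$ (absorbing $R=O(\log\frac1\zeta)$), this is $\frac{d\nu^2\cdot\polylog}{\eps\lam_1\gamma\sqrt\Delta}$, matching the second term of \eqref{eq:dp_sample_complexity}. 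We then take $T$ to be the smallest admissible value, of order $\frac{LR}\gamma$ times a constant. I expect the log bookkeeping here, e.g.\ $R_2$ depending on $\log T$, to be the main fiddly point. It is harmless because $T$ is polynomial in the problem parameters.

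\textbf{Step 3 (empirical to population).} We need $\normop{\hmsig-\msig}\le\frac{\lam_1\gamma\sqrt\Delta}{4}$ with probability $\ge1-\frac\zeta4$. The proof of Proposition~\ref{prop:gapfree_offline} (Proposition 1 of \cite{Tian26}) then transfers the $(\frac\gamma6,\frac\Delta4)$-cPCA of $\hmsig$ to a $(\gamma,\Delta)$-cPCA of $\msig$. For sub-Gaussian vectors, standard covariance concentration gives $\normop{\hmsig-\msig}\lesssim\nu^2\bigl(\sqrt{(d+\log\frac1\zeta)/n}+(d+\log\frac1\zeta)/n\bigr)$. Requiring this to be at most $\frac{\lam_1\gamma\sqrt\Delta}{4}$ gives the first term of \eqref{eq:dp_sample_complexity}. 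The linear term is dominated because $\nu^2\gtrsim\lam_1$. A union bound over the three failure events (clipping/coupling, aggregation, concentration) finishes the proof.

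The main obstacle I anticipate is the conditioning in Step 1. Lemma~\ref{lem:oja_aggregation} is applied conditionally on $S$, but the coupling event depends jointly on $S$ and the noise. This is handled by running the analysis on the coupled Oja process, whose guarantee holds conditionally on $S$, and then paying the coupling failure probability additively.
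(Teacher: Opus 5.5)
Your proposal is correct and follows the paper's proof essentially step for step. The shared steps are: privacy from Lemma~\ref{lem:dp_pca_privacy} at $\delta_*$, the coupling of Lemma~\ref{lem:dp_update_coupling}, Lemma~\ref{lem:oja_aggregation} applied conditionally on $S$ to $\hmsig$ with parameters $(\frac\gamma6,\frac\Delta4)$, covariance concentration to transfer the guarantee to $\msig$, and a union bound. The paper differs only cosmetically: it fixes a short schedule $T=\lceil 40L/\gamma\rceil$, $\eta_t=\frac{8L}{\lam_1(10L+\gamma t)}$ with no $V$-dependent offset, and checks $q_T\le\tau^2/128$ directly from the sample bound. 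That is the same computation as your cancellation of $T$.

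One small fix is needed. Importing $L=\log(\frac{d}{\Delta\zeta})$ from Theorem~\ref{thm:high_probability_oja_absolute} leaves a factor $\log(\frac{d}{\Delta\zeta})$ in your condition on $n$, whereas the statement only allows $\log(\frac{2d}{\Delta}\log\frac1\zeta)$. Lemma~\ref{lem:oja_aggregation} only needs $4de^{-L}\lesssim\tau^2\asymp\frac{\Delta}{R}$, so you should take $L=\Theta(\log\frac{dR}{\Delta})$, as the paper does.
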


\begin{proof}

We run Algorithm~\ref{alg:private_gapfree_batched_oja} with
\[
R=\left\lceil72\log\Par{\frac8\zeta}\right\rceil,\quad
L=\log\Par{\frac{256dR}{\Delta}},\quad
T=\left\lceil\frac{40L}{\gamma}\right\rceil,\quad \eta_t=\frac{8L}{\lam_1(10L+\gamma t)},\quad
\tau=\frac18\sqrt{\frac\Delta R}.
\]

Choose $R_1,R_2$ as in \eqref{eq:dp_pca_R1_R2}, $\delta_*$ as in \eqref{eq:dp_coupling_delta}, and set
\begin{equation}\label{eq:dp_cpca_sigma}
\sigma^2=\frac{12RTR_1R_2}{n^2\eps^2}\log\Par{\frac1{\delta_*}}.
\end{equation}
\textbf{Privacy.}
Lemma~\ref{lem:dp_pca_privacy} with $\delta\gets\delta_*$ gives $(\eps,\delta_*)$-DP, and hence, $(\eps,\delta)$-DP.

\textbf{Utility.} Our utility proof strategy
is to obtain a cPCA of $\hmsig$ and transfer it to $\msig$. For this, we first
bound $\normsop{\hmsig-\msig}$, then couple the private updates to Oja and apply
Lemma~\ref{lem:oja_aggregation}. We combine these guarantees on their common
success event.
Since $\lam_1\le\nu^2$, the first term in the sample complexity \eqref{eq:dp_sample_complexity} and Theorem~6.5 of \cite{Wainwright19} give, with probability $\ge1-\frac\zeta4$,
\begin{equation}\label{eq:dp_covariance_event}
\normop{\hmsig-\msig}\le\frac{\lam_1\gamma\sqrt\Delta}{4},\qquad
\frac34\lam_1\le\normop{\hmsig}\le\frac54\lam_1.
\end{equation}
Let $\calE_{\mathrm{cov}}$ denote the event that the covariance bounds in \eqref{eq:dp_covariance_event} hold. As in the proof of Proposition~\ref{prop:gapfree_offline}, it suffices on this event to obtain a $(\frac\gamma6,\frac\Delta4)$-cPCA of $\hmsig$.

Apply Lemma~\ref{lem:dp_update_coupling} with the chosen $R,T,R_1,R_2,\sigma$ and failure parameter $\zeta$. Let $\calE_{\mathrm{cpl}}$ denote the event that the two sequences of iterates agree throughout; then $\Pr(\calE_{\mathrm{cpl}}^c)\le\frac\zeta4$. We analyze these Oja updates conditional on a dataset satisfying the covariance bounds in \eqref{eq:dp_covariance_event}; they are independent with mean $\hmsig$ and variance $V=d\sigma^2$.

To obtain the required cPCA of $\hmsig$, we check the hypotheses of Lemma~\ref{lem:oja_aggregation} for the $R$ coupled runs of $T$ updates with $(\msig,\gam,V)\gets(\hmsig,\frac\gamma6,d\sigma^2)$, the chosen schedule $\{\eta_t\}_{t\in[T]}$, and radius $\tau$. Its step-size condition holds since $\eta_t\normop{\hmsig}\le1$. Moreover, our parameter choices give
\begin{equation}\label{eq:dp_full_variance}
4d\exp\Par{-\frac\gamma6\normop{\hmsig}\sum_{t\in[T]}\eta_t}\le\tau^2,
\quad
q_T\defeq d\sigma^2\sum_{t\in[T]}\eta_t^2\le\frac{\tau^2}{128}.
\end{equation}
The exponential bound in \eqref{eq:dp_full_variance} uses the choices of $T$ and $\eta_t$. The bound on $q_T$ follows by substituting $R_1,R_2$ from \eqref{eq:dp_pca_R1_R2} and $\sigma$ from \eqref{eq:dp_cpca_sigma}, and using the second term in the sample bound \eqref{eq:dp_sample_complexity}; the sample bound \eqref{eq:dp_sample_complexity} ensures $R,T=O(n)$ and hence $\log(\frac 1 {\delta_*})=O(\log\frac{n}{\delta\zeta})$.

Having verified the hypotheses of Lemma~\ref{lem:oja_aggregation}, we apply it for every fixed $S$ in $\calE_{\mathrm{cov}}$ to get
\[
\norm{\mpp\vw_{RT}}_2^2\le16R\tau^2=\frac\Delta4
\]
with failure probability at most $4\exp(-\frac R{72})\le\frac\zeta2$, where $\mpp$ projects onto eigenvectors of $\hmsig$ with eigenvalues below $(1-\frac\gamma6)\normsop{\hmsig}$. Thus, if $\calE_{\mathrm{Oja}}$ denotes the event that the coupled Oja output is a $(\frac\gamma6,\frac\Delta4)$-cPCA of $\hmsig$, then $\Pr(\calE_{\mathrm{Oja}}^c\mid S)\le\frac\zeta2$.

We have thus obtained the required cPCA of $\hmsig$. On $\calE_{\mathrm{cov}}\cap\calE_{\mathrm{cpl}}\cap\calE_{\mathrm{Oja}}$, the private output agrees with this Oja output, and the covariance bounds in \eqref{eq:dp_covariance_event}, via the proof of Proposition~\ref{prop:gapfree_offline}, make it a $(\gamma,\Delta)$-cPCA of $\msig$. The proof follows by noting that
\[
\Pr(\calE_{\mathrm{cov}}^c)+\Pr(\calE_{\mathrm{cpl}}^c)
+\Pr(\calE_{\mathrm{Oja}}^c\cap\calE_{\mathrm{cov}})
\le\frac\zeta4+\frac\zeta4+\frac\zeta2=\zeta.
\]
\end{proof}

\subsection{Private energy PCA}
\label{ssec:private_epca}

We conclude by giving an analogous private ePCA guarantee for Oja's algorithm.

\begin{theorem}\label{thm:private_epca}
Under Model~\ref{model:private_gapfree_oja}, let $\alpha\in(0,\frac14)$, $\eps\in(0,1]$, and $\delta\in(0,\frac13)$. If
\begin{equation}\label{eq:private_epca_samples}
n = \Omega\Par{
\frac{\nu^4d}{\lam_1^2\alpha^2}
+\frac{d\nu^2\log^2\Par{\frac{nd}\alpha}\sqrt{\log\Par{\frac{nd}{\alpha\delta}}}}{\eps\lam_1\alpha}
}
\end{equation}
for an appropriate constant,
there are choices of the inputs to Algorithm~\ref{alg:private_gapfree_batched_oja} that give an $(\eps,\delta)$-DP algorithm returning an $\alpha$-ePCA of $\msig$ with probability $\ge\frac23$.
\end{theorem}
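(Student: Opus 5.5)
We follow the template of Theorem~\ref{thm:dp_pca_utility}, replacing the cPCA analysis of Lemma~\ref{lem:oja_aggregation} with the ePCA schedule analysis of Lemma~\ref{lem:epca_schedule}. Since only constant success probability $\frac23$ is required, we run Algorithm~\ref{alg:private_gapfree_batched_oja} with $R=1$, so no aggregation step is needed. The algorithm then reduces to a single private Oja run with a $d\times 1$ Gaussian start, i.e., Algorithm~\ref{alg:oja} on the coupled matrices. We take $T=\Theta(a)$ with $a=\frac{C_1\log(d/\alpha)}{\alpha}$, the schedule $\eta_t=\frac{a}{\lam_1(T/4+t)}$, a constant failure parameter $\zeta$ (say $\frac1{12}$) in \eqref{eq:dp_pca_R1_R2} and \eqref{eq:dp_coupling_delta}, and $\sigma^2$ as in \eqref{eq:dp_cpca_sigma}. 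Privacy is then immediate from Lemma~\ref{lem:dp_pca_privacy} with $\delta\gets\delta_*$.

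For utility, we first transfer from $\msig$ to $\hmsig$. By Theorem~6.5 of \cite{Wainwright19} and the first term of \eqref{eq:private_epca_samples}, with probability $\ge\frac{11}{12}$ we have $\normsop{\hmsig-\msig}\le\frac{\alpha\lam_1}{8}$. On this event, any unit $\vw$ with $\vw^\top\hmsig\vw\ge(1-\frac\alpha4)\lam_1(\hmsig)$ satisfies $\vw^\top\msig\vw\ge(1-\frac\alpha4)(\lam_1-\frac{\alpha\lam_1}{8})-\frac{\alpha\lam_1}8\ge(1-\alpha)\lam_1$. This reduction is simpler than the cPCA transfer, since it only uses Weyl's inequality. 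It therefore suffices to produce an $\frac\alpha4$-ePCA of $\hmsig$.

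Next, Lemma~\ref{lem:dp_update_coupling} couples the private iterates, except with probability $\frac1{12}$, to Oja updates with $\ma_t=\hmsig+\mg_t$ satisfying Model~\ref{model:oja_special} with mean $\hmsig$ and $V=d\sigma^2$. We then apply Lemma~\ref{lem:epca_schedule} with $\lam_1\gets\normsop{\hmsig}\in[\frac34\lam_1,\frac54\lam_1]$, adjusting constants in the schedule accordingly, and with horizon $T$. This gives error $C(\frac{\log(ed)}{a}+\frac{Va\log(ea)}{\lam_1^2T})$ with probability $\frac34$. Since $T=\Theta(a)$, the second term is $O(\frac{d\sigma^2\log(ea)}{\lam_1^2})$. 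Requiring it and $V\sum\eta_t^2=O(d\sigma^2/\lam_1^2)\le c$ to be $O(\alpha)$ means we need $d\sigma^2\lesssim\frac{\alpha\lam_1^2}{\log(d/\alpha)}$.

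The main obstacle is checking that the second sample term delivers this bound on $\sigma^2$ with the right exponents. Substituting $R=1$, $R_1=O(\nu^2 d)$ (absorbing $\log n$ when $d$ is small), $R_2=O(\nu^2\log(nT))$, and $T=O(\frac{\log(d/\alpha)}{\alpha})$ gives
\[
d\sigma^2\approx\frac{d^2\nu^4\log(d/\alpha)\log(n)\log(n/\delta)}{n^2\eps^2\alpha}.
\]
We need this to be $\le\frac{\alpha\lam_1^2}{\log(d/\alpha)}$, i.e., $n\gtrsim\frac{d\nu^2\log(d/\alpha)\sqrt{\log n\log(n/\delta)}}{\eps\lam_1\alpha}$. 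This is dominated by the stated $\log^2(\frac{nd}\alpha)\sqrt{\log\frac{nd}{\alpha\delta}}$ term. Some care is needed that the $\frac1\alpha$ from $T$ and the $\frac1\alpha$ required of the variance combine into $\frac1{\alpha^2}$ inside $\sigma^2$, hence $\frac1\alpha$ in $n$. The low-order requirement $T\ge 4a$ holds by construction. Finally, a union bound over the covariance event ($\frac1{12}$), the coupling event ($\frac1{12}$), and Lemma~\ref{lem:epca_schedule} ($\frac14$, conditional on the dataset) gives total failure at most $\frac{5}{12}$. To reach the stated $\frac23$, we shrink the first two probabilities to $\frac1{24}$, and boost Lemma~\ref{lem:epca_schedule} to failure $\frac14$ via its Markov step with a slightly larger constant, which gives failure $\le\frac13$.
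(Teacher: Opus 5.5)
Your proposal is correct and follows essentially the same route as the paper: you take $R=\tau=1$, $a=\Theta(\log(d/\alpha)/\alpha)$, $T=\Theta(a)$, and the schedule $\eta_t=\frac{a}{\lam_1(T/4+t)}$ rescaled to $\widehat\lam_1=\normsop{\hmsig}$; you use the coupling of Lemma~\ref{lem:dp_update_coupling} with constant $\zeta$, apply Lemma~\ref{lem:epca_schedule} to get an $O(\alpha)$-ePCA of $\hmsig$, and transfer it to $\msig$ on the covariance event. One small slip: $V\sum_t\eta_t^2\le\frac{4Va^2}{\lam_1^2T}=O(\frac{d\sigma^2a}{\lam_1^2})$, not $O(\frac{d\sigma^2}{\lam_1^2})$, but your final requirement $d\sigma^2\lesssim\frac{\alpha\lam_1^2}{\log(d/\alpha)}$ already covers it. Also, no boosting of Lemma~\ref{lem:epca_schedule} is needed, since the coupling failure is only $\frac{\zeta}{4}$ and the covariance failure can be made an arbitrarily small constant.
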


\begin{proof} We run Algorithm~\ref{alg:private_gapfree_batched_oja} with $R=\tau=1$ and
\[
a\gets\frac{C_3\log\Par{\frac d\alpha}}{\alpha},\quad
T\gets \lceil C_4a\rceil,\quad
\eta_t\gets\frac{a}{\lam_1(T/4+t)},
\]
for sufficiently large universal constants $C_3,C_4$. We choose $R_1,R_2$ from \eqref{eq:dp_pca_R1_R2} with $\zeta=\frac1{12}$, set $\delta_*=\min\{\delta,\frac 1 {192nT}\}$ as in \eqref{eq:dp_coupling_delta}, and choose $\sigma$ as in \eqref{eq:dp_noise_variance} with $\delta\gets\delta_*$. The privacy proof is identical to Theorem~\ref{thm:dp_pca_utility}, so we focus on the utility proof.

First, Lemma~\ref{lem:dp_update_coupling} with $(R,\zeta)\gets(1,\frac1{12})$ gives coupled Oja updates with mean $\hmsig$ and variance $V=d\sigma^2$. Let $\calE_{\mathrm{cpl}}$ denote the event that Lemma~\ref{lem:dp_update_coupling} succeeds; then $\Pr(\calE_{\mathrm{cpl}}^c)\le\frac1{48}$.

Next, let $\calE_{\mathrm{cov}}$ denote the event $\normsop{\hmsig-\msig}\le\frac{\alpha\lam_1}8$. The first term in \eqref{eq:private_epca_samples} and Theorem~6.5 of \cite{Wainwright19} give $\Pr(\calE_{\mathrm{cov}}^c)\le\frac1{48}$. Fix a dataset $S$ in $\calE_{\mathrm{cov}}$ and analyze the Oja run conditional only on $S$.

To apply Lemma~\ref{lem:epca_schedule} to $\hmsig$, we express the step sizes using its top eigenvalue. Writing $\widehat\lam_1\defeq\normsop{\hmsig}$ and $a'\defeq \frac{a\widehat\lam_1}{\lam_1}$, the schedule becomes $\eta_t=\frac{a'}{\widehat\lam_1(T/4+t)}$, with $(1-\frac\alpha8)a\le a'\le(1+\frac\alpha8)a$ and $T\ge4a'$.

The schedule now has the form required by Lemma~\ref{lem:epca_schedule}; it remains to check its variance condition. Using $V=d\sigma^2$, $\sigma$ from \eqref{eq:dp_noise_variance} with $(R,\delta)\gets(1,\delta_*)$, and the sample bound \eqref{eq:private_epca_samples}, we obtain
\[
V\sum_{t=1}^T\eta_t^2
\le\frac{4Va^2}{\lam_1^2T}
=\frac{48dR_1R_2a^2\log\Par{\frac1{\delta_*}}}{n^2\eps^2\lam_1^2}
\le c,
\]
for a sufficiently small universal constant $c>0$. We may therefore apply Lemma~\ref{lem:epca_schedule} with $(\msig,n,a)\gets(\hmsig,T,a')$. For every fixed $S$ in $\calE_{\mathrm{cov}}$, it gives, with probability at least $\frac34$,
\[
1-\frac{\vw_T^\top\hmsig\vw_T}{\widehat\lam_1}
\le C_0\Par{\frac{\log(ed)}{a'}+\frac{Va'\log(ea')}{\widehat\lam_1^2T}}
\le\frac\alpha2.
\]
The error bound $\frac \alpha 2$ uses $R_1R_2=O(\nu^4d\log^2(\frac{nd}\alpha))$, $\log(\frac 1 {\delta_*})=O(\log(\frac{nd}{\alpha\delta}))$, and the sample bound \eqref{eq:private_epca_samples}, with $C_3$ and then $C$ sufficiently large. Let $\calE_{\mathrm{Oja}}$ denote this $\frac\alpha2$-ePCA guarantee for the coupled Oja output $\vw_T$; then $\Pr(\calE_{\mathrm{Oja}}^c\mid S)\le\frac14$.

We have obtained an $\frac\alpha2$-ePCA of $\hmsig$; it remains to transfer this guarantee to $\msig$. On $\calE_{\mathrm{cov}}\cap\calE_{\mathrm{cpl}}\cap\calE_{\mathrm{Oja}}$, coupling and the covariance bound give
\[
\vw_n^\top\msig\vw_n
\ge\Par{1-\frac\alpha2}\widehat\lam_1-\frac{\alpha\lam_1}8
\ge(1-\alpha)\lam_1.
\]
The private output $\vw_n$ is an $\alpha$-ePCA of $\msig$ on these events. The failure probability follows from
\[
\Pr(\calE_{\mathrm{cov}}^c)+\Pr(\calE_{\mathrm{cpl}}^c)
+\Pr(\calE_{\mathrm{Oja}}^c\cap\calE_{\mathrm{cov}})
\le\frac1{48}+\frac1{48}+\frac14<\frac13.
\]
\end{proof}

We remark that the success probability of Theorem~\ref{thm:private_epca} can be boosted using holdout samples, analogously to Corollary~\ref{cor:epca_whp}. For brevity, we omit this extension.

\begin{remark}[Gaussian specialization]\label{rem:private_epca_brown}
For Gaussian data, $\nu^2=\lam_1$, so Theorem~\ref{thm:private_epca} matches the sample complexity in Brown's Conjecture~1.1 \cite{Brown26} up to logarithmic factors when $\lam_1$ is known.
\end{remark}

\section{Experiments}
\label{sec:exp}

We conclude by providing empirical evaluations of Algorithm~\ref{alg:oja} (Section~\ref{ssec:oja_exp}) and Algorithm~\ref{alg:private_gapfree_batched_oja} (Section~\ref{ssec:privacy_exp}), to complement our theoretical results. Code for all experiments can be found \href{https://github.com/chutongyang98/Gap-Free-Streaming-PCA.git}{here}.

\subsection{Oja's algorithm}\label{ssec:oja_exp}
We first evaluate Oja's algorithm (Algorithm~\ref{alg:oja}) on synthetic streams with nearly tied leading eigenvalues, by comparing it against the top principal component of the empirical covariance. Note that this empirical estimator is not applicable in streaming settings, and serves only as a baseline.

Our experiments study the performance of these two algorithms under the same sample size. In our experiments, we set $d=50$ and set the population mean to $\msig = \mq\mlam\mq^\top$, where $\mlam$ is a diagonal matrix and $\mq$ is a Haar-distributed orthonormal matrix. The first three eigenvalues in $\mlam$ are fixed at $(1,0.99,0.98)$, and the remaining eigenvalues are independently drawn from $\Unif(0,0.95)$ and sorted in decreasing order. We generate our matrix stream as
\[\ma_t \defeq \msig+\frac{1}{\sqrt{d}}\mg_t,\quad [\mg_t]_{ij}\simiid\Nor(0,1).\]
It is straightforward to check that this is an instance of Model~\ref{model:oja_special} with $V=1$.

We set $\gamma=0.05$ and measure the cPCA success rate from Definition~\ref{def:cpca} with $\Delta=0.1$. For the output $\vw_n$ and an orthonormal eigenbasis $\Brack{\vv_i}_{i\in[d]}$, the cPCA error is 
\[\mathrm{err}(\vw_n,\msig)\defeq\sum_{i: \lambda_i<(1-\gamma)\lambda_1}\Par{\vv_i^\top\vw_n}^2.\]

We implemented Algorithm~\ref{alg:oja} with 
\[\eta_t = \frac{c}{\lambda_1(\beta+t)},\]
where $\lambda_1 =1$, and we performed a grid search for the pair of $c$ and $\beta$ that achieved the smallest mean cPCA error, over the choices \[c\in\{0.5,1,2,4,8,16,32\},\quad\beta\in\{0,1,3,10,30,100\}.\] 

The baseline returns a unit eigenvector corresponding to the largest eigenvalue of 
\[\widehat{\msig}\defeq\frac{1}{2n}\sum_{t\in[n]}\Par{\ma_t+\ma_t^\top},\]
i.e., the symmetrized empirical covariance. As shown in Figure~\ref{fig:experiment}, Oja's algorithm with tuned step sizes achieves results comparable to the baseline, but in the streaming setting.

\begin{figure}[t]
    \centering
    \begin{minipage}[b]{0.48\textwidth}
        \includegraphics[width=\textwidth]{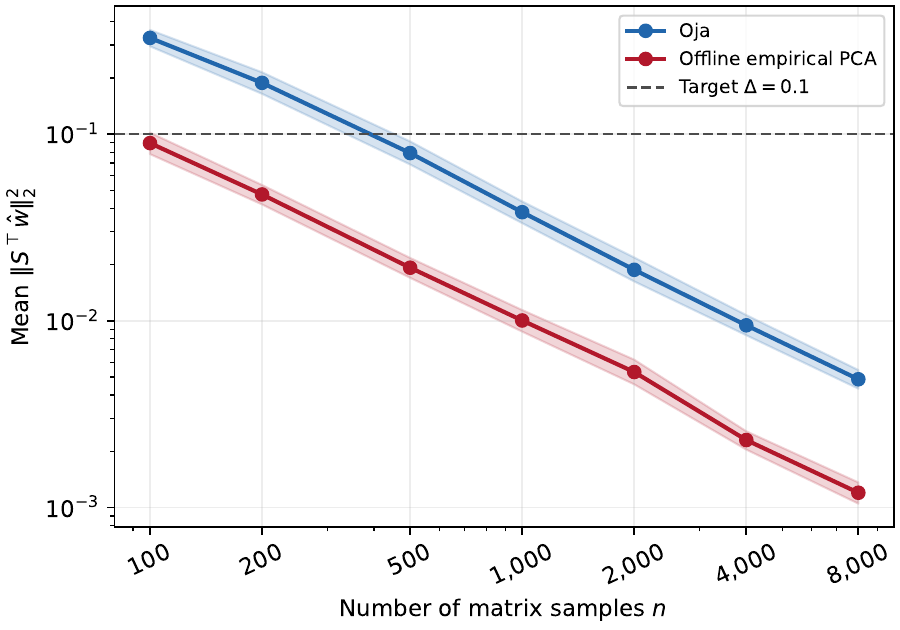}
    \end{minipage}
    \hfill
    \begin{minipage}[b]{0.48\textwidth}
        \includegraphics[width=\textwidth]{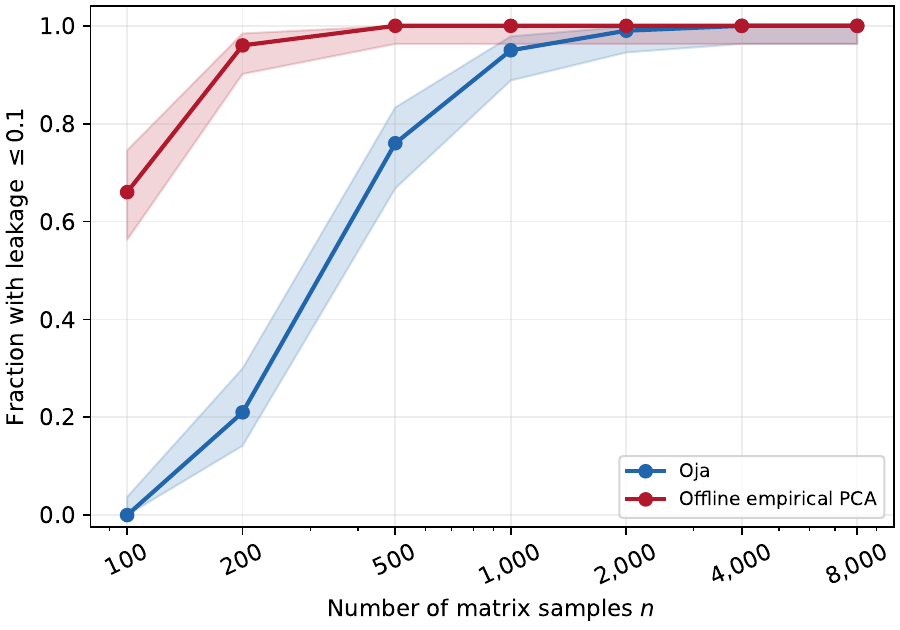}
    \end{minipage}
    \caption{Oja's algorithm with $\eta_t = \frac{16}{30+t}$ and offline empirical PCA across $100$ fresh paired trials. The left figure shows the cPCA error with $95\%$ Student-$t$ confidence intervals. The right figure shows the fraction of trials with cPCA error at most $\Delta = 0.1$ with $95\%$ Wilson intervals.}
    \label{fig:experiment}
\end{figure}

\subsection{Private PCA}\label{ssec:privacy_exp}

We next evaluate Algorithm~\ref{alg:private_gapfree_batched_oja} for private PCA on synthetic Gaussian samples, comparing it against the $\AG$ algorithm of \cite{DworkTTZ14}, which clips samples and noises the empirical covariance matrix entrywise. Up to logarithmic factors, our Theorem~\ref{thm:dp_pca_utility} and Theorem 6 of \cite{DworkTTZ14} show that the sample complexity of Algorithm~\ref{alg:private_gapfree_batched_oja} and $\AG$ respectively scale as\footnote{To see this bound for \cite{DworkTTZ14} in the Gaussian setting, after adapting their notation and scaling convention to ours, it suffices to plug in $n \gets d$ and $\sigma_1^2 - \sigma_2^2 \gets \frac{n\gamma}{d}$, and set the resulting $\sin^2$ error bound to $\Delta$.}
\begin{equation}\label{eq:dp_pca_rates}\approx \frac{d}{\gamma^2 \Delta} + \frac{d}{\eps\gamma\sqrt{\Delta}},\quad \approx \frac{d}{\gamma^2\Delta} + \frac{d^{1.5}}{\eps\gamma\sqrt{\Delta}}.\end{equation}
Observe that unless $d$ is somewhat large or $\gamma, \Delta$ are somewhat small, the identical first term in each of the above expressions dominates. Thus, we expect our algorithm to have improved performance over $\AG$ only in regimes with moderately large $d$ and small $\gamma, \Delta$.

In the following experiment, we set $d= 5000$ and vary $n\in\{10,20,30,50,100\}\times 10^6$. The population mean $\msig = \mq\mlam\mq^\top$ follows the exact same distribution as in Section~\ref{ssec:oja_exp}, i.e., 
the first three eigenvalues are $(1, 0.99,0.98)$, and the remaining eigenvalues are independently drawn from $\Unif(0,0.95)$. At each sample size, we run $20$ independent trials. We use the same cPCA error bound and success criterion of $\Delta = 0.1$ as before, and vary $\gamma \in \{\frac 1 4, \frac 1 {20}\}$ to measure the effect of this parameter. Finally, we set our DP parameters to $\eps = 1$ and $\delta = 10^{-6}$, and as hyperparameters to Algorithm~\ref{alg:private_gapfree_batched_oja}, we use 
\[R = 3,\quad T = 500, \quad \eta_t = \frac{32}{100+t},\quad \tau = \frac{1}{8}\sqrt{\frac{\Delta}{R}}.\]
The step sizes $\eta_t$ were picked using another grid search, selected from the same choices as used in Section~\ref{ssec:oja_exp}. We choose the clipping thresholds from~\eqref{eq:dp_pca_R1_R2} with $\nu = \lambda_1 = 1$ and $\zeta = 0.1$. The noise scale $\sigma$ is selected according to Lemma~\ref{lem:dp_pca_privacy}, which guarantees DP.

We next briefly describe the $\AG$ baseline from \cite{DworkTTZ14}. We used the same norm clipping threshold $R_1$, i.e., we follow Line~\ref{line:clip_1} of Algorithm~\ref{alg:private_gapfree_batched_oja} to produce clipped samples $\{\vs_i\}_{i \in [n]}$. $\AG$ then outputs a leading eigenvector of $\widehat{\msig}_{\text{clip}}+\mh$, where
\[\widehat{\msig}_{\text{clip}}\defeq\frac{1}{n}\sum_{i\in[n]}\vs_i\vs_i^\top,\quad \sigma_{\text{AG}}^2\defeq \frac{4R_1^2\log(1.25/\delta)}{n^2\eps^2},\]
and $\mh$ is a symmetric matrix with the upper triangle sampled i.i.d.\ from $\Nor(0,\sigma^2_{\text{AG}})$. 

In Figure~\ref{fig:experiment_private}, we show that Algorithm~\ref{alg:private_gapfree_batched_oja} achieves lower mean cPCA error than $\AG$ under the given parameters. As expected from \eqref{eq:dp_pca_rates}, Algorithm~\ref{alg:private_gapfree_batched_oja} performs better when  $\gamma$ is smaller. This improvement becomes less drastic when $n$ is very large, because rearranging \eqref{eq:dp_pca_rates} shows that
\[\Delta \approx \max\Par{\frac{d}{\gamma^2 n},\Par{\frac{d}{\eps\gamma n}}^2}\]
is dominated by the first term for large $n$. In such regimes, our error decay matches $\AG$.

We also note that, consistently with our theory, this finding appears to require a moderately large dimension to emerge: for example,
when $d = 3000$ and all other parameter settings remain fixed, $\AG$ achieves lower error than our algorithm.

\begin{figure}[t]
    \centering
    \begin{minipage}[b]{0.48\textwidth}
        \includegraphics[width=\textwidth]{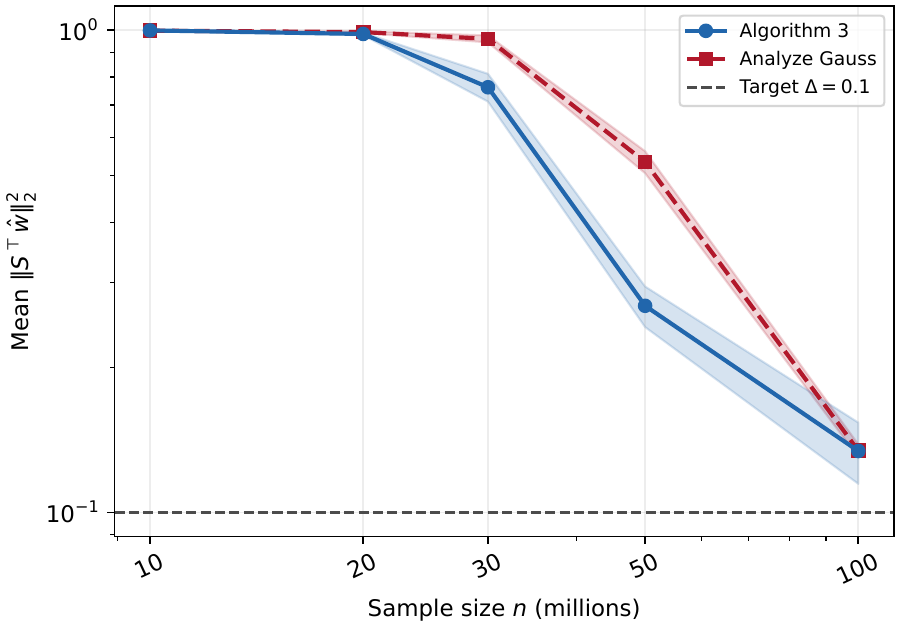}
    \end{minipage}
    \hfill
    \begin{minipage}[b]{0.48\textwidth}
        \includegraphics[width=\textwidth]{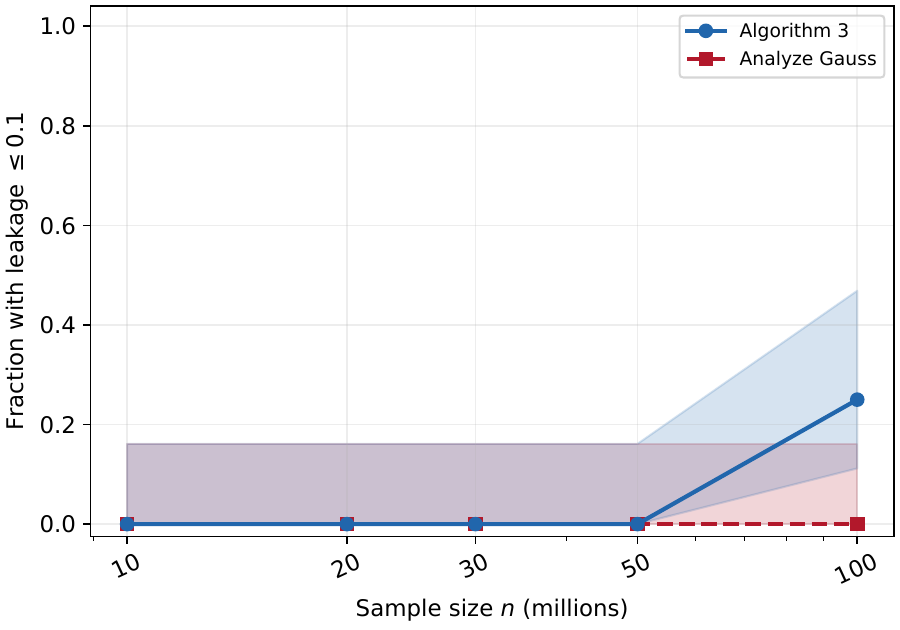}
    \end{minipage}
    \par\medskip

\begin{minipage}[b]{0.48\textwidth}
    \centering
    \includegraphics[width=\linewidth]{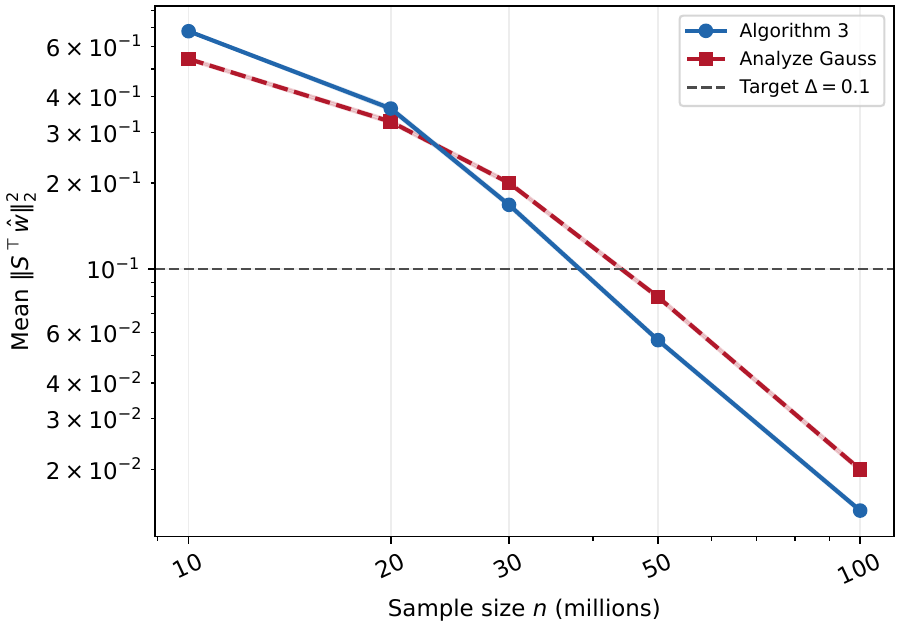}
\end{minipage}
\hfill
\begin{minipage}[b]{0.48\textwidth}
    \centering
    \includegraphics[width=\linewidth]{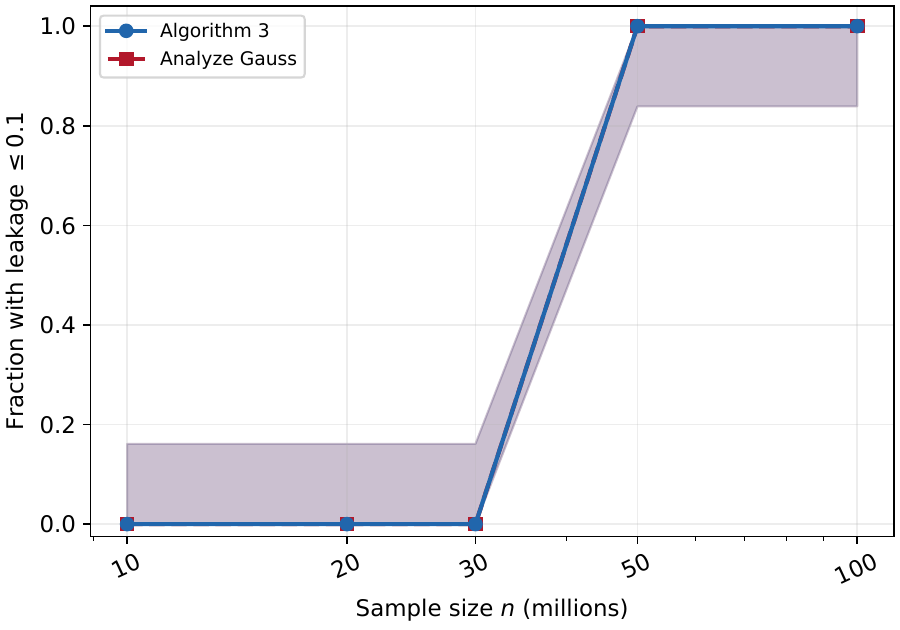}
\end{minipage}
    \caption{The left two figures show the cPCA error with $95\%$ Student-$t$ confidence intervals. The right two figures show the fraction of trials with cPCA error at most $\Delta = 0.1$ with $95\%$ Wilson intervals. The top figures are for $\gamma = 0.05$ and bottom figures are for $\gamma= 0.25$.}
    \label{fig:experiment_private}
    
\end{figure}

\section*{Acknowledgments}
SK and KT thank Ankit Pensia and Gavin Brown for several insightful discussions on this problem. SK and CY gratefully acknowledge support from the Amazon AI PhD Fellowship. We thank the NSF AI Institute for Foundations of Machine Learning (IFML) for supporting this project, and the Texas Advanced Computing Center (TACC) for providing the computing resources used.

\section*{AI Disclosure}
\phantomsection
\addcontentsline{toc}{section}{AI Disclosure}

The authors began the line of inquiry in this paper after discovering the connection between gap-free DP PCA and a gap-free Oja's algorithm in Section~\ref{sec:dp_pca}, and the lack of a gap-free, general rank analysis of Oja's algorithm. We used ChatGPT 5.5 and 5.6 Pro models to explore approaches for Theorem~\ref{thm:gapfree_oja}, primarily to aid with strategies for proving Lemma~\ref{lem:expected_total_energy_vary_step}, but the final proof strategy was developed by the authors. 
After completing all of our cPCA results, we learned about the statement of Conjecture 1.1 in \cite{Brown26} (which asked specifically for private ePCA) in personal communications with Gavin Brown. We then discovered the reduction in Proposition~\ref{prop:multiscale_cpca_epca} in conversations with ChatGPT 5.6 Pro, allowing us to extend our cPCA results to ePCA.
The manuscript was written solely by the authors, who take full responsibility for the organization and presentation of all results.

\newpage
\bibliographystyle{alpha}
\bibliography{refs}

\newpage
\appendix

\section{Deferred Proofs from Section~\ref{sec:epca}}
\label{sec:epca_proofs}

In this section, we prove Lemma~\ref{lem:epca_integrated_profile}, our multiscale cPCA guarantee on Algorithm~\ref{alg:oja}. 

Our strategy is to first bound $\E\normf{\mpp_u\mb_n}^2$ for each threshold $u$, and then give a
normalization argument needed to integrate this bound. Throughout, work under
Model~\ref{model:oja_special} with the products $\mb_s,\mc_s$ defined in
\eqref{eq:bi_def} and $0<\eta_s\le\frac1{\lam_1}$. For convenience, we also define
\[
q_n\defeq V\sum_{s\in[n]}\eta_s^2,
\qquad
Z_s\defeq\normf{\mc_s}^2,
\qquad
N_s(u)\defeq\E\normf{\mpp_u\mb_s}^2,
\]
where $\mpp_u$ is the projector from
Proposition~\ref{prop:multiscale_cpca_epca}.

\begin{lemma}
\label{lem:epca_expected_threshold}
For every $u\in(0,1)$,
\begin{align}
\frac{N_n(u)}{Z_n}
\le \exp(q_n)\Bigg(
&\min\Brace{
1,
d(1+q_n)
\exp\Par{
-\frac{u\lam_1}{2}\sum_{s\in[n]}\eta_s
}
}
+
V\sum_{i\in[n]}\eta_i^2
\exp\Par{
-\frac{u\lam_1}{2}\sum_{s=i+1}^n\eta_s
}
\Bigg).
\label{eq:epca_capped_expected_profile}
\end{align}
\end{lemma}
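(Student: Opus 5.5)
We work with the exact second-moment recursion for $\mb_s$ from the proof of Lemma~\ref{lem:expected_total_energy_vary_step}, this time tracking the mass on the bad subspace. Write $\mathbf{M}_s \defeq \E[\mb_s\mb_s^\top]$ and $\me_s \defeq \ma_s-\msig$. By \eqref{eq:two_terms_vary},
\[
\mathbf{M}_s=(\id_d+\eta_s\msig)\mathbf{M}_{s-1}(\id_d+\eta_s\msig)+\eta_s^2\E\Brack{\me_s\mathbf{M}_{s-1}\me_s^\top}.
\]
Since $\mpp_u$ commutes with $\msig$, and $\msig\preceq(1-u)\lam_1\id_d$ on the range of $\mpp_u$, taking the trace against $\mpp_u$ gives
\[
N_s(u)\le\rho_s(u)^2N_{s-1}(u)+\eta_s^2\Tr\Par{\mathbf{M}_{s-1}\E[\me_s^\top\mpp_u\me_s]}\le\rho_s(u)^2N_{s-1}(u)+\eta_s^2V\Tr(\mathbf{M}_{s-1}),
\]
where $\rho_s(u)\defeq1+\eta_s(1-u)\lam_1$, and the last step uses $\E[\me_s^\top\mpp_u\me_s]\preceq V\id_d$. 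Lemma~\ref{lem:expected_total_energy_vary_step} bounds $\Tr(\mathbf{M}_{s-1})\le e^{q_n}Z_{s-1}$. Unrolling the recursion from $N_0(u)=\Tr(\mpp_u)\le d$ then yields
\[
N_n(u)\le d\prod_{s\in[n]}\rho_s(u)^2+e^{q_n}V\sum_{i\in[n]}\eta_i^2Z_{i-1}\prod_{s=i+1}^n\rho_s(u)^2.
\]

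Next we divide by $Z_n$. Since $\mc$ is diagonal in $\msig$'s eigenbasis, $Z_n\ge\prod_{s\in[n]}(1+\eta_s\lam_1)^2$, and also $Z_n\ge Z_{i-1}\prod_{s=i}^n(1+\eta_s\lam_1)^2$, because each eigen-direction's contribution is multiplied by a factor $\ge1$ that is maximized at $\lam_1$. This monotonicity argument is needed because $Z$ is a sum over directions; it works termwise because $(1+\eta\lam_j)^2\le(1+\eta\lam_1)^2$. Using $\eta_s\lam_1\le1$ we get
\[
\frac{\rho_s(u)}{1+\eta_s\lam_1}=1-\frac{u\eta_s\lam_1}{1+\eta_s\lam_1}\le\exp\Par{-\frac{u\eta_s\lam_1}{2}}.
\]
After squaring, the exponent becomes $-u\eta_s\lam_1$, which is even stronger than the factor $\frac u2$ stated in the lemma. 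So the first term is at most $d\exp(-\frac{u\lam_1}{2}\sum_s\eta_s)$, and the $i$-th summand of the second term is at most $V\eta_i^2\exp(-\frac{u\lam_1}{2}\sum_{s=i+1}^n\eta_s)$, where we drop the factor $(1+\eta_i\lam_1)^{-2}\le1$. Pulling out $e^{q_n}$, which we may also place on the first term, gives the two terms of \eqref{eq:epca_capped_expected_profile}.

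The remaining issue is the cap $\min\{1,\cdot\}$ together with the factor $(1+q_n)$. The cap comes from $N_n(u)\le\E\normf{\mb_n}^2\le e^{q_n}Z_n$ (Lemma~\ref{lem:expected_total_energy_vary_step}). But $\min\{a,b\}+c\ge\min\{a+c,b\}$, so applying the cap to the first term alone requires that the second term also be controlled separately. We plan to handle this by cases. If $d\exp(\cdot)\ge1$, then $N_n/Z_n\le e^{q_n}\cdot1\le$ RHS. Otherwise the uncapped bound already applies, and the factor $(1+q_n)\ge1$ only loosens it. So the $(1+q_n)$ factor is just slack; it presumably comes from the authors' own route, which tracks $\mpp_u\mc$ separately. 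I expect this capping and bookkeeping step to be the only delicate point. The main analytic step is the projected recursion, where we must keep the variance term as $V\Tr(\mathbf{M}_{s-1})$ rather than $V N_{s-1}$, since noise can leak from good directions into bad ones.
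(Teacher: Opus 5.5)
Your projected recursion, the unrolled bound for $N_n(u)$, the treatment of the initial term, and the capping case analysis are all fine and match the paper. The gap is in the variance summands. The inequality $Z_n\ge Z_{i-1}\prod_{s=i}^n(1+\eta_s\lam_1)^2$ is backwards. Your own termwise justification, $(1+\eta_s\lam_j)^2\le(1+\eta_s\lam_1)^2$, gives $Z_n\le Z_{i-1}\prod_{s=i}^n(1+\eta_s\lam_1)^2$. Directions with small $\lam_j$ contribute about as much to $Z_{i-1}$ as to $Z_n$, so $Z_{i-1}$ is not smaller than $Z_n$ by the top-direction growth factor.

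The per-summand bound you derive, $Z_{i-1}\prod_{s=i+1}^n\rho_s(u)^2/Z_n\le\exp(-\frac{u\lam_1}{2}\sum_{s=i+1}^n\eta_s)$, is in fact false in general. Take $i=1$ and $\msig=\mathrm{diag}(\lam_1,0,\ldots,0)$. Then $Z_0=d$ and $Z_n=\prod_{s\in[n]}(1+\eta_s\lam_1)^2+d-1$. Once $d-1\ge\prod_{s\in[n]}(1+\eta_s\lam_1)^2$, the ratio is at least $\frac12$ because every $\rho_s(u)\ge1$. Meanwhile the right side is tiny whenever $\lam_1\sum_s\eta_s$ is large, with $d$ chosen larger still.

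The missing idea is to split $Z_{i-1}=\sum_j\prod_{s<i}(1+\eta_s\lam_j)^2$ at the threshold $(1-\frac u2)\lam_1$.
\begin{itemize}
\item For $\lam_j\ge(1-\frac u2)\lam_1$, compare the $j$-th term with the same eigenvalue's term in $Z_n$. Since $\lam_j-(1-u)\lam_1\ge\frac{u\lam_1}{2}$, this gives the factor $\exp(-\frac{u\lam_1}{2}\sum_{s=i+1}^n\eta_s)$.
\item For $\lam_j<(1-\frac u2)\lam_1$, bound every factor by $(1+\eta_s(1-\frac u2)\lam_1)^2$ and compare with the $\lam_1$ term of $Z_n$. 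This gives $\exp(-\frac{u\lam_1}{2}\sum_{s\in[n]}\eta_s)$, at the cost of a factor of up to $d$.
\end{itemize}
Summing the second piece against the weights $V\eta_i^2$ produces $q_n d\exp(-\frac{u\lam_1}{2}\sum_{s\in[n]}\eta_s)$. This merges with the initial term to give the $d(1+q_n)$ coefficient.

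So the $(1+q_n)$ factor is not slack, contrary to your guess. Likewise, the halved exponent $\frac u2$ is exactly what this split forces; it is not a loosening of your exponent $u$. With this repair, your capping argument goes through unchanged.
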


\begin{proof}
Our proof strategy is to bound the initial contribution and the variance introduced
at each update separately. For this, we first expand the second-moment recurrence,
then divide by $Z_n$ and bound the two contributions. Finally, we combine these
bounds with Lemma~\ref{lem:expected_total_energy_vary_step}.

Define $r_s(u)\defeq1+\eta_s(1-u)\lam_1$.
Taking the trace against $\mpp_u$ in the second moment recurrence
\eqref{eq:two_terms_vary} and using the variance bound in
Model~\ref{model:oja_special} then gives
\[
N_s(u)
\le
r_s(u)^2N_{s-1}(u)
+
V\eta_s^2\E\normf{\mb_{s-1}}^2.
\]
Iterating from $N_0(u)\le d$ and using
Lemma~\ref{lem:expected_total_energy_vary_step} with $t\gets i-1$ for $i>1$ (and $\mb_0=\mc_0=\id_d$ for $i=1$), we obtain
\begin{equation}
N_n(u)
\le
\underbrace{d\prod_{s\in[n]}r_s(u)^2}_{\text{initial contribution}}
+
\underbrace{V\exp(q_n)\sum_{i\in[n]}\eta_i^2 Z_{i-1}
\prod_{s=i+1}^n r_s(u)^2}_{\text{variance term}}.
\label{eq:epca_profile_recurrence}
\end{equation}
The initial contribution comes from $N_0(u)\le d$, multiplied by the recurrence
factors over all $n$ updates. The variance term sums the contribution introduced
at each update $i$, multiplied by the factors from the subsequent updates.
We first bound the initial contribution after dividing by $Z_n$, by comparing
with the product for $\lam_1$. The assumption $\eta_s\lam_1\le1$ gives
\begin{equation}
\Par{\frac{1+\eta_s\mu}{1+\eta_s\mu'}}^2
\le
\exp({-\eta_s(\mu'-\mu)})
\qquad
\text{for }0\le\mu\le\mu'\le\lam_1.
\label{eq:epca_scalar_contraction}
\end{equation}
Since $Z_n\ge\prod_{s\in[n]}(1+\eta_s\lam_1)^2$, the first term in \eqref{eq:epca_profile_recurrence}, divided by $Z_n$, is at
most
\[
d\exp\Par{
-u\lam_1\sum_{s\in[n]}\eta_s
}.
\]

It remains to bound the variance terms in \eqref{eq:epca_profile_recurrence}
after division by $Z_n$. For the $i$th summand, expand
\[
Z_{i-1}
=
\sum_{j\in[d]}
\prod_{s<i}(1+\eta_s\lam_j)^2
\]
and separate the eigenvalues at $(1-\frac u2)\lam_1$. Above this threshold,
we compare the remaining factors with those for the same eigenvalue in $Z_n$;
below it, we compare all factors with those for $\lam_1$. In both cases the
compared eigenvalues differ by at least $\frac{u\lam_1} 2$. For
$\lam_j\ge(1-\frac u2)\lam_1$, 
\eqref{eq:epca_scalar_contraction} gives
\[
\prod_{s<i}(1+\eta_s\lam_j)^2
\prod_{s=i+1}^n r_s(u)^2
\le
\exp\Par{
-\frac{u\lam_1}{2}\sum_{s=i+1}^n\eta_s
}
\prod_{s\in[n]}(1+\eta_s\lam_j)^2.
\]
Summing over $j$ with $\lam_j\ge(1-\frac u2)\lam_1$ bounds their total
contribution by $Z_n\exp(-\frac{u\lam_1}{2}\sum_{s=i+1}^n\eta_s)$.

For $\lam_j<(1-\frac u2)\lam_1$, the scalar ratio bound in
\eqref{eq:epca_scalar_contraction} and $\prod_{s\in[n]}(1+\eta_s\lam_1)^2\le Z_n$ give
\[
\begin{aligned}
\prod_{s<i}(1+\eta_s\lam_j)^2\prod_{s=i+1}^n r_s(u)^2
&\le \prod_{s\in[n]}\Par{1+\eta_s\Par{1-\frac u2}\lam_1}^2\le Z_n\exp\Par{-\frac{u\lam_1}{2}\sum_{s\in[n]}\eta_s}.
\end{aligned}
\]
There are at most $d$ such $j$, so the two ranges together give
\[
\frac{
Z_{i-1}\prod_{s=i+1}^n r_s(u)^2
}{
Z_n
}
\le
\exp\Par{
-\frac{u\lam_1}{2}\sum_{s=i+1}^n\eta_s
}
+
d\exp\Par{
-\frac{u\lam_1}{2}\sum_{s\in[n]}\eta_s
}.
\]

Substituting the bounds for both terms into \eqref{eq:epca_profile_recurrence}
and using $V\sum_{i\in[n]}\eta_i^2=q_n$ gives the claimed bound without
the minimum with $1$. To obtain that minimum, apply Lemma~\ref{lem:expected_total_energy_vary_step} with $t\gets n$ and use $\normf{\mpp_u\mb_n}\le\normf{\mb_n}$ to get $\frac{N_n(u)}{Z_n}\le\exp(q_n)$.
Combining this bound on $\frac{N_n(u)}{Z_n}$ with the bound obtained from \eqref{eq:epca_profile_recurrence}, using
$\min\{1,x+y\}\le\min\{1,x\}+y$ for $x,y\ge0$ proves the claim.
\end{proof}

\begin{lemma}
\label{lem:epca_profile_normalization}
There are universal constants $c,C>0$ such that, if $q_n\le c$, then for any
fixed $\md\succeq\0_{d\times d}$ and independent
$\vg\sim\Nor(\0_d,\id_d)$, with probability at least $\frac34$,
$\mb_n\vg\ne\0_d$ and
\[
\frac{
\vg^\top\mb_n^\top\md\mb_n\vg
}{
\norm{\mb_n\vg}_2^2
}
\le
C
\frac{
\E\Tr(\md\mb_n\mb_n^\top)
}{
Z_n
}.
\]
\end{lemma}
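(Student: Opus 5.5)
We bound the numerator and the denominator separately, each with failure probability at most $\frac18$, and then combine them with a union bound.

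\textbf{Numerator.} We take expectations jointly over $\vg$ and the stream. Since $\vg$ is independent of $\mb_n$,
\[
\E\Brack{\vg^\top\mb_n^\top\md\mb_n\vg}=\E\Tr\Par{\mb_n^\top\md\mb_n}=\E\Tr\Par{\md\mb_n\mb_n^\top}.
\]
Markov's inequality then gives, with probability at least $\frac78$,
\[
\vg^\top\mb_n^\top\md\mb_n\vg\le 8\,\E\Tr\Par{\md\mb_n\mb_n^\top}.
\]
No concentration is needed here, because we only want a constant success probability.

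\textbf{Denominator.} We need $\norm{\mb_n\vg}_2^2\gtrsim Z_n$ with constant probability. We compare $\mb_n\vg$ to $\mc_n\vg$ as in the proof of Lemma~\ref{lem:gapfree_bad_trace_vary_step}.

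First, Gaussian anti-concentration (the same fact used in Lemma~\ref{lem:gapfree_one_step_power}, with $\mh\gets\mc_n^\top\mc_n$) gives $\norm{\mc_n\vg}_2^2\ge c_0Z_n$ with probability at least $\frac1{16}$ failure excluded, for some universal $c_0>0$.

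Second, by \eqref{eq:bc_compare} and independence of $\vg$,
\[
\E\norm{(\mb_n-\mc_n)\vg}_2^2=\E\normf{\mb_n-\mc_n}^2\le(\exp(q_n)-1)Z_n\le 2q_nZ_n.
\]
Markov's inequality then gives $\norm{(\mb_n-\mc_n)\vg}_2^2\le 32q_nZ_n$ with probability at least $\frac{15}{16}$.

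Choosing $c$ small enough that $\sqrt{32c}\le\frac12\sqrt{c_0}$, the triangle inequality gives $\norm{\mb_n\vg}_2\ge\frac12\sqrt{c_0Z_n}$ on both events. In particular $\mb_n\vg\neq\0_d$.

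\textbf{Combining.} A union bound over the three events leaves failure probability at most $\frac18+\frac1{16}+\frac1{16}=\frac14$. On the complement, the ratio is at most $\frac{32}{c_0}\cdot\frac{\E\Tr(\md\mb_n\mb_n^\top)}{Z_n}$, which proves the claim.

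\textbf{Main obstacle.} The delicate point is the denominator. The anti-concentration bound on $\norm{\mc_n\vg}_2^2$ only holds with constant probability at a constant multiplicative level. The constant $c_0$ depends on how much probability we allot to that event, so the constants must be fixed in the right order: first the failure budget, then $c_0$, then $c$.
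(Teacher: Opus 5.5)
Your proof is correct, but it takes a somewhat different route from the paper's.

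\textbf{The paper's route.} The paper first works at the matrix level, with Markov over the stream only. This gives $\normf{\mb_n-\mc_n}^2\le\frac{Z_n}{4}$, hence $\normf{\mb_n}^2\ge\frac{Z_n}{4}$, and also $\Tr(\md\mb_n\mb_n^\top)\le 12\,\E\Tr(\md\mb_n\mb_n^\top)$. It then conditions on $\mb_n$ and reuses the fixed-matrix Gaussian ratio bound from the proof of Lemma~\ref{lem:gapfree_one_step_power}. That bound combines anti-concentration for $\vg^\top\mb_n^\top\mb_n\vg$ with a $\chi^2$ upper tail for $\vg^\top\mb_n^\top\md\mb_n\vg$.

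\textbf{Your route.} You never condition on $\mb_n$. The numerator follows from a single Markov bound taken jointly over $\vg$ and the stream, so no $\chi^2$ tail is needed. The denominator is handled at the level of the sketched vector: anti-concentration is applied only to the deterministic $\mc_n\vg$, and Markov to $(\mb_n-\mc_n)\vg$. This mirrors the proof of Lemma~\ref{lem:product_sketch_concentration}.

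\textbf{What each buys.} The price of your route is the one you flagged. $\norm{\mc_n\vg}_2^2$ is only guaranteed to be at least $c_0Z_n$ for a small constant $c_0$, so your threshold must satisfy $c\lesssim c_0$. The paper compares before sketching and only needs $\exp(c)-1\le\frac1{48}$, which does not depend on the Gaussian constants. The paper's argument is a modular black-box reduction to the fixed-$\mb$ lemma, while yours is more elementary and self-contained. Both yield universal constants, which is all the lemma requires.
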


\begin{proof}
Our proof strategy is to control the numerator and denominator of the normalized
output using bounds on $\mb_n$. We first lower bound $\normf{\mb_n}^2$ and
upper bound $\Tr(\md\mb_n\mb_n^\top)$. We then condition on $\mb_n$, apply
the Gaussian quadratic-form bounds, and combine the three events.

Choose $c$ so that $\exp(c)-1\le\frac1{48}$. Since $\E\mb_n=\mc_n$, Lemma~\ref{lem:expected_total_energy_vary_step} with $t\gets n$ gives
\[
\E\normf{\mb_n-\mc_n}^2
=\E\normf{\mb_n}^2-Z_n
\le(\exp(q_n)-1)Z_n.
\]
Markov's inequality gives events
\[
\mathcal E_B
\defeq
\Brace{
\normf{\mb_n-\mc_n}^2\le \frac{Z_n}4
},
\qquad
\mathcal E_D
\defeq
\Brace{
\Tr(\md\mb_n\mb_n^\top)
\le
12\E\Tr(\md\mb_n\mb_n^\top)
},
\]
each with failure probability at most $\frac1{12}$. Fix $\mb_n$ in
$\mathcal E_B\cap\mathcal E_D$; then $\normf{\mb_n}^2\ge \frac{Z_n} 4$, and $\vg$ remains an independent standard Gaussian.

We have thus obtained the required bounds on $\mb_n$; it remains to control the ratio of quadratic forms in $\vg$.
The proof of Lemma~\ref{lem:gapfree_one_step_power} applies with $\mh=\mb_n^\top\mb_n$ and $\mk=\mb_n^\top\md\mb_n$.
Its Gaussian quadratic-form bounds require only
$\mh,\mk\succeq\0_{d\times d}$ and $\mh\ne\0_{d\times d}$. With
$\zeta=\frac1{12}$, it gives an event
\[
\mathcal E_g
\defeq
\Brace{
\norm{\mb_n\vg}_2^2>0,
\quad
\frac{
\vg^\top\mb_n^\top\md\mb_n\vg
}{
\norm{\mb_n\vg}_2^2
}
\le
C_0
\frac{
\Tr(\md\mb_n\mb_n^\top)
}{
\normf{\mb_n}^2
}
},
\]
with conditional failure probability at most $\frac1{12}$, for a universal
constant $C_0$. On $\mathcal E_B\cap\mathcal E_D\cap\mathcal E_g$, the
claimed bound holds with $C=48C_0$. The total failure probability is at most
$3\cdot\frac1{12}=\frac14$.
\end{proof}

\begin{proof}[Proof of Lemma~\ref{lem:epca_integrated_profile}]
Our strategy is to bound the integral by an expected trace using
Lemma~\ref{lem:epca_profile_normalization}, then estimate this trace using
Lemma~\ref{lem:epca_expected_threshold}. For this, we first write the integral
as a quadratic form.

Set $\md\defeq\id_d-\frac{\msig}{\lam_1}\succeq\0_{d\times d}$.
Proposition~\ref{prop:multiscale_cpca_epca}, applied to $\msig$, gives
$\md=\int_0^1\mpp_u\,\dd u$.
On the event $\mb_n\vg\ne\0_d$, write
$\vw_n=\mb_n\vg/\norm{\mb_n\vg}_2$ and
$\Delta_u=\norm{\mpp_u\vw_n}_2^2$. Then
\[
\int_0^1\Delta_u\,\dd u
=
\vw_n^\top\md\vw_n
=
\frac{
\vg^\top\mb_n^\top\md\mb_n\vg
}{
\norm{\mb_n\vg}_2^2
}.
\]
Applying Lemma~\ref{lem:epca_profile_normalization} with
$\md\gets\id_d-\frac{\msig}{\lam_1}$ and using linearity of trace and expectation
therefore gives, with probability at least $\frac34$,
\begin{equation}\label{eq:epca_normalized_integral}
\int_0^1\Delta_u\,\dd u
\le C\frac{\E\Tr(\md\mb_n\mb_n^\top)}{Z_n}
=C\int_0^1\frac{N_n(u)}{Z_n}\,\dd u.
\end{equation}

We have reduced the desired bound to the integral of $\frac{N_n(u)}{Z_n}$ in
\eqref{eq:epca_normalized_integral}. It remains to apply
Lemma~\ref{lem:epca_expected_threshold} and integrate its two terms.
For $A\ge1$ and $x>0$, direct integration gives
\begin{equation}\label{eq:epca_elementary_integrals}
\int_0^1\min\Brace{1,A\exp\Par{-\frac{ux}2}}\,\dd u
\le\frac{2(1+\log A)}{x},
\qquad
\int_0^1\exp\Par{-\frac{ux}2}\,\dd u
\le2\min\Brace{1,\frac1x}.
\end{equation}
To conclude, we apply Lemma~\ref{lem:epca_expected_threshold} for each $u\in(0,1)$.
For its first term, use the first integration bound in
\eqref{eq:epca_elementary_integrals} with
$A=d(1+q_n)$ and $x=\lam_1\sum_{s\in[n]}\eta_s$.
For its $i^{\text{th}}$ variance summand with $i<n$, use the second integration bound
with $x=\lam_1\sum_{s=i+1}^n\eta_s$; for $i=n$, the exponential is
identically $1$. Taking $c\le1$, we have $\exp(q_n)=O(1)$ and
$1+\log(d(1+q_n))=O(\log(ed))$, so
\[
\int_0^1\frac{N_n(u)}{Z_n}\,\dd u
\le C'\Par{
\frac{\log(ed)}{\lam_1\sum_{s\in[n]}\eta_s}
+V\sum_{i\in[n]}\eta_i^2
\min\Brace{1,\frac1{\lam_1\sum_{s=i+1}^n\eta_s}}
},
\]
where $C'>0$ is universal and the minimum for $i=n$ is interpreted as $1$.
Substituting this bound on $\int_0^1\frac{N_n(u)}{Z_n}\,\dd u$ into
\eqref{eq:epca_normalized_integral} proves \eqref{eq:epca_integrated_profile}.
\end{proof}

\end{document}